\documentclass{article}

\usepackage[preprint]{neurips_2026}

\usepackage[utf8]{inputenc} 
\usepackage[T1]{fontenc}    
\usepackage{hyperref}       
\usepackage{url}            
\usepackage{booktabs}       
\usepackage{multirow}
\usepackage{amsfonts}       
\usepackage{nicefrac}       
\usepackage{microtype}      
\usepackage{xcolor}         
\usepackage{subcaption}

\usepackage{times} 
\usepackage{mymacro}
\usepackage{mycommand}
\usepackage{enumitem} 

\allowdisplaybreaks[3]

\usepackage{natbib}
\setcitestyle{authoryear,open={(},close={)}}
\title{Escaping Iterative Parameter-Space Noise: 
Differentially Private Learning with a Hypernetwork}

\author{%
  Naoki Nishikawa\\
  The University of Tokyo\thanks{This work was done while the author was at LY Corporation.}\\
  \texttt{nishikawa-naoki259@g.ecc.u-tokyo.ac.jp} \\
  \AND
  Shokichi Takakura\\
  LY Corporation\\
  \texttt{stakakur@lycorp.co.jp} \\
  \And
  Satoshi Hasegawa \\
  LY Corporation\\
  \texttt{satoshi.hasegawa@lycorp.co.jp} \\
}

\begin{document}

\maketitle

\begin{abstract}
    Differentially private (DP) training of neural networks is often hindered by the large amount of noise required by gradient-based methods such as DP-SGD, which repeatedly inject high-dimensional noise in parameter space throughout training. 
    In this paper, we propose a new framework for DP learning that avoids iterative optimization in parameter space. 
    Instead of updating the target model using privatized gradients, 
    we employ a \emph{hypernetwork trained on public datasets}
    to map a private dataset to the parameters of the target model. 
    Specifically, each example is embedded into a low-dimensional representation, the embeddings are aggregated and perturbed to obtain a DP dataset embedding, and the hypernetwork generates the target model parameters from this noisy embedding.
    Because privacy noise is injected only once into a low-dimensional dataset representation, our approach can significantly reduce the adverse effect of noise.
    We theoretically show in a synthetic setting that, under a fixed privacy budget, models produced by our approach achieve higher utility than those trained with DP-SGD. 
    Moreover, we apply our approach to LoRA fine-tuning of diffusion models and show that it achieves lower FID than LoRA models trained with DP-SGD and other public-data-guided methods.
\end{abstract}

\section{Introduction}

Deep neural networks are often trained on data that include private information.
Since training data can sometimes be reconstructed from the parameters of trained models~\citep{carlini2023extracting,wu2022membership}, it is necessary to prevent the privacy of training data from being leaked through the parameters of neural networks.
One approach is to ensure that the training algorithm for neural networks satisfies \emph{differential privacy (DP)}, a mathematical framework for quantifying privacy leakage from a dataset.
The most widely used approach is DP-SGD~\citep{abadi2016deep}, which protects the privacy of each training example by adding noise at each step of SGD.
Several other gradient-based methods have also been proposed, such as DP-Adam~\citep{zhou2020private} and DP-AdamW~\citep{sun2025dp}.

Training neural networks with these gradient-based DP learning methods can incur a substantial utility loss compared with non-private optimization methods~\citep{tramer2021differentially}. 
A major source of this degradation is the amount of noise required for privacy. 
In gradient-based methods, DP noise is repeatedly injected at every optimization step. 
Moreover, DP noise must be added in the gradient space, whose dimension is equal to the number of model parameters.
Even if the gradient norm is fixed, the scale of the required DP noise grows at least on the order of the square root of the dimension and the number of steps. 
This issue becomes particularly severe in small-data regimes, such as when finetuning a large model on a limited amount of private data.
These observations naturally lead to the following question:
\begin{center}
    \emph{Can we obtain high-utility models under differential privacy without repeatedly} \\
    \emph{injecting parameter-dimensional noise?}
\end{center}

As a positive answer to this question, we propose a new framework that directly generates differentially private parameters using a hypernetwork.
Instead of updating parameters using gradients computed from a private dataset, we feed the private dataset into a hypernetwork and obtain parameters adapted to the dataset through a single forward pass.
To this end, we propose \emph{DP-DeepSets}, a hypernetwork architecture for DP learning that takes a dataset as input and outputs parameters with DP guarantees.
This architecture first embeds each data point into a low-dimensional representation, then applies mean pooling and adds DP noise to obtain a differentially private dataset embedding.
The parameters are generated from this embedding, and the resulting network inherits the DP guarantee.

Our method has three key advantages by design.
First, \emph{DP perturbation occurs only once during parameter generation}.
Unlike gradient-based methods, which inject noise at every step, our method adds DP noise only once, to the dataset embedding.
Second, \emph{DP noise is injected into a low-dimensional dataset embedding}.
In contrast to gradient-based methods, which add noise to high-dimensional gradients in parameter space, our approach requires a much smaller noise scale.
Consequently, the performance of the learned model is less susceptible to degradation caused by DP noise.
Third, \emph{the hypernetwork itself is trained on public data}.
Since the hypernetwork is public, the privacy budget is consumed only when constructing the low-dimensional dataset embedding during parameter generation from private data.
Moreover, training the hypernetwork incurs no privacy cost, allowing it to learn the relationship between datasets and parameters from a large amount of public data, and thereby acquire a form of ``learning to learn.''

Our contributions are summarized as follows:
\begin{enumerate}[leftmargin=5mm]
    \item We propose a novel framework for DP learning of neural networks using public hypernetworks.
    Specifically, we develop \emph{DP-DeepSets}, which takes a dataset as input and returns parameters with a DP guarantee.
    \item We theoretically compare DP learning using DP-DeepSets with DP-SGD.
    We consider a synthetic setting involving linear regression, and show that DP-DeepSets achieves high utility since it learns to capture the low-dimensional structure of tasks during training on public data.
    \item We apply our framework to LoRA finetuning of diffusion models.
    We demonstrate that the model trained by our method outperforms DP-SGD and other public-data-guided DP learning methods.
\end{enumerate}

\begin{figure}[t]
  \centering
  \hspace{3mm}
  \begin{minipage}{0.4\textwidth}
    \centering
    \includegraphics[width=0.8\linewidth]{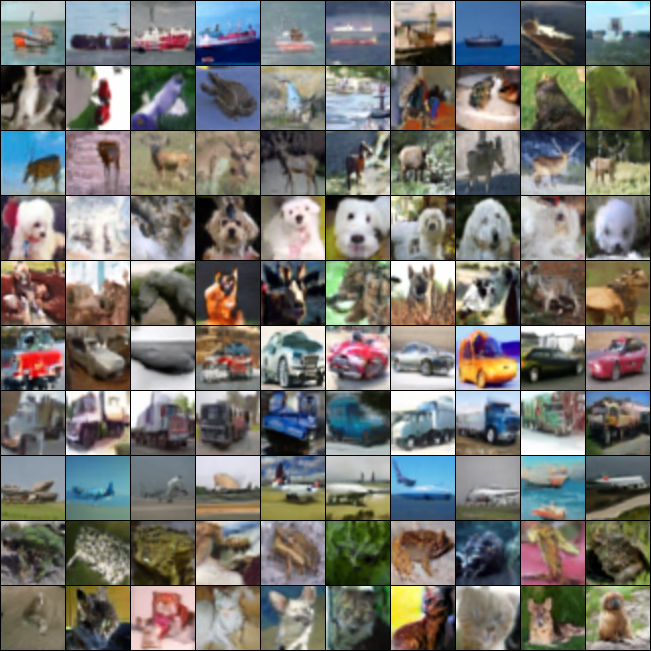}
    \captionof{figure}{Generated samples on CIFAR-10 with 
        $(\epsilon, \delta)=(2, 10^{-5})$.
        Each row corresponds to one $k$-NN set.
    }
    \label{fig:generated-images}
  \end{minipage}
  \hfill
  \begin{minipage}{0.5\textwidth}
    \centering
    \includegraphics[width=0.95\linewidth]{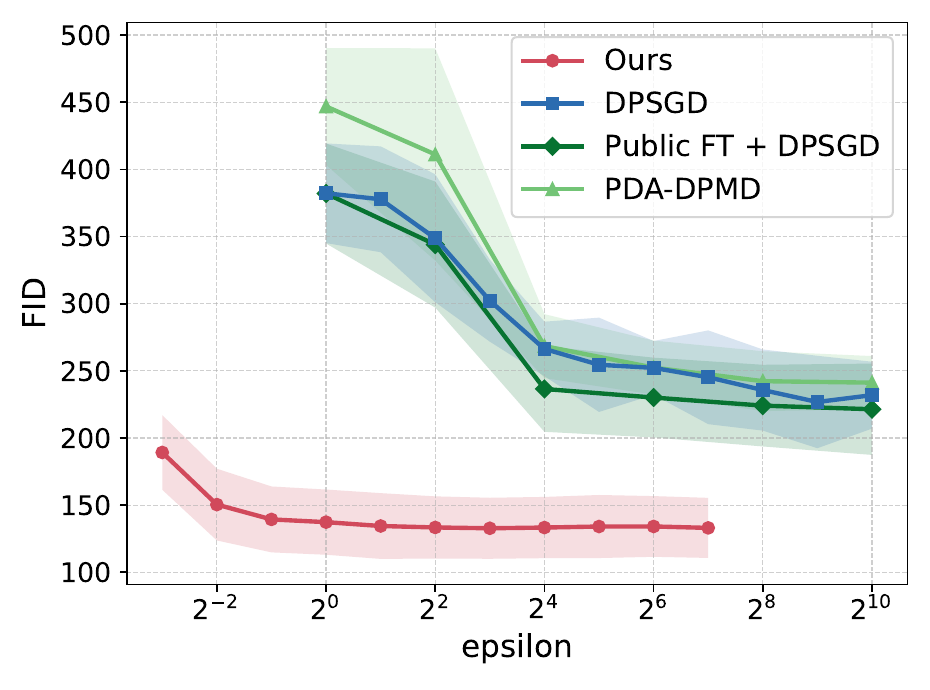}
    \vspace{-2mm}
    \captionof{figure}{Change in FID with respect to the DP parameter $\epsilon$ for each method.}
    \label{fig:fid-for-epsilons}
  \end{minipage}
  \hspace{3mm}
\end{figure}

We present results for LoRA finetuning of diffusion models in Figures \ref{fig:generated-images} and \ref{fig:fid-for-epsilons}.
The images in \cref{fig:generated-images} are generated by applying LoRA parameters obtained from only 128 private data points via the trained hypernetwork.
The generated images capture the characteristics of each CIFAR-10 class.
Furthermore, \cref{fig:fid-for-epsilons} shows that our method outperforms DP-SGD and other public-data-guided methods.
See \cref{sec:experiments} for detailed settings and additional observations.

\paragraph{Related work}
The concept of hypernetworks was introduced by \citet{ha2017hypernetworks}, who proposed using one RNN to determine the parameters of another. 
This idea was later extended to neural network parameter generation by \citet{peebles2022learning} and \citet{wang2024neural}. 
To enable task-conditioned parameter generation, several methods have incorporated task information into prompts for model-parameter generation \citep{jin2024conditional,wang2025recurrent,khan2025oral}. 
Unlike these prompt-based approaches, we explore a dataset-conditioned approach that aims to replace gradient-based DP learning with a hypernetwork. 
Recent studies have also proposed dataset-conditioned parameter generation \citep{ruiz2024hyperdreambooth,soro2025diffusion,liang2025drag}; 
building on this line of work, our method introduces an architecture tailored to parameter generation with a DP guarantee.

Several prior studies have leveraged public data to improve model performance while preserving privacy.
In particular, AdaDPS~\citep{li2022private} and PDA-DPMD~\citep{amid2022public} incorporate public-data information into gradient-based methods such as Adam and mirror descent.
They still rely on gradient-based optimization and thus cannot avoid adding noise in a space whose dimension equals the number of model parameters.
Another line of work leveraging public data includes PATE~\citep{papernot2016semi} and its variants~\citep{papernot2018scalable}.
They use private data to train models that label large amounts of unlabeled public data.
This line of work differs from our framework in that it focuses on classification tasks and explores the use of unlabeled public data.

Among public-data-guided methods, the closest to ours is \citet{raisa2024noise}, which generates parameters by feeding a private dataset into a network called SetConv trained on simulated data.
Although the idea is related, 
their experiments are limited to regression with one-dimensional inputs, whereas we address high-dimensional tasks.
This difference motivates a different hypernetwork design:
SetConv applies a fixed kernel function to each data point and aggregates the resulting representations, but such fixed-kernel data encoding does not work well for high-dimensional data.
Our method instead uses a DeepSets-based architecture with a learnable feature map to embed data points, 
allowing the hypernetwork to learn how to efficiently encode high-dimensional data.

Several studies, similar to ours, focus on the high dimensionality of gradients and seek to learn in a low-dimensional space.
For example, \citet{yu2021not} and \citet{zhou2021bypassing} project gradients onto a subspace using public-data information, while \citet{zheng2026differentially} use training-trajectory information for subspace projection.
In addition, \citet{zhu2023improving} sparsify gradients using random masks.
While these approaches reduce noise by lowering gradient dimensionality, they still rely on gradient-based optimization and thus cannot avoid repeatedly adding noise during training.

\section{Background}

\subsection{Differential Privacy}

Differential Privacy~(DP) \citep{dwork2006calibrating} is a mathematical framework for quantifying and limiting the information about any individual data point that can be inferred from the output of an algorithm. 
Let $\scD^\ast$ denote the space of datasets.
Two datasets $D, D' \in \scD^\ast$ are said to be adjacent 
if they differ in a single entry.
We say that a randomized algorithm $\scM: \scD^\ast \to \scY$ satisfies $(\epsilon, \delta)$-DP if, for any pair of adjacent datasets $D, D' \in \scD^\ast$ and any measurable set $S \subseteq \scY$, it holds that
\begin{align*}
    \bP(\scM(D) \in S) \leq \rme^{\epsilon} \cdot \bP(\scM(D') \in S) + \delta.
\end{align*}
In other words, $(\epsilon, \delta)$-DP requires that the output distribution does not change substantially when a single data point is replaced.

For a query that takes a dataset as input and outputs a real-valued quantity, 
a common way to guarantee differential privacy is to add Gaussian noise to the output.
This is called the Gaussian mechanism, and it provides the following differential privacy guarantee.
\begin{proposition}[Gaussian mechanism, \citet{dwork2014algorithmic}]\label{prop:DP-gaussian-mechanism}
    For a given query $q: \scD^\ast \to \bR^d$,
    we define the $s^2$-\emph{Gaussian mechanism} $\scM_q: \scD^\ast \to \bR^d$ as
    $
        \scM_q(D) := z + q(D)
    $
    where $z \sim \scN(0, s^2 I_d)$.
    Let $\Delta := \max_{D\sim D'}\|q(D) - q(D')\|_2$.
    Then, for any $\epsilon, \delta \in (0, 1)$, 
    the $s^2$-Gaussian mechanism with
    $s^2 = 2\log(1.25\delta^{-1})(\Delta/\epsilon)^2$ 
    satisfies $(\epsilon, \delta)$-differential privacy.
\end{proposition}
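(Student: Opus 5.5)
The plan is the standard privacy-loss-random-variable argument of \citet{dwork2014algorithmic}. Fix adjacent $D, D'$ and write $v := q(D) - q(D')$, so that $\|v\|_2 \le \Delta$. The output densities of $\scM_q(D)$ and $\scM_q(D')$ at a point $y$ are Gaussians centered at $q(D)$ and $q(D')$. By rotation invariance of $\scN(0, s^2 I_d)$, I will rotate coordinates so that $v$ lies along the first axis. The problem then reduces to one dimension: the log-ratio of densities depends only on the projection of the noise $z$ onto $v/\|v\|_2$. Concretely,
\begin{align*}
    L(y) := \log\frac{p_D(y)}{p_{D'}(y)} = \frac{2\langle z, v\rangle + \|v\|_2^2}{2s^2}, \qquad y = q(D) + z .
\end{align*}
Here $\langle z, v\rangle \sim \scN(0, s^2\|v\|_2^2)$.

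Next, define the bad set $B := \{y : L(y) > \epsilon\}$. For any measurable $S$, split $S = (S\setminus B)\cup(S\cap B)$. On $S\setminus B$ we have $p_D \le \rme^{\epsilon} p_{D'}$ pointwise, so integrating gives $\bP(\scM_q(D)\in S\setminus B)\le \rme^{\epsilon}\bP(\scM_q(D')\in S)$. The remaining piece is bounded by $\bP(\scM_q(D)\in B)$. It therefore suffices to show $\bP(L>\epsilon)\le\delta$ under $z\sim\scN(0,s^2I_d)$.

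With $Z$ standard normal, the event $L > \epsilon$ is $Z > \frac{s\epsilon}{\|v\|_2} - \frac{\|v\|_2}{2s}$. The right-hand side is decreasing in $\|v\|_2$, so the worst case is $\|v\|_2 = \Delta$. Writing $c^2 := 2\log(1.25/\delta)$, we have $s = c\Delta/\epsilon$, and the threshold becomes $t := c - \epsilon/(2c)$. I will then use the Gaussian tail bound $\bP(Z>t)\le \frac{1}{t\sqrt{2\pi}}\rme^{-t^2/2}$, which gives
\begin{align*}
    \bP(Z>t) \le \frac{1}{t\sqrt{2\pi}}\, \rme^{-c^2/2}\,\rme^{\epsilon/2}\,\rme^{-\epsilon^2/(8c^2)} .
\end{align*}

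The main obstacle is this final numeric step: verifying that the bound is at most $\delta$ for all $\epsilon,\delta\in(0,1)$. Since $\rme^{-c^2/2} = \delta/1.25$, it suffices to show $\rme^{\epsilon/2} \le 1.25\, t\sqrt{2\pi}$. For this, $\delta<1$ gives $c^2 > 2\log 1.25$. However, $c$ is only about $0.67$ near $\delta = 1$, so the check is delicate in that regime, and I would handle it as in Dwork--Roth's Appendix~A. When $c \ge 3/2$ (roughly $\delta \le 0.4$), we get $t \ge 3/2 - 1/3$, and the inequality follows comfortably because $\rme^{1/2} \approx 1.65 < 1.25\cdot 1.17\cdot 2.5$. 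For larger $\delta$, I would either replace the Mills-ratio bound by the direct bound $\bP(Z>t)\le \tfrac12\rme^{-t^2/2}$ for $t\ge0$, or fall back on a tighter tail estimate and check the remaining region numerically. That small-$c$ corner is where I expect the most care to be needed.
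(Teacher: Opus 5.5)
The paper does not prove this proposition; it only cites \citet{dwork2014algorithmic}. Your plan is their privacy-loss argument, with one improvement: you use only the one-sided event $\{L>\epsilon\}$, so you need $\bP(Z>t)\le\delta$ rather than $\delta/2$. Your splitting of $S$ along $B$ is correct. Everything through $t=c-\epsilon/(2c)$ is right, and so is your $c\ge 3/2$ case.

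The final step, however, is not yet closed as written, for two reasons.

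\begin{itemize}
\item Deferring to Dwork--Roth's Appendix~A for $c<3/2$ does not work. Their computation assumes $c\ge 3/2$ to make the $\log t$ term nonnegative. That is exactly the regime you already handled, so it says nothing for $\delta\gtrsim 0.41$.
\item You missed that $t$ can be negative: $t<0$ iff $c^2<\epsilon/2$, which happens when $\delta$ and $\epsilon$ are both close to $1$. There, neither the Mills-ratio bound nor $\bP(Z>t)\le\frac{1}{2}\rme^{-t^2/2}$ is valid.
\end{itemize}

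Both points are easy to fix, and your second option removes the need for the case split at $c=3/2$ altogether.

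For $t\ge 0$, note that $c^2-t^2=\epsilon-\epsilon^2/(4c^2)\le\epsilon<1$. Hence
\begin{align*}
    \bP(Z>t)\le\frac{1}{2}\rme^{-t^2/2}\le\frac{1}{2}\rme^{-c^2/2}\rme^{1/2}=\frac{\rme^{1/2}}{2.5}\,\delta<\delta .
\end{align*}

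For $t<0$, we have $c^2<\epsilon/2<1/2$. Hence $\delta=1.25\rme^{-c^2/2}>1.25\rme^{-1/4}>0.97$. Meanwhile, $\delta<1$ forces $c>\sqrt{2\log 1.25}>0.66$. So $t\ge c-1/(2c)>-0.1$, and
\begin{align*}
    \bP(Z>t)\le\frac{1}{2}+\frac{0.1}{\sqrt{2\pi}}<0.55<\delta .
\end{align*}

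With these two cases added, your argument covers all $\epsilon,\delta\in(0,1)$.
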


\subsection{Differentially Private SGD~(DP-SGD)}\label{sec:dp-sgd-background}

For training a machine learning model on a private dataset, 
one of the most widely used algorithms for ensuring DP of a trained model is Differentially Private SGD~\citep{abadi2016deep}.
In this algorithm, Gaussian noise is added when computing the average of the gradients in a minibatch.
More specifically, at each step, DP-SGD randomly selects $L$ data points $p_{n_1}, \ldots, p_{n_L}$ from a private dataset $p_{1:N}$ consisting of $N$ data points, computes the gradients $g_{n_1}(\omega), \ldots, g_{n_L}(\omega)$ with respect to the current parameter $\omega$ for each selected data point, and then updates the parameters as follows:
\begin{align*}
    \omega
    \leftarrow \omega
        - \frac{\eta}{L} \qty(
            \sum_{i=1}^L \clip_C(g_{n_i}(\omega))
            + \xi
        ),
    \quad \xi \sim \scN(0, s^2 I_{\Dparam})
\end{align*}
where $
    \clip_C(x) = \min\qty{1, \frac{C}{\|x\|}} x
$, 
$\eta > 0$ is a learning rate and $\Dparam$ is the dimension of the parameter $\omega$.
DP-SGD satisfies the following differential privacy guarantee.
\begin{proposition}[Theorem 1 of \cite{abadi2016deep}]
    There exist constants $c_1, c_2$ such that, 
    given the number of steps $T$, 
    for any $\epsilon < c_1 \frac{L^2 T}{N^2}$, 
    DP-SGD is $(\epsilon, \delta)$-differentially private for any $\delta>0$,
    if we choose $s^2 \geq c_2 \frac{C^2 L^2 T\log\delta^{-1}}{\epsilon^2 N^2}$.
\end{proposition}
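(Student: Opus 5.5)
This is the moments-accountant argument of Abadi et al.: bound the log-moment of the privacy loss for one subsampled Gaussian step, add the bounds up over $T$ steps, and convert the total to an $(\epsilon,\delta)$ guarantee with a tail bound.

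\emph{Setup.} Let $q:=L/N$ be the sampling ratio. Rescale the update by $C$. Since $\clip_C$ forces every per-example gradient to have norm at most $C$, replacing one data point changes the noisy sum $\sum_i \clip_C(g_{n_i})+\xi$ only through a single summand. Moreover, that summand enters the minibatch only with probability $q$. So one step is a subsampled Gaussian mechanism with sensitivity $C$ and noise multiplier $\sigma:=s/C$.

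\emph{Single-step moment bound.} Write $\alpha(\lambda)$ for the log of the $\lambda$-th moment of the privacy loss. For the worst case we may assume the differing point has gradient exactly $Ce_1$. This reduces the comparison to two one-dimensional densities: $\mu_0=\scN(0,\sigma^2)$ and the mixture $\mu=(1-q)\mu_0+q\,\scN(1,\sigma^2)$. The key lemma is that if $q<1/(16\sigma)$, $\sigma\ge1$ and $\lambda\le\sigma^2\log(1/(q\sigma))$, then
\begin{align*}
\alpha(\lambda)\le \frac{q^2\lambda(\lambda+1)}{(1-q)\sigma^2}+O\!\left(\frac{q^3\lambda^3}{\sigma^3}\right).
\end{align*}
To prove it, expand $\bE_{\mu_0}[(\mu/\mu_0)^{\lambda+1}]$ and $\bE_{\mu}[(\mu/\mu_0)^{\lambda}]$ binomially in powers of $(\mu-\mu_0)/\mu_0$. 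The zeroth term is $1$ and the first-order term vanishes because $\int(\mu-\mu_0)=0$. The second-order term is $q^2\lambda(\lambda+1)\,\bE_{\mu_0}[(\mu_1/\mu_0-1)^2]$, which evaluates exactly to $q^2\lambda(\lambda+1)(e^{1/\sigma^2}-1)\approx q^2\lambda^2/\sigma^2$. The higher-order terms are handled with Gaussian moment computations together with the stated restrictions on $q$ and $\lambda$. This step is the main obstacle: the order-$t\ge3$ terms grow like $e^{t^2/(2\sigma^2)}$, and the condition on $\lambda$ is needed precisely to keep their sum geometrically controlled by the quadratic term.

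\emph{Composition and tail bound.} Moments of the privacy loss compose additively even under adaptive choice of steps, so the total satisfies $\alpha_{\mathrm{tot}}(\lambda)\le T c\,q^2\lambda^2/\sigma^2$. Applying Markov's inequality to the privacy loss variable gives $(\epsilon,\delta)$-DP with
\begin{align*}
\delta=\min_\lambda \exp\bigl(\alpha_{\mathrm{tot}}(\lambda)-\lambda\epsilon\bigr).
\end{align*}
Now choose $\lambda\asymp\epsilon\sigma^2/(Tq^2)$. This makes $\alpha_{\mathrm{tot}}\le\lambda\epsilon/2$, so it suffices that $\lambda\epsilon/2\ge\log(1/\delta)$, i.e. $\sigma^2\ge c_2Tq^2\log(1/\delta)/\epsilon^2$.

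\emph{Checking the constraints.} The chosen $\lambda$ must satisfy the lemma's range condition $\lambda\lesssim\sigma^2\log(1/(q\sigma))$. This holds when $\epsilon\lesssim Tq^2$, which is the hypothesis $\epsilon<c_1L^2T/N^2$. Multiplying the noise multiplier back by $C$ (i.e. $s=C\sigma$) gives the claimed bound $s^2\ge c_2C^2L^2T\log\delta^{-1}/(\epsilon^2N^2)$.
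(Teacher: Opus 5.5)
Your outline (per-step log-moment bound for the subsampled Gaussian, additive composition, Markov tail bound, then $\lambda\asymp\epsilon\sigma^2/(Tq^2)$) is the moments-accountant proof of Theorem~1 in \citet{abadi2016deep}. That is all the paper relies on: it states the proposition as a citation and gives no proof of its own.

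The gap is in your key lemma. The two quantities you propose to expand are the same number:
$\mathbb{E}_{\mu}[(\mu/\mu_0)^{\lambda}]=\int\mu^{\lambda+1}\mu_0^{-\lambda}=\mathbb{E}_{\mu_0}[(\mu/\mu_0)^{\lambda+1}]$.
So you only bound the privacy-loss moment when the output comes from the dataset that contains the differing point. The tail-bound conversion must hold for both ordered pairs of adjacent datasets, so you also need $\mathbb{E}_{\mu_0}[(\mu_0/\mu)^{\lambda}]=\mathbb{E}_{\mu}[(\mu_0/\mu)^{\lambda+1}]$.

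That quantity is expanded in powers of $(\mu_0-\mu)/\mu$, with the mixture density in the denominator. Abadi et al.\ single it out as the harder case. It is also the only place the factor $1/(1-q)$ in your stated lemma can come from. Its second-order term is
$\frac{\lambda(\lambda+1)}{2}q^2\int(\mu_0-\mu_1)^2/\mu\le\frac{\lambda(\lambda+1)}{2}\cdot\frac{q^2}{1-q}\,(e^{1/\sigma^2}-1)$,
by the pointwise bound $\mu\ge(1-q)\mu_0$. The same bound controls its higher-order terms. The direction you actually expanded never produces a $(1-q)$. (Also, the $t=2$ coefficient of $(1+X)^{\lambda+1}$ is $\lambda(\lambda+1)/2$, so your ``exact'' second-order term is off by a factor of $2$; this is harmless for an upper bound.) To close the gap, either carry out this second expansion or invoke the known fact that, for the sampled Gaussian mechanism, this direction is dominated by the one you computed (Mironov, Talwar and Zhang, 2019).

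Two further points are glossed over, as they are in the cited source.

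First, your constraint check only verifies the range of $\lambda$, and that verification tacitly uses $\log(1/(q\sigma))\gtrsim 1$. The lemma also assumes $\sigma\ge1$ and $q<1/(16\sigma)$. Neither follows from $\epsilon<c_1L^2T/N^2$ and $s^2\ge c_2(\cdots)$. For instance, at the threshold value of $s$, $q\sigma<1/16$ requires $\epsilon\gtrsim q^2\sqrt{T\log\delta^{-1}}$. Larger $s$ can be reduced to the threshold by data processing, since extra Gaussian noise is post-processing. Still, these are implicit assumptions you should state.

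Second, comparing $\mu_0$ with $(1-q)\mu_0+q\mu_1$ at sensitivity $C$ is the add/remove-one picture with independent per-example sampling, which is Abadi et al.'s setting. The paper instead uses replace-one adjacency (its proof of Corollary~\ref{cor:gaussian-mechanism-for-mean} uses sensitivity $2C/N$) and batches of exactly $L$ points. Under replacement, neither output distribution is the pure Gaussian. Both are mixtures $(1-q)\mathcal{N}(0,\sigma^2I)+q\,\mathcal{N}(v,\sigma^2I)$ with different shifts $v$ of norm at most $1$. Your phrase ``replacing one data point \dots\ sensitivity $C$'' is therefore inconsistent. Passing through the dataset with that point removed (group privacy, or a weak triangle inequality for R\'enyi divergence) fixes this at the cost of constants.
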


\section{Framework: Differentially Private Learning with a Public Hypernetwork}\label{sec:framework}

In this section, we formally introduce our framework for learning neural networks with a DP guarantee using a hypernetwork.
Consider the setting in which we solve a given task using a neural network $\psi_\omega$ parameterized by $\omega \in \bR^{\Dparam}$.
We refer to this network as the \emph{target network}.
Let $p \in \bR^{\Ddata}$ denote a data point, and let $u \in \bR^\Daux$ denote auxiliary variables used to define the network input and the training loss.
For example, in a regression task, $p$ is the pair consisting of an input $x$ and its label $y$, and $u$ may be omitted.
In generative modeling, such as autoregressive language models or diffusion models, $p$ corresponds to the sample itself, whereas $u$ contains the auxiliary inputs needed to evaluate the loss.
For instance, in an autoregressive language model, $u$ may specify the prediction position or context prefix, while in a diffusion model, $u$ may include the noise and timestep.

Let $\ell : \bR^{\Ddata} \times \bR^{\Daux} \times \bR^{\Dparam} \to \bR$ be a per-data-point loss function that takes a data point, auxiliary variables, and the network parameters as input, and returns the loss associated with that data point.
For example, in a regression task with $p = (x,y)$, one may write
$\ell(p, u; \omega) = \frac{1}{2}(\psi_\omega(x) - y)^2$,
where $u$ is absent or trivial,
whereas in image generation with diffusion models, 
$\ell$ is typically the mean-squared error for noise prediction.

Our goal is to obtain a hypernetwork $\phi$ that takes a dataset $p_{1:N}$ and DP noise $\xi$ as input and generates parameters that satisfy an $(\epsilon, \delta)$-DP guarantee and achieves a small test loss
$\bE_{p,u,\xi}[\ell(p, u; \phi(p_{1:N}, \xi))]$.
To this end, we develop \emph{DP-DeepSets}, a hypernetwork architecture that takes the dataset as input and predicts parameters with a DP guarantee.
Moreover, we establish a training procedure for DP-DeepSets using a public dataset.

\subsection{DP-DeepSets: A Hypernetwork Architecture for Differentially Private Learning}

\begin{figure}[t]
    \centering
    \includegraphics[width=0.98\linewidth]{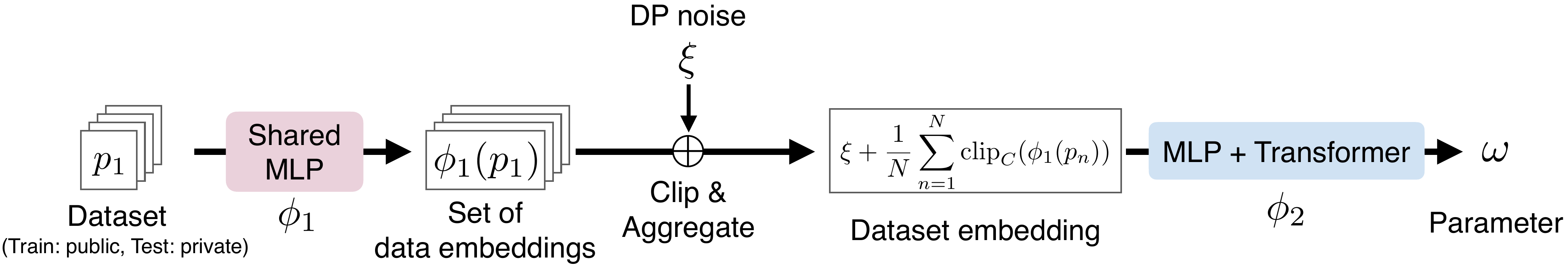}
    \captionof{figure}{
        Overview of DP-DeepSets.
        First, we obtain embeddings of the data points in the input dataset using an MLP $\phi_1$, and clip them in $L^2$-norm.
        Then, we take the element-wise average and perturb it with Gaussian noise.
        As a result, we obtain a dataset embedding with a DP guarantee, 
        and feed it into the latter network $\phi_2$ to generate the network parameter.
    }
    \label{fig:dp-deepsets}
\end{figure}

Here, we describe the mechanism of \emph{DP-DeepSets}.
Its overview is also shown in \cref{fig:dp-deepsets}.

DP-DeepSets takes a dataset, i.e., a set of data points $p_{1:N} := \set{p_n}_{n=1}^N$ as input.
It first obtains a set of data embeddings $\set{\phi_1(p_n)}_{n=1}^N \subset \bR^D$ using an MLP $\phi_1: \bR^{\Ddata}\to\bR^{D}$ shared among the data points.
Next, each embedding is clipped to ensure that its norm is at most $C > 0$.
This procedure is required to limit privacy leakage from each data point.
Then, to aggregate the information in the entire dataset, we take the average of the clipped embeddings.
Finally, we add Gaussian noise $\xi \sim \scN(0, \sdp^2 I_D)$ for some $\sdp$ and guarantee that the resulting embedding is differentially private.
As a result, we obtain the following noisy dataset embedding:
\begin{align*}
    E(p_{1:N}, \xi) 
    := \xi + \frac{1}{N} \sum_{n=1}^N \clip_C(\phi_1(p_n)).
\end{align*}

\begin{figure}[t]
    \centering
    \includegraphics[width=0.85\linewidth]{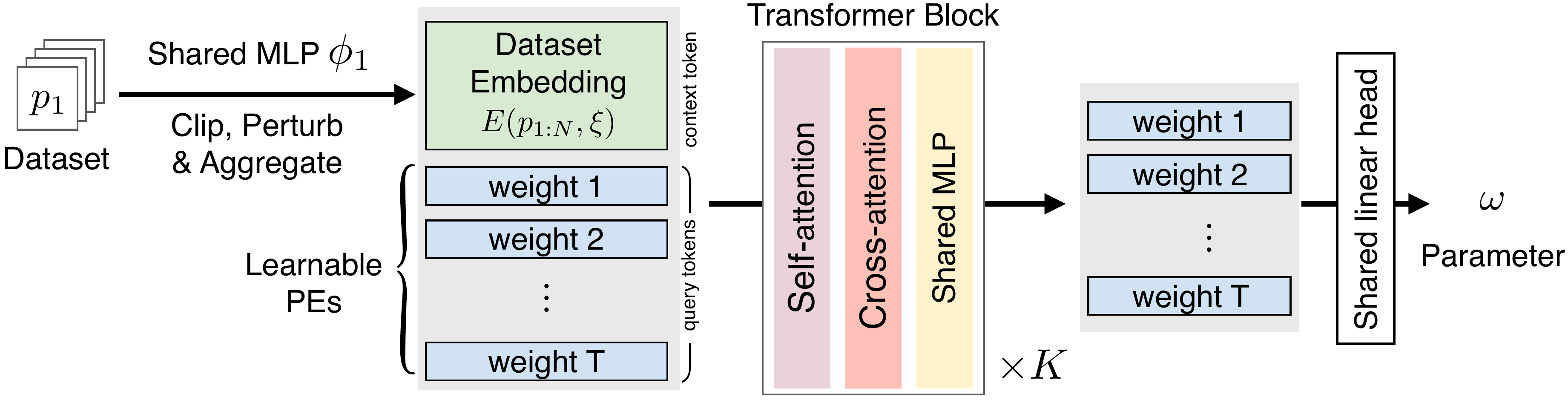}
    \captionof{figure}{
        The architecture of the transformer-based decoder $\phi_2$ in DP-DeepSets.
        The dataset embedding is used as a context token.
        The parameters are divided into $T$ query tokens, and are initialized with learnable positional encodings.
        They are fed into the $K$-layer Transformer consisting of self-attention, cross-attention and shared MLPs.
        The output tokens are fed into the shared linear head, 
        and are reshaped to integrate into the model parameters.
    }
    \label{fig:tf-based-decoder}
\end{figure}

Using a dataset embedding $E(p_{1:N}, \xi)$, we can generate the network parameters as $\omega = \phi_2(E(p_{1:N}, \xi))$ with some neural network $\phi_2: \bR^D \to \bR^{\Dparam}$.
Inspired by \citet{ruiz2024hyperdreambooth}, we use a Transformer-based architecture for $\phi_2$ as in \cref{fig:tf-based-decoder}.
Specifically, the dataset embedding is fed into a transformer as a context token.
The parameters to be generated are divided into $T$ tokens, and they are initialized as learnable positional encodings.
Then, they are iteratively updated through multiple transformer blocks (namely, self-attention among the query tokens, cross-attention between the query tokens and the context token, and token-wise shared MLPs).
This allows the hypernetwork not merely to generate the parameters at each position independently of the dataset embedding, but also to model the interactions among parameters in different layers.

The architecture above can be seen as a \emph{differentially private variant} of DeepSets~\citep{zaheer2017deep}.
DeepSets uses operations such as taking the maximum, summation, and average of the embeddings obtained by a neural network to aggregate the information of sets.
Building upon this idea, we adopt the average operation, and we further incorporate the clipping and Gaussian perturbation to guarantee DP. 
This allows us to apply the theoretical guarantee for the Gaussian mechanism for the mean query, and obtain a rigorous DP guarantee as follows:
\begin{corollary}\label{cor:gaussian-mechanism-for-mean}
    Let $\epsilon, \delta \in(0, 1)$.
    If $\sdp^2 \geq \frac{8C^2 \log(1.25\delta^{-1})}{\epsilon^2 N^2} $, 
    then DP-DeepSets is $(\epsilon, \delta)$-differentially private.
\end{corollary}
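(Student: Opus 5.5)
The plan is to view the dataset embedding $E(p_{1:N},\xi)$ as the $\sdp^2$-Gaussian mechanism applied to the mean query
\[
q(p_{1:N}) := \frac{1}{N}\sum_{n=1}^N \clip_C(\phi_1(p_n)) \in \bR^D,
\]
and then invoke \cref{prop:DP-gaussian-mechanism}. Since $\phi_1$ is fixed, having been trained only on public data, $q$ is a deterministic query. So $E(p_{1:N},\xi) = q(p_{1:N}) + \xi$ with $\xi\sim\scN(0,\sdp^2 I_D)$ is exactly the mechanism $\scM_q$ of the proposition with $s=\sdp$.

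The main step is bounding the $\ell_2$-sensitivity $\Delta$. Take adjacent datasets $D=p_{1:N}$ and $D'=p'_{1:N}$ that differ only in index $m$. All other terms of the average cancel, so
\[
q(D)-q(D') = \frac{1}{N}\bigl(\clip_C(\phi_1(p_m)) - \clip_C(\phi_1(p'_m))\bigr).
\]
The map $\clip_C$ always outputs a vector of norm at most $C$, so the triangle inequality gives $\|q(D)-q(D')\|_2\le 2C/N$. Hence $\Delta\le 2C/N$.

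Next, \cref{prop:DP-gaussian-mechanism} says the variance $2\log(1.25\delta^{-1})\Delta^2/\epsilon^2$ suffices. With $\Delta\le 2C/N$ this is at most $8C^2\log(1.25\delta^{-1})/(\epsilon^2N^2)$, which is the threshold in the corollary.

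One point needs care: the proposition is stated with equality for $s^2$, but the hypothesis here is $\sdp^2 \ge$ threshold. I would handle this by noting that the Gaussian mechanism's guarantee is monotone in the noise level. Concretely, if $\sdp^2$ exceeds the required value, one can equivalently apply the proposition with a larger sensitivity $\tilde\Delta := \epsilon\sdp/\sqrt{2\log(1.25\delta^{-1})}\ge 2C/N\ge\Delta$. The proposition's proof only uses that $\|q(D)-q(D')\|\le\Delta$, so any upper bound on the sensitivity works, and then $\sdp^2$ is exactly the prescribed value for $\tilde\Delta$.

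The last step is post-processing. The output parameters are $\omega=\phi_2(E(p_{1:N},\xi))$, where $\phi_2$ is a fixed public map that does not access the private data. By the post-processing property of DP, the preimage $\phi_2^{-1}(S)$ of any measurable $S$ is measurable. So the DP inequality for $E$ transfers directly to $\omega$, and the whole DP-DeepSets output is $(\epsilon,\delta)$-DP.

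I expect the only delicate points to be the equality-versus-inequality issue and the convention for adjacency. Under replacement adjacency the sensitivity is $2C/N$, as above, and that factor 2 accounts for the 8 in the corollary. Under add/remove adjacency $N$ would change, so I will fix replacement adjacency as the paper defines it ("differ in a single entry").
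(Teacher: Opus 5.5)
Your proposal is correct and follows the paper's proof essentially step for step: you bound the sensitivity of the clipped mean by $2C/N$ under replacement adjacency, invoke \cref{prop:DP-gaussian-mechanism} for $E$, and conclude by post-processing through $\phi_2$. Your extra care about $\sdp^2 \geq$ (rather than $=$) the threshold covers a point the paper passes over silently. An alternative to your enlarged-sensitivity argument is to split $\xi$ into a Gaussian with exactly the threshold variance plus an independent Gaussian remainder, and treat the remainder as randomized post-processing.
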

The proof can be found in \cref{app:proof-hypernet-dp}.

\subsection{Training of a Public Hypernetwork}\label{subsec:training-hypernet}

\begin{figure}[t]
    \centering
    \includegraphics[width=0.65\linewidth]{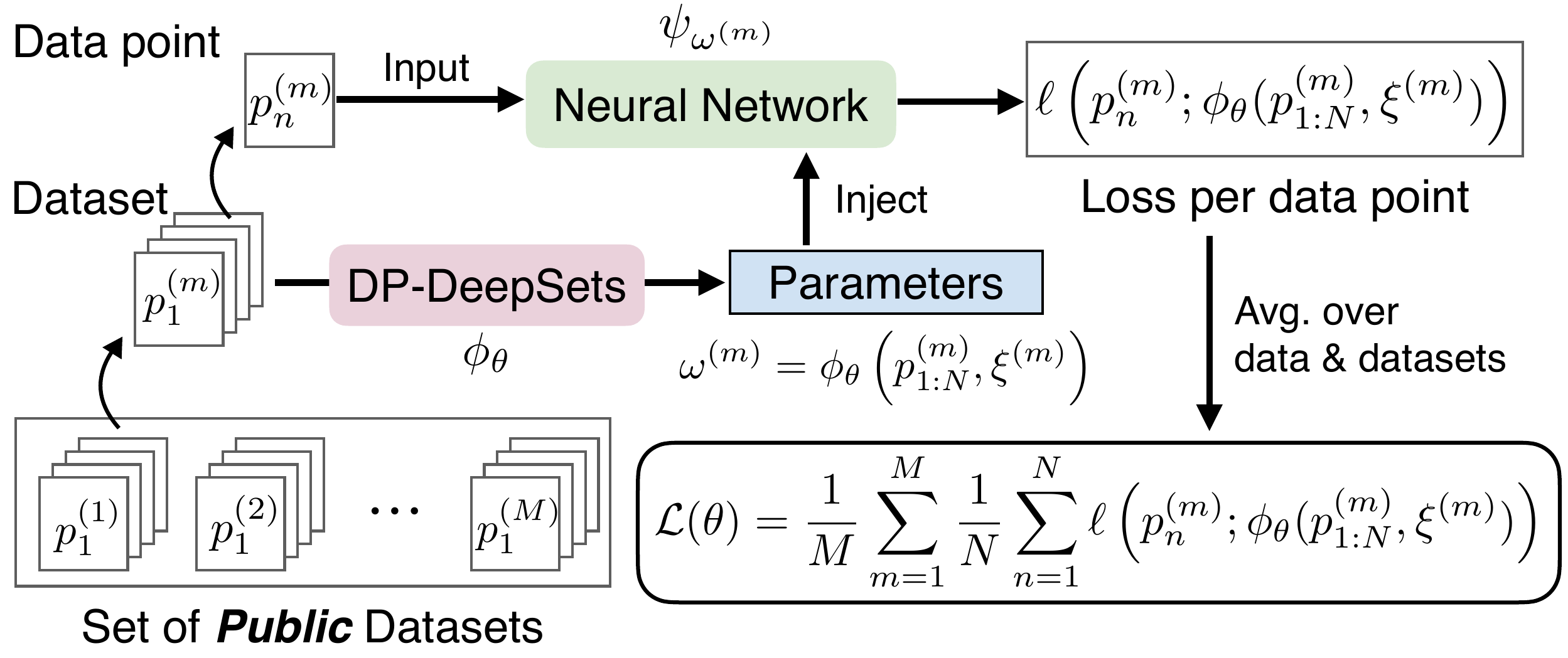}
    \captionof{figure}{
        The training procedure for the DP-DeepSets hypernetwork.
        To train the hypernetwork parameter $\theta$, we prepare a collection of \emph{public} datasets.
        The parameter $\theta$ is trained by minimizing the average dataset-wise loss over all public datasets.
        For each dataset $\pidx{m}_{1:N}$, DP-DeepSets $\phi_\theta$ generates
        $\omegaidx{m} := \phi_\theta(\pidx{m}_{1:N}, \xiidx{m})$.
        The generated parameter $\omegaidx{m}$ is injected into the target network, and the dataset-wise loss is computed as the average per-data-point loss over the data points in the dataset.
    }
    \label{fig:hn-train-procedure}
\end{figure}

Next, we describe how to train DP-DeepSets.
The procedure is also described in \cref{fig:hn-train-procedure}.
To use a hypernetwork to generate suitable parameters for private data, 
we need to train it to learn the correspondence between datasets and the parameters beforehand.
For this purpose, we propose to \emph{train DP-DeepSets using a set of public datasets}.

Specifically, to train DP-DeepSets, we prepare a collection of public datasets
$\set{(\pidx{1}_{1:N}, \uidx{1}_{1:N}), \ldots, (\pidx{M}_{1:N}, \uidx{M}_{1:N})}$.
We also independently sample noise vectors
$\xiidx{1}, \ldots, \xiidx{M} \sim \scN(0, \sdp^2 I_D)$.
Let $\phi_\theta$ denote a DP-DeepSets hypernetwork parameterized by $\theta$.
We train it using the following loss function $\scL_{\HN}(\theta)$:
\begin{align}\label{eq:hypernet-loss}
    \!\scL_{\HN}(\theta)
    \!:=\!\frac{1}{M} \sum_{m=1}^M \!\scLidx{m}_{\dataset}(\theta), 
    ~\text{where}~
    \scLidx{m}_{\dataset}(\theta)
    \!:=\! \frac{1}{N} \sum_{n=1}^N 
        \!\ell\qty(\pidx{m}_n, \uidx{m}_n; \phi_\theta(\pidx{m}_{1:N}, \xiidx{m})\!).
\end{align}
In short, the hypernetwork is trained end-to-end to minimize the average loss over the $M$ public datasets.
For each dataset, the loss $\scLidx{m}_{\dataset}$ is essentially the standard empirical risk,
obtained by averaging the per-sample loss over the $N$ samples.
The only difference is that the network parameter $\omegaidx{m}$ is not a directly optimized parameter;
instead, it is generated by the hypernetwork as
$\omegaidx{m} = \phi_\theta(\pidx{m}_{1:N}, \xiidx{m})$.
By minimizing this end-to-end objective over dataset-specific losses,
the hypernetwork directly learns what parameters it should generate to reduce the loss for each dataset.

\begin{remark}
    Since DP-DeepSets is trained using public data, \emph{it does not contain any private information}.
    Therefore, during inference on private data, we do not care about privacy leakage from DP-DeepSets itself.
    The only thing that matters for privacy leakage is the parameters generated from the private data through DP-DeepSets.
    Despite this, \emph{to avoid distribution shift, we perturb the dataset embedding even during the training phase, which uses public datasets}.
\end{remark}

\vspace{-2mm}
\section{Theoretical Analysis: A Case Study in Linear Regression}\label{sec:theory}
\vspace{-2mm}

In this section, we theoretically analyze DP-DeepSets and DP-SGD in a synthetic setting.
We show that DP-DeepSets effectively encodes datasets in low-dimensional embeddings and has high utility compared to DP-SGD.

We consider linear regression.
Suppose that the input vector $x$ is drawn from $\unif(\sqrt{d}~\bS^{d-1})$.
Given a true parameter $w \in \bR^{d}$, the observation $y \in \bR$ is given as $y = w^\top x + z$, where $z \sim N(0, 1)$ is observation noise independent of $x$.

We assume that the true parameter is randomly drawn as $w\sim\unif(\set{v \mid \|A_w^{-1} v\|_\infty \leq 1})$, where $A_w := \diag(\alpha_1, \ldots, \alpha_d)$ for $\alpha_1, \ldots, \alpha_d > 0$.
Let $x_1, \ldots, x_N \sim \unif(\sqrt{d}~\bS^{d-1})$ be i.i.d. inputs, 
and $y_n := w^\top x_n + z_n$~($n=1, \ldots, N$) be observations with i.i.d. noise $z_1, \ldots, z_N \sim \scN(0, 1)$.
Our goal is to estimate the true parameter $w$ 
given a dataset $p_{1:N} = \{(x_1, y_1), \ldots, (x_N, y_N)\}$.

We also make the following assumption on the distribution of $w$.
\begin{assumption}\label{assmp:eigen-of-distr-w}
    It holds $\alphaul i^{-\gamma} \leq \alpha_i \leq \alphabar i^{-\gamma}$ 
    for some constants $\alphaul, \alphabar > 0$ and $\gamma > 1/2$.
\end{assumption}
This implies that the coordinates with larger indices are progressively less important, in the sense that the allowable magnitude of $w_i$ decays polynomially with $i$.
Therefore, the true parameter $w$ has a small effective dimension, and accordingly the information needed to recover $w$ from the dataset $p_{1:N}$ is also low-dimensional. 
If an algorithm can successfully exploit this low-dimensional structure, then it can avoid large DP noise and estimate the parameter efficiently.

\subsection{Upper Bound of Estimation Error for DP-DeepSets}

First, we provide an upper bound on the error in estimating $w$ using DP-DeepSets.
For simplicity, we consider DP-DeepSets consisting only of MLPs without Transformers.

To rigorously define the class of DP-DeepSets, we introduce the class of MLPs.
Let $\eta := \max\{0, \cdot\}$ be the ReLU activation function.
Then, an MLP $f: \bR^{\Din} \to \bR^{\Dout}$ with depth $L$ and width $W$ is defined as
$
    f(x) = (A_L \eta(\cdot) + b_L)  \circ \cdots \circ (A_2 \eta(\cdot) + b_2)  
            \circ (A_1 x + b_1), 
$
where $A_i \in \bR^{d_{i+1} \times d_i}$, $b_i \in \bR^{d_{i+1}}$
with $d_1 = \Din$, $d_{L+1} = \Dout$ and $\max_i d_i \leq W$.
Then, we define the class of MLPs with input dimension $\Din$, output dimension $\Dout$, depth $L$, width $W$, and norm constraint $B$ as
\begin{align*}
    \Phi^{\Din, \Dout}_\mlp(L, W, B)
    := \left\{
        f
        \relmiddle|
        \max_i \{\norm{A_i}_\infty \vee \norm{b_i}_\infty\} \leq B
    \right\}.
\end{align*}
Next, we define the class of DP-DeepSets.
DP-DeepSets $\phi$ (using MLPs instead of a Transformer)
takes a dataset $p_{1:N}$ and DP noise $\xi$ as inputs, 
and predicts the parameter as follows:
\begin{align*}
    \phi(p_{1:N}, \xi)
    := \clip_F \circ \phi_2\qty(
        \xi + \frac{1}{N} \sum_{n=1}^N \clip_C \circ \phi_1(p_n), 
    )
\end{align*}
where $\phi_1: \bR^{\Ddata} \to \bR^D$ and $\phi_2: \bR^D \to \bR^{\Dparam}$
are MLPs, and $F > 0$ is a fixed constant.
Here, we add a clipping operation compared to the architecture introduced in \cref{sec:framework} to constrain expressivity.
Such clipping operations are commonly used in prior work analyzing the estimation error of neural networks~(e.g., \citet{suzuki2018adaptivity}, \citet{schmidt2020nonparametric}).
Then, we define $\Phi(L, W, B, C, D)$ as the class of DP-DeepSets 
with MLP depth $L$, width $W$, norm constraint $B$, clipping size $C$, and embedding dimension $D$:
\begin{align*}
    \Phi(L, W, B, C, D)
    = \left\{
        \phi \relmiddle| 
        \phi_1 \in \Phi_\mlp^{\Ddata, D}(L, W, B), ~
        \phi_2 \in \Phi_\mlp^{D, \Dparam}(L, W, B)
    \right\}.
\end{align*}

As described in \cref{subsec:training-hypernet}, we employ public datasets 
$\pidx{1}_{1:N}, \ldots, \pidx{M}_{1:N}$ and DP noise vectors $\xiidx{1}, \ldots, \xiidx{M}$ to train the hypernetwork.
We analyze the hypernetwork $\phihat$ chosen to minimize the loss $\scL_\HN$ defined in \eqref{eq:hypernet-loss} after setting $\ell$ to the mean squared error, i.e.,
\begin{align}\label{eq:definition-phihat}
    \phihat := \argmin_{\phi\in\Phi(L, W, B, C, D)}
    \frac{1}{M} \sum_{m=1}^M
    \frac{1}{N} \sum_{n=1}^N 
    \qty(\phi(\pidx{m}_{1:N}, \xiidx{m})^\top \xidx{m}_n - \yidx{m}_n)^2
\end{align}

Then, the parameters generated by the trained DP-DeepSets $\phihat$ for a test (private) dataset satisfy the following theorem.
The proof can be found in \cref{app:proof-of-upper-bound-hn}.
\begin{theorem}\label{thm:upper-bound-hyper-network}
    Let $k \in \{1, \ldots, d\}$.
	Let $\phihat\in\Phi(L, W, B, C, D)$ be the DP-DeepSets defined in \eqref{eq:definition-phihat} with
    \begin{align*}
        L \simeq \log \frac{d \cdot \epsilon N}{\log\delta^{-1}}, 
        \quad
        W \simeq d, \quad
        B \simeq \frac{d \cdot \epsilon N}{\log\delta^{-1}}, \quad
        D = k, \quad 
        C \simeq k \vee  \log \qty(\frac{\epsilon N}{\log \delta^{-1}}).
    \end{align*}
    Moreover, suppose that we set the DP noise scale as 
    $\sdp^2 \simeq \frac{C^2 \log(\delta^{-1})}{\epsilon^2 N^2}$.
    Then, we have
    \begin{align*}
        &\bE\qty[\norm{\phihat(p_{1:N}, \xi) - w}^2] 
        \lesssim 
        \qty[
            \frac{d^2}{\sqrt{M}}
            + \frac{d}{N}
            + \frac{k^3}{\epsilon^2 N^2} 
            + \int_k^d \frac{\dd{t}}{t^{2\gamma}}
        ] \poly\log (N, \epsilon^{-1}, \delta^{-1}),
    \end{align*}
    where the expectation is taken over $\pidx{1}_{1:N}, \ldots, \pidx{M}_{1:N}$, $\xiidx{1}, \ldots, \xiidx{M}$, 
    $p_{1:N}$, $\xi$ and $w$.
\end{theorem}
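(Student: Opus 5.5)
We would split the error into an \emph{estimation} part and an \emph{approximation} part by the usual oracle-inequality decomposition for least-squares empirical risk minimizers. The key observation is this. Since $x$ is uniform on $\sqrt{d}\,\bS^{d-1}$ and independent of $z$, we have $\bE[xx^\top]=I_d$. Hence, for any hypernetwork $\phi$, the population dataset-wise risk is
\begin{align*}
\bE\bigl[(\phi(p_{1:N},\xi)^\top x - y)^2\bigr] = \bE\norm{\phi(p_{1:N},\xi)-w}^2 + 1 ,
\end{align*}
where the fresh test point $(x,y)$ is independent of $p_{1:N}$. The excess population risk of $\phihat$ is therefore exactly the target quantity $\bE\norm{\phihat(p_{1:N},\xi)-w}^2$.

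One subtlety is that the training objective \eqref{eq:definition-phihat} reuses the in-sample points. We would handle this either by bounding the in-sample versus out-of-sample gap by $O(\text{complexity}/N)$, which is absorbed into the $d/N$ term, or by viewing each public dataset $m$ as one i.i.d.\ ``sample'' $(\pidx{m}_{1:N},\xiidx{m},w^{(m)})$ with bounded loss.

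\textbf{Step 1: estimation error.} The loss is bounded after the clipping $\clip_F$, up to $\log$ factors from the Gaussian tails of $z$ truncated at $\sqrt{\log}$ level. Treating the $M$ datasets as i.i.d.\ samples, a standard Rademacher or covering-number bound gives a uniform deviation of order $\sqrt{\log\mathcal N/M}$. For the class $\Phi(L,W,B,C,D)$ the log covering number is $O(L^2W^2\log(BWL/\tau))$. With $W\simeq d$ and $L,\log B$ polylogarithmic, this is $d^2\,\poly\log$, so the term is at most $\tfrac{d^2}{\sqrt M}\poly\log$. This slow-rate bound is enough to produce the stated $d^2/\sqrt M$ term, so no localization argument is needed.

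\textbf{Step 2: approximation via an explicit construction.} We would exhibit a $\phi^\ast\in\Phi$ with
\begin{align*}
\bE\norm{\phi^\ast(p_{1:N},\xi)-w}^2 \lesssim \Bigl[\tfrac dN+\tfrac{k^3}{\epsilon^2N^2}+\textstyle\int_k^d t^{-2\gamma}\,\dd t\Bigr]\poly\log .
\end{align*}
The construction sets $\phi_1(x,y)=(y\,x_i)_{i\le k}\in\bR^k$, so that $D=k$. The coordinates of the product $y\,x_i$ are approximated by ReLU MLPs on the bounded range where $|y|,|x_i|\lesssim\sqrt{\log(\cdot)}$. Concentration of the uniform sphere and of the Gaussian noise makes each $|y x_i|\lesssim \log(\epsilon N/\log\delta^{-1})$ with high probability, since $\norm{w}$ is bounded because $\gamma>1/2$. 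The norm $\norm{\phi_1}\lesssim\sqrt k\log$ is then within $C\simeq k\vee\log$, so clipping is inactive except on a small-probability event. The map $\phi_2$ is the zero-padding embedding $\bR^k\to\bR^d$, which is realizable exactly by a ReLU MLP using $v=\eta(v)-\eta(-v)$.

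Then $\phi^\ast=\frac1N\sum_n y_nx_{n,1:k}+\xi$, padded with zeros. Its error decomposes into three pieces:
\begin{itemize}
\item the sampling variance $\bE\bigl\|\frac1N\sum y_nx_{n,1:k}-w_{1:k}\bigr\|^2\lesssim k(\norm w^2+1)/N\le d/N$;
\item the DP noise $\bE\norm\xi^2=k\sdp^2\simeq kC^2\log\delta^{-1}/(\epsilon^2N^2)\lesssim k^3/(\epsilon^2N^2)\cdot\poly\log$;
\item the tail bias $\norm{w_{k+1:d}}^2\le\sum_{i>k}\alphabar^2 i^{-2\gamma}\lesssim\int_k^d t^{-2\gamma}\,\dd t$.
\end{itemize}
The MLP approximation error of the product is made $\lesssim 1/(\epsilon N)^2$ by choosing depth $L\simeq\log(d\epsilon N/\log\delta^{-1})$ and weight bound $B$ of the same order, using the Yarotsky-type squaring construction.

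\textbf{Step 3: combine.} The oracle inequality gives: excess risk of $\phihat$ $\le$ excess risk of $\phi^\ast$ $+$ the uniform deviation from Step 1. Summing the terms yields the theorem.

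The main obstacle we expect is handling the unboundedness in a clean way. Gaussian $y$ and $\xi$ make the loss unbounded, and clipping $\clip_C$ is applied only with high probability. We would truncate on a good event of probability $1-(\epsilon N)^{-2}$ and control the complement using the $\clip_F$ boundedness together with Gaussian moment bounds, which costs only polylog factors. A secondary nuisance is making the in-sample reuse of $p^{(m)}_n$ in \eqref{eq:definition-phihat} compatible with the population identity above. If the Step 1 treatment of that gap is not clean, we would split each public dataset conceptually and argue the dependence contributes $O(d/N)$.
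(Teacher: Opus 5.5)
\textbf{The gap is in Step~3: the link between the training objective and the target.}

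Your identity $\bE[(\phi^\top x-y)^2]=1+\bE\|\phi(p_{1:N},\xi)-w\|^2$ holds for a fresh test point, but it is not the mean of the loss in \eqref{eq:definition-phihat}. Write $h:=\phi(p_{1:N},\xi)-w$, $\widehat\Sigma:=\frac1N\sum_n x_nx_n^\top$ and $g:=\frac1N\sum_n z_nx_n$. The mean of the dataset-wise training loss is then
\[
\bar R(\phi)=1+\bE\bigl[h^\top\widehat\Sigma h\bigr]-2\,\bE\bigl[h^\top g\bigr].
\]
So the uniform deviation of Step~1, even with each public dataset treated as one i.i.d.\ sample as you suggest, only gives $\bar R(\phihat)\le\bar R(\phiast)+2\sup_\phi|\widehat R(\phi)-\bar R(\phi)|$. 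Your proposed bridge, an additive in-sample/out-of-sample gap of order complexity$/N$, does not hold uniformly over $\Phi$, for three reasons:
\begin{itemize}
\item A priori the cross term is only bounded by $\sqrt{d/N}\,(\bE\|h\|^2)^{1/2}$. The network sees $y_n$, hence $z_n$, so it can correlate its output with $g$.
\item Likewise, $\bE[h^\top(\widehat\Sigma-I)h]$ is controlled additively only at order $\sqrt{d/N}$.
\item If ``complexity'' means the covering entropy of $\Phi$, which is of order $d^2$ up to logs, the resulting $d^2/N$ is not absorbed into $d/N$ either.
\end{itemize}
An additive bridge therefore gives $\sqrt{d/N}$ where the theorem has $d/N$. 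Splitting the public datasets is not available either, because $\phihat$ is defined by the in-sample objective.

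\textbf{The missing idea is a \emph{relative} comparison.} Assume without loss of generality $N\ge c_0d$; otherwise the bound is trivial, since $\clip_F$ gives $\|\phihat(p_{1:N},\xi)-w\|\lesssim1$. Then \cref{lem:eigenvalue-concentration} gives $\frac12\le\lambda_{\min}(\widehat\Sigma)\le\lambda_{\max}(\widehat\Sigma)\le\frac32$ outside an event of probability $e^{-cN}$. Combining this with Cauchy--Schwarz and $\bE\|g\|^2=d/N$, for a fresh dataset with $\phihat$ held fixed, yields
\begin{align*}
\bar R(\phihat)-1&\ge\tfrac14\,\bE\|\phihat(p_{1:N},\xi)-w\|^2-\tfrac{4d}{N}-O(e^{-cN}),\\
\bar R(\phiast)-1&\le\tfrac52\,\bE\|\phiast(p_{1:N},\xi)-w\|^2+\tfrac{d}{N}+O(d\,e^{-cN}).
\end{align*}
With this, your Steps~1--2 close the proof, provided the deviation bound is applied to $\ell_\phi-\ell_{\phiast}$ so that $\frac1N\sum_n z_n^2$ cancels. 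The Lipschitz constant of that difference in the network output is of order $\|\widehat\Sigma\|_{\mathrm{op}}+\|g\|\lesssim d+\|g\|$. This factor is the actual source of $d^2/\sqrt M$: your count $\sqrt{\log\mathcal N/M}\approx d/\sqrt M$ only reaches $d^2/\sqrt M$ after multiplying by it.

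\textbf{Comparison with the paper.} The paper never forms $\bar R$. It writes the basic inequality empirically as
\[
\tfrac1M\textstyle\sum_m(h^{(m)})^\top\Sigma^{(m)}h^{(m)}\le\tfrac2M\sum_m(h^{(m)})^\top(\Sigma^{(m)}\Delta^{(m)}+g^{(m)}),
\]
with $h^{(m)}=\phihat-\phiast$ on the training sets. The in-sample reuse is thus built in, and $d/N$ enters as $\bE\|g^{(m)}\|^2$. It then transfers $\frac1M\sum_m\|\phihat-\phiast\|^2$ to a fresh dataset via a uniform deviation bound for the bounded class $\{\|\phi-\phiast\|^2\}$ and the triangle inequality. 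Your patched route needs the eigenvalue event only for one fresh dataset. The paper's route needs control of $\min_m\lambda_{\min}(\Sigma^{(m)})$ over all $M$ training sets.

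\textbf{A minor point in Step~2.} $\sqrt k\log(\epsilon N)$ is not $\lesssim k\vee\log(\epsilon N)$ when $k\ll\log^2(\epsilon N)$. Use $|y|\,\|x_{1:k}\|\lesssim\sqrt{\log(\epsilon N)}\,\bigl(\sqrt k+\sqrt{\log(\epsilon N)}\bigr)$ instead, or bound the truncation error in expectation as the paper does.
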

The first term, $d^2/\sqrt{M}$, corresponds to the generalization gap in hypernetwork learning: learning a map from datasets to parameters using $M$ datasets incurs an error of order $O(1/\sqrt{M})$.
The second term, $d/N$, arises from estimating a $d$-dimensional variable from $N$ samples.

The third term comes from the DP noise, while the final term represents the approximation error due to the low-dimensional representation of the dataset. 
Here, a trade-off in $k$ appears: increasing $k$ reduces the error caused by embedding the data into a low-dimensional space, 
but it also increases the dimension of the DP noise and hence the error induced by that noise.
The upper bound can be minimized by balancing these two terms, namely by choosing
$k = \Theta\qty(\min\qty{d, (\epsilon N)^{\frac{1}{\gamma+1}}})$.
With this choice and sufficiently large $M$, the resulting error bound becomes
$\Otilde (\frac{d}{N} + \qty(\frac{1}{\epsilon N})^{\frac{2\gamma-1}{\gamma+1}})$.

\subsection{Lower Bound for DP-SGD}

Next, we present the lower bound on the error in estimating the parameter $w$ using DP-SGD for the target model $\psi_\omega(x) = \omega^\top x$.
We run DP-SGD on the following empirical loss for linear regression:
\begin{align*}
    \scL(\omega) := \frac1{N} \sum_{n=1}^N \scL_n(\omega), 
    \quad\text{where}\quad
    \scL_n(\omega) := \frac12 (\omega^\top x_n - y_n)^2.
\end{align*}
For simplicity, we consider DP-SGD using the full batch at each step.
In other words, we analyze the parameter obtained by the following update: 
\begin{align*}
    \omega_t = \omega_{t-1} - 
    \frac{\eta}{N}\qty{
        \sum_{n=1}^N \clip_C(\nabla\scL_n(\omega_{t-1}))
        + \xi_t
    },
    \quad \xi_t \sim \scN(0, s^2 I_d),
\end{align*}
where $\eta > 0 $ is a learning rate, and $\xi_t$ is DP noise added to the sum of clipped gradients, as in \cref{sec:dp-sgd-background}.
After $T$ steps of this update, the resulting parameter satisfies the bound stated in the following theorem.
The proof is provided in \cref{app:proof-of-lower-bound-dpsgd}.
\begin{theorem}\label{thm:lower-bound-dpsgd}
    Let $T \in \bZ_{>0}$ be the number of steps of DP-SGD.
    Suppose that the initial point is at the origin, i.e., $\omega_0 = 0$.
    Moreover, assume that it holds $\eta\in(0, 1)$, $N \geq N_0 \log N_0$ and $C \gtrsim d$ with $N_0 \simeq d \sqrt{(T \log T) \epsilon^{-2} \log(\delta^{-1})}$.
    Additionally, we set the DP noise scale as $s^2 \simeq \frac{C^2 T \log(\delta^{-1})}{\epsilon^2}$.
    Then, it holds
    \begin{align*}
        \bE\qty[\norm{\omega_T - w}^2]
        \gtrsim \frac{d}{N} 
        + \min\qty{\frac{d^3 \log(\delta^{-1})}{\epsilon^2 N^2}, 1}, 
    \end{align*}
    where the expectation is taken over $p_{1:N}$, $\xi_1, \ldots, \xi_T$, and $w$.
\end{theorem}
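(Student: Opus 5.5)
We will lower bound the two terms separately and use $\max\{a,b\}\gtrsim a+b$. The key observation is that under the stated assumptions clipping is inactive with high probability, so $\omega_T$ is essentially a linear function of the data plus an independent Gaussian term. Concretely, we first show that on an event $\mathcal{E}$ of probability $1-O(1/T)$ (or $1-N^{-c}$), every gradient $\nabla\scL_n(\omega_t)=(\omega_t^\top x_n-y_n)x_n$ has norm at most $C$ for all $t\le T$. Here $\|x_n\|=\sqrt d$, and $|w^\top x_n|\lesssim \sqrt{\log N}$ since $w$ is bounded, $\|w\|\lesssim 1$ by \cref{assmp:eigen-of-distr-w} with $\gamma>1/2$. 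We also need $|\omega_t^\top x_n|$ to stay $O(\sqrt{\log(NT)})$. This in turn requires the accumulated noise $\frac{\eta}{N}\sum_t \xi_t$ to be small in every fixed direction. Its variance per direction is $\lesssim T s^2/N^2\simeq C^2T^2\log\delta^{-1}/(\epsilon^2N^2)$, and the condition $N\ge N_0\log N_0$ with $C\simeq d$ is exactly what makes this $O(1)$ up to the $\log T$ factor. So $C\gtrsim d$ suffices, since $\sqrt d\cdot \sqrt{\log N}\lesssim d$ up to constants/log absorbed. This step, controlling the iterates uniformly so clipping never binds, is the main obstacle. I would handle it by induction on $t$, using that $\|I-\eta\hat\Sigma\|\le 1$ with $\hat\Sigma=\frac1N\sum x_nx_n^\top$, together with a union bound over $n,t$ of Gaussian tails.

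On $\mathcal{E}$ the recursion is linear:
\[
\omega_t=(I-\eta\hat\Sigma)\omega_{t-1}+\tfrac{\eta}{N}X^\top y-\tfrac{\eta}{N}\xi_t ,
\]
so
\[
\omega_T-w=-(I-\eta\hat\Sigma)^T w+\sum_{j<T}(I-\eta\hat\Sigma)^j\tfrac{\eta}{N}X^\top z-\tfrac{\eta}{N}\sum_{t}(I-\eta\hat\Sigma)^{T-t}\xi_t .
\]
The three pieces are bias, statistical noise, and DP noise. Conditional on $X,w$, the last two are independent zero-mean terms, so the expected squared norm is at least the sum of their second moments. To pass from $\mathcal{E}$ to the full expectation, I would lower bound $\bE[\|\omega_T-w\|^2\mathbf 1_{\mathcal E}]$ by computing the unconditional variance of the linear proxy and subtracting a small correction on $\mathcal{E}^c$, via Cauchy–Schwarz with bounded fourth moments.

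For the DP term: with $N\gtrsim d\log N$, $\hat\Sigma$ has all eigenvalues in $[1/2,2]$ whp by matrix concentration. Its variance is then $\frac{\eta^2 s^2}{N^2}\sum_{t}\mathrm{tr}(I-\eta\hat\Sigma)^{2(T-t)}\gtrsim \frac{\eta^2 s^2 d}{N^2}$, keeping only $t=T$, which is $\gtrsim \eta^2 d^3 T\log\delta^{-1}/(\epsilon^2N^2)$. To remove the dependence on $\eta$ and $T$, I would trade off against the bias. If $\eta T$ is small, the bias $\|(I-\eta\hat\Sigma)^Tw\|^2\ge(1-2\eta)^{2T}\|w\|^2$ is $\gtrsim 1$ in expectation, since $\bE\|w\|^2\gtrsim \alpha_1^2$. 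If $\eta T\gtrsim 1$, summing the geometric series gives DP variance $\gtrsim \frac{\eta^2 s^2 d}{N^2}\min(T,1/\eta)\gtrsim \frac{d^3\log\delta^{-1}}{\epsilon^2N^2}\cdot \eta T\cdot\min(1,\eta T)^{-1}\cdot\min(\eta T,1)$, which is $\gtrsim d^3\log\delta^{-1}/(\epsilon^2N^2)$. Taking the minimum over the two regimes yields the $\min\{\cdot,1\}$ term.

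For the statistical term: its covariance is $\frac{\eta^2}{N}P\hat\Sigma P$ with $P=\sum_{j<T}(I-\eta\hat\Sigma)^j$. When $\eta T\gtrsim1$ the eigenvalues of $P$ are $\gtrsim 1/\eta$, so the trace is $\gtrsim d/N$. When $\eta T\lesssim 1$, the bias term $\gtrsim 1\ge d/N$ already dominates. Combining the regimes gives
\[
\bE\|\omega_T-w\|^2\gtrsim \frac dN+\min\Bigl\{\frac{d^3\log\delta^{-1}}{\epsilon^2N^2},1\Bigr\},
\]
as stated in \cref{thm:lower-bound-dpsgd}.
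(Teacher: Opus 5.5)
The gap is in the no-clipping step, which you yourself identify as the crux. Your overall plan does match the paper's proof: an unclipped linear proxy, an event on which clipping never binds, the split into bias, statistical noise and accumulated DP noise, and a bias-versus-DP-noise trade-off. Your case split on $\eta T$ replaces the paper's lemma on $f(x)=ATx(1-(1-x)^{2T})+B(1-x)^{2T}$.

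The failing step is this. Using only $\|I-\eta\hat\Sigma\|\le 1$, you bound the per-direction variance of the accumulated noise by $Ts^2/N^2\simeq C^2T^2\log(\delta^{-1})/(\epsilon^2N^2)$, and then claim $N\ge N_0\log N_0$ makes this $O(1)$. It does not. At $N\simeq N_0\log N_0$ with $C\simeq d$, this quantity is of order $T/(\log T\,\log^2 N_0)$. As written, your argument therefore needs $N\gtrsim dT\sqrt{\log(\delta^{-1})}/\epsilon$ up to logarithms, a factor $\sqrt T$ beyond the hypothesis.

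The missing ingredient is contraction. On the eigenvalue event (the paper takes all eigenvalues of $\hat\Sigma$ in $[5/6,7/6]$) you have $\|I-\eta\hat\Sigma\|\le 1-5\eta/6$ for every $\eta\in(0,1)$. Define the accumulated noise $\Xi_t:=\frac{\eta}{N}\sum_{u\le t}(I-\eta\hat\Sigma)^{t-u}\xi_u$. Its conditional covariance is then at most $\frac{s^2}{N^2}\eta^2\sum_{k\ge 0}(1-5\eta/6)^{2k}\le 2s^2/N^2$ times the identity, uniformly in $t$ and $\eta$. This is the paper's bound $\|\bar\Sigma_t\|_{\mathrm{op}}<3$. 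A Gaussian tail bound with a union bound over $(n,t)$ then gives $\|x_nx_n^\top\Xi_t\|\lesssim d\,(s/N)\sqrt{\log(NT)}$ for all $n,t$. Since $s\propto C$, this is at most $C/4$ exactly when $N\gtrsim d\sqrt{T\log(NT)\,\epsilon^{-2}\log(\delta^{-1})}$, which is what $N\ge N_0\log N_0$ provides, for every $C$.

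Your intermediate target $|\omega_t^\top x_n|=O(\sqrt{\log(NT)})$ is both the wrong criterion and not true in general. The noise part of $x_n^\top\omega_t$ scales linearly with $C$ through $s$. At $N\simeq N_0\log N_0$ and $\eta$ of order one, its standard deviation is of order $C/\sqrt d$ up to logarithms, since $\|x_n\|=\sqrt d$. What you actually need is $\sqrt d\,|x_n^\top(\omega_t-w)|+\sqrt d\,|z_n|\le C$. The noise part is handled as above. The bias part is deterministic, via $\|x_nx_n^\top(I-\eta\hat\Sigma)^t w\|\le d\|w\|\lesssim d$.

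Two smaller repairs are also needed.

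First, do the Cauchy--Schwarz transfer conditionally on $(x_{1:N},w)$ inside the eigenvalue event. There the proxy is Gaussian and $\mathbb{E}\|Y\|^4\le 3(\mathbb{E}\|Y\|^2)^2$. Unconditionally, $(I-\eta\hat\Sigma)^T$ can blow up when $\eta\lambda_{\max}(\hat\Sigma)>2$, which is possible because $\lambda_{\max}\le d$. Alternatively, use nonnegativity as the paper does: Cantelli's inequality shows the proxy's squared norm is at least half its conditional mean with probability at least $1/17$. A union bound against the failure events then leaves a constant-probability event, and no correction term is needed.

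Second, the range $[1/2,2]$ is too loose for your claim that $P$ has all eigenvalues $\gtrsim 1/\eta$ when $\eta T\gtrsim 1$. Take $T=2$, $\eta\to 1$, $\lambda\to 2$: the corresponding eigenvalue $2-\eta\lambda$ of $P$ is near $0$. A tighter range such as $[5/6,7/6]$ fixes this.
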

The first term, $d/N$, is the error of estimating a $d$-dimensional parameter from $N$ data points, as in \cref{thm:upper-bound-hyper-network}.
The second term is due to DP noise.
Unlike DP-DeepSets, DP-SGD necessarily adds $d$-dimensional noise repeatedly, yielding an error term that increases with $d$.

\paragraph{Comparison in the high-dimensional regime}
Here, we compare the errors of DP-DeepSets $\Eours$ and DP-SGD $\Esgd$ in the high-dimensional regime where $d\to\infty$ and $N\to\infty$ with $N/d^{1+\zeta} = \Theta(1)$ for some $\zeta > 0$.
First, DP-DeepSets matches the DP-SGD lower-bound scaling in the worst case.
Indeed, if we choose $k = d$ and take $M$ sufficiently large in \cref{thm:upper-bound-hyper-network}, then we obtain
$
\Eours = O\left(\frac{d}{N} + \frac{d^3}{\epsilon^2 N^2}\right),
$
which matches the lower-bound rate for $\Esgd$.
Moreover, in the regime $\zeta\in(0, 1/2)$, we have
\begin{align*}
    \Eours 
    = \Otilde \qty(
        \frac{1}{d^\zeta} + \qty(\frac{1}{\epsilon d^{1+\zeta}})^{\frac{2\gamma-1}{\gamma+1}}
    )  = o_d(1), 
     \quad
    \Esgd = \Omegatilde \qty(
        \frac{1}{d^\zeta} + \min\qty{\frac{d^{1-2\zeta}}{\epsilon^2}, 1}
    ) = \Omega(1),
\end{align*}
if we choose
$
k = \Theta\left((\epsilon N)^{\frac{1}{\gamma+1}}\right)
$
for DP-DeepSets.
These observations show that sufficiently trained DP-DeepSets can match DP-SGD in general and, in a nontrivial high-dimensional regime, can substantially outperform it, demonstrating the effectiveness of our proposed method.

\section{Experiments: LoRA Finetuning of Diffusion Models}\label{sec:experiments}

In this section, we demonstrate the effectiveness of our approach in LoRA finetuning of diffusion models.

\paragraph{Models and Datasets}
As the pretrained model, we use the checkpoint for the unconditional diffusion model trained on ImageNet64~(\texttt{imagenet64\_uncond\_100M\_1500K.pt}), provided in the codebase\footnote{\url{https://github.com/openai/improved-diffusion}} of \citet{nichol2021improved}.
We consider \emph{finetuning on small private datasets} constructed from CIFAR-10~\citep{krizhevsky2009learning}.
Specifically, we first divide the 60,000 data points (50,000 training and 10,000 test images) into two subsets, treating one subset of 30,000 data points as public data and the remaining half as private data.
Next, we embed the data into 2,048-dimensional representations using Inception~\citep{szegedy2016rethinking}.
We then select 100 representative data points from the private chunk and construct a \emph{set of 100 private datasets, each containing 128 data points}, by taking the $k$-nearest neighbors of each representative point with $k=128$.
Here, the representative points are selected to minimize overlap between the resulting datasets.

\paragraph{Settings for DP-DeepSets}
To feed image datasets into DP-DeepSets, we use CLIP~\citep{radford2021learning} to embed images into 512-dimensional representations.
The base diffusion model is kept fixed, 
and DP-DeepSets generates only the LoRA parameters.
The LoRA parameters are partitioned into 318 tokens, so that each token contains only components of the LoRA parameters corresponding to the same location.

When training DP-DeepSets, we employ the public chunk of CIFAR-10 containing 30,000 data points.
More concretely, we take the $k$-nearest neighbors of each data point in the Inception embedding space with $k=128$ to obtain 30,000 public datasets with 128 data points.
Then, we fit the parameters $\theta$ of DP-DeepSets using the loss function $\scL_\HN$ defined in \eqref{eq:hypernet-loss} with $N=128$ and $M=30{,}000$, where $\pidx{m}_{1:N}$ denotes the CLIP embeddings of the $k$-NN set for the $m$-th data point in the public chunk.

\paragraph{Comparison baselines}
As the most basic baseline, we apply DP-SGD to each private dataset separately.
In addition, we consider two types of public-data-guided baselines.
First, we consider finetuning the model on public data and then performing DP finetuning on private data. Specifically, we first finetune the model with standard Adam on the public chunk consisting of 30,000 examples, and then fit the model to the private data using DP-SGD.
Second, we compare our method with PDA-DPMD~\citep{amid2022public}. 
This method performs mirror descent using the loss on public data as the mirror map. In our experiments, we use the noise prediction loss for denoising on public data as the mirror map.

\paragraph{Results}
For each method, we conduct finetuning on 100 datasets, 
which yields 100 LoRA finetuned models.
Then, we generate 1,024 images from each finetuned model, and compute the FID between the generated images and the private dataset corresponding to the model.
For the DP cost, we fix $\delta=10^{-5}$, and conduct experiments with various values of $\epsilon$.
We report the average and the standard deviation of FID scores over 100 datasets.

The results are shown in \cref{fig:fid-for-epsilons}. 
We also display the generated samples from the model finetuned by our method with $\epsilon=2$ in \cref{fig:generated-images}.
We describe some observations below:

\begin{itemize}\setlength{\leftskip}{-8mm}\setlength{\itemsep}{1mm}
    \item \textbf{Superiority of DP-DeepSets over DP-SGD:}
        We observe that the FID scores of our method are much lower than those of DP-SGD.
        In particular, as $\epsilon$ decreases, DP-SGD exhibits a substantial increase in FID for $\epsilon \leq 16$, whereas the proposed method maintains nearly the same performance down to around $\epsilon = 1$.
        This is because, unlike DP-SGD, which repeatedly adds parameter-dimensional noise, the proposed method adds DP noise to a low-dimensional dataset representation, and does so only once.
        This reduces the scale of the DP noise and mitigates its adverse effects.
        This effect is particularly pronounced in our setting, where the private dataset is small, containing only 128 samples.
    \item \textbf{Stronger results than other public-data-guided approaches:}
        We observe that the proposed method outperforms baselines that use public data.
        This is mainly because the baseline methods rely on gradient-based parameter updates, and thus cannot avoid adding noise multiple times to high-dimensional parameters.
        This performance gap is also attributable to how public data are exploited.
        In the two comparison baselines, gradients computed from public and private data are handled separately and are simply used independently for parameter updates\footnote{
            In the exact update of PDA-DPMD, the auxiliary loss augmented with the public loss is minimized at each step, 
            so the private and public data interact nonlinearly.
            However, because this procedure is inefficient, the original paper \citep{amid2022public} proposes an approach that linearly approximates the public loss using its gradient.
            We adopt this approach in our experiments as well.
        }.
        In contrast, in our approach, the hypernetwork learns ``how to learn'' from public datasets.
        Given private data, it can use the network learned from public data to determine how to generate parameters, enabling more flexible parameter generation than simply using the public and private gradients separately for parameter updates.
\end{itemize}

\section{Conclusion}\label{sec:conclusion}

In this paper, we propose a new framework for DP learning using a public hypernetwork.
Specifically, we develop DP-DeepSets, a hypernetwork architecture for differentially private learning.
Since it performs DP-noise perturbation on a low-dimensional representation of private datasets only once, 
it suppresses the scale of DP noise, thereby achieving higher utility than gradient-based methods.
We also propose an end-to-end pipeline to train DP-DeepSets using public datasets.
In a theoretical analysis in a synthetic setting, we show that the proposed method performs better because it learns how to encode low-dimensional structure into dataset embeddings during training with public datasets.
Moreover, we conduct experiments on LoRA finetuning of diffusion models and demonstrate that our approach achieves higher utility than DP-SGD and other public-data-guided methods.

\vspace{-2mm}
\paragraph{Limitations and Future Work} 
While our proposed framework is broadly applicable to various modalities and tasks, in this paper, we focused on LoRA finetuning of diffusion models, particularly in the context of data generation. 
Extending our experiments to other modalities, including language, as well as to tasks beyond data generation, remains an important direction for future work.
In addition, our experiments considered a small-data regime, where gradient-based DP learning is often severely hindered by the limited dataset size.
A promising future direction is to investigate how the effectiveness of our method scales with dataset size, and to clarify up to what regime it can serve as a practical alternative to gradient-based approaches.


\bibliography{ref}

\newpage
\appendix

\section{Proof of \cref{cor:gaussian-mechanism-for-mean}}\label[appendix]{app:proof-hypernet-dp}

Here, we prove \cref{cor:gaussian-mechanism-for-mean}.
\begin{proof}
    Let $p_{1:N}$ and $p'_{1:N}$ be adjacent datasets.
    Without loss of generality, we assume that $p_n = p'_n$ for $n=2, \ldots, N$.
    Then, we have
	\begin{equation*}
        \| E(p_{1:N}, \xi) - E(p'_{1:N}, \xi)\|
        \leq \frac1N \norm{\clip_C(\phi_1(p_1)) - \clip_C(\phi_1(p'_1))}
        \leq \frac{2C}{N}.
	\end{equation*}
    \cref{prop:DP-gaussian-mechanism} implies that the map $E$ is $(\epsilon, \delta)$-DP.
    Since the network $\phi_2$ does not depend on $p_{1:N}$, the post-processing property of DP yields the claim.
\end{proof}

\section{Proof of \cref{thm:upper-bound-hyper-network}}\label[appendix]{app:proof-of-upper-bound-hn}

We define the empirical risk as
\begin{align*}
    \Rhat(\phi) &:= \frac{1}{M} \sum_{m=1}^M \qty[
        \frac1N \sum_{n=1}^N \qty{
            \phi(\pidx{m}_{1:N}, \xiidx{m})^\top \xidx{m}_n - \yidx{m}_n
        }^2
    ].
\end{align*}

\subsection{Upper Bound of the Empirical Error}

We first prove the following lemma, 
which upper bounds the empirical parameter estimation error
by the expected error.
\begin{lemma}\label{lem:upper-bound-empirical-error}
    Let $\phihat$ be the empirical minimizer in $\Phi(L, W, B, C, D)$, i.e.,
    \begin{align*}
        \phihat := \argmin_{\phi\in\Phi(L, W, B, C, D)} \Rhat(\phi).
    \end{align*}
    Moreover, let $\phiast \in \Phi(L, W, B, C, D)$, and define
    $\Delta_N$ as
    \begin{align*}
        \Delta_N := \bE \qty[ \norm{
            \phiast(p_{1:N}, \xi) 
            - w
        }^2].
    \end{align*}
    Then, for some constant $\cEmpirExpect$, it holds
    \begin{align*}
        \bE \qty[\frac{1}{M} \sum_{m=1}^M \norm{
            \phihat(\pidx{m}_{1:N}, \xiidx{m}) 
            - \phiast(\pidx{m}_{1:N}, \xiidx{m})
        }^2] \leq 
        \cEmpirExpect R_N \log^2\qty(\rme + \frac{4F^2}{R_N}),
    \end{align*}
    where
    \begin{align*}
        R_N := \qty(1 + \sqrt{\frac{\log(2N)}{d}})^4
        \cdot \qty(\Delta_N + \frac{d}{N} + \frac{d^2}{\sqrt{M}}).
    \end{align*}
\end{lemma}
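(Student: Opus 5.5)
Since $\phihat$ minimizes $\Rhat$ over $\Phi(L,W,B,C,D)$, we have $\Rhat(\phihat)\le\Rhat(\phiast)$. The plan is a fixed-design regression argument in the style of \citet{schmidt2020nonparametric}, adapted to a vector-valued regression function whose ``design'' on dataset $m$ is the collection $\{\xidx{m}_n\}_{n\le N}$.

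\textbf{Step 1: basic inequality.} Write $\yidx{m}_n = w^{(m)\top}\xidx{m}_n + \zidx{m}_n$ and set $\hat v_m := \phihat(\pidx{m}_{1:N},\xiidx{m})$, $v^\ast_m := \phiast(\pidx{m}_{1:N},\xiidx{m})$. Expanding the squares in $\Rhat(\phihat)\le\Rhat(\phiast)$ gives
\begin{align*}
\frac1M\sum_m (\hat v_m - w^{(m)})^\top \hat\Sigma_m (\hat v_m - w^{(m)})
\le \frac1M\sum_m (v^\ast_m - w^{(m)})^\top \hat\Sigma_m (v^\ast_m - w^{(m)})
+ \frac{2}{MN}\sum_{m,n} \zidx{m}_n\, (\hat v_m - v^\ast_m)^\top \xidx{m}_n,
\end{align*}
where $\hat\Sigma_m := \frac1N\sum_n \xidx{m}_n\xidx{m}_n{}^\top$.

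\textbf{Step 2: replacing $\hat\Sigma_m$ by the identity.} Since $\bE[xx^\top]=I_d$ for $x\sim\unif(\sqrt d\,\bS^{d-1})$, matrix concentration for sub-Gaussian vectors gives $\|\hat\Sigma_m - I\|_{\mathrm{op}}\lesssim \sqrt{d/N}+\sqrt{\log(2N)/N}$ with high probability. The awkward factor $(1+\sqrt{\log(2N)/d})^4$ suggests the paper instead controls $\lambda_{\min}$ and $\lambda_{\max}$ of $\hat\Sigma_m$ through a bound of the form $(1\pm\sqrt{\log(2N)/d})^2$; squaring such a ratio, once from each side, produces the fourth power. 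The difficulty is that $\hat v_m$ depends on the $x$'s, so the quadratic forms cannot be bounded pointwise in expectation. I would therefore use uniform (over $v$ in the $F$-ball) eigenvalue bounds on a good event and handle the bad event using $\|\hat v_m - w^{(m)}\|\lesssim F$.

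\textbf{Step 3: the noise term.} This is the main obstacle, because $\hat v_m$ depends on $\zidx{m}_n$ through $\pidx{m}_{1:N}$, and the term feeds into $\phihat$ through the hypernetwork. I would take a $\tau$-cover of $\Phi$ in sup-norm with log-covering number $V\lesssim L W^2\log(BW/\tau)$, polylogarithmic times $d^2$ under the stated scalings. For a fixed cover element the term is conditionally Gaussian given $(\pidx{m}_{1:N},\xiidx{m})$ only after some decoupling, so the plan is to bound it with a maximal inequality and Cauchy--Schwarz, yielding
\[
2\sqrt{\hat D\cdot (V+\log)/ (MN)}\cdot(\text{norm factor}) + \tau\text{-terms}.
\]
If decoupling fails, I would fall back on a cruder Rademacher/uniform-deviation bound between $\Rhat$ and its expectation. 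That bound is $O(F^2\sqrt{V/M})$, which matches the $d^2/\sqrt M$ term in $R_N$ and is probably what the authors use: the $1/\sqrt M$ rate signals a slow-rate uniform deviation rather than a localized fast rate.

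\textbf{Step 4: combining the terms.} Take expectations. The $\phiast$ term contributes $\Delta_N$ times the eigenvalue factor. The $d/N$ term comes from the noise projected onto the $d$-dimensional span, since $\bE\|\frac1N\sum_n z_n x_n\|^2 = d/N$. The uniform deviation contributes $d^2/\sqrt M$. Solving the resulting quadratic inequality $a\le R + c\sqrt{aR}$, using $\|\hat v_m - v^\ast_m\|^2\le 2\|\hat v_m - w^{(m)}\|^2 + 2\|v^\ast_m - w^{(m)}\|^2$, gives the bound $R_N$. The factor $\log^2(\rme+4F^2/R_N)$ comes from choosing the covering radius $\tau\simeq R_N$ and from truncating at the bounded range $4F^2$ of $\|\hat v_m - v^\ast_m\|^2$ when integrating the tail.
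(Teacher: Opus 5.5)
Your basic inequality in Step~1 is right, but there is a genuine gap in Step~3, which you flag as the main obstacle. Neither route you propose for the noise term works as stated.

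The Schmidt-Hieber-style decoupling cannot go through here. Even for a fixed $\phi$ in a cover, $\phi(\pidx{m}_{1:N},\xiidx{m})$ depends on $\zidx{m}_{1:N}$ through the labels $\yidx{m}_n$ that the hypernetwork reads. So $\frac1N\sum_n \zidx{m}_n\,\phi(\cdot)^\top \xidx{m}_n$ is not a conditionally centered Gaussian. For instance, the network of Theorem~\ref{thm:dp-ds-construct} essentially averages $y_n P_k x_n$, and its correlation with $g:=\frac1N\sum_n z_n x_n$ has mean of order $k/N$.

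Your fallback, a uniform deviation between $\Rhat$ and its mean, does not remove this either. The population excess risk still contains the in-sample term $-2\bE[(\phi(p_{1:N},\xi)-\phiast(p_{1:N},\xi))^\top g]$. Moreover, the lemma's left-hand side is an empirical average, so you would need a second uniform deviation to return to it. You also misattribute the $d^2/\sqrt M$ term: in this lemma it has nothing to do with the complexity of $\Phi$. The function-class deviation only enters later, in the main theorem, as $LW/\sqrt M$.

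The missing idea is a one-line Cauchy--Schwarz over $m$ that removes $\phihat$ from the right-hand side entirely. With $\hidx{m}=\phihat(\cdot)-\phiast(\cdot)$ and $\Deltaidx{m}=\widx{m}-\phiast(\cdot)$, the basic inequality is
$\frac1M\sum_m(\hidx{m})^\top\Sigmaidx{m}\hidx{m}\le\frac2M\sum_m(\hidx{m})^\top(\Sigmaidx{m}\Deltaidx{m}+\gidx{m})$.
Applying Cauchy--Schwarz and setting $\lambda_{\min}:=\min_m\lambda_{\min}(\Sigmaidx{m})$ gives
\[
\frac1M\sum_m\|\hidx{m}\|^2\le\frac{4}{\lambda_{\min}^2}\cdot\frac1M\sum_m Z_m,\qquad Z_m:=\|\Sigmaidx{m}\Deltaidx{m}+\gidx{m}\|^2 .
\]
Because $\phiast$ is a fixed network, the $Z_m$ are i.i.d., and they satisfy:
\begin{itemize}
\item $\bE Z_m\lesssim\Delta_N+d/N$, using $\bE\|I-\Sigmaidx{1}\|_{\rmop}^2\lesssim d/N$ and $\bE\|\gidx{1}\|^2=d/N$;
\item $\|Z_m\|_{\psi_1}\lesssim d^2$, since $\|\Sigmaidx{m}\|_{\rmop}\le d$ deterministically.
\end{itemize}
Bernstein's inequality then yields exactly the $d^2/\sqrt M$ term. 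You pay $d/N$ for the noise instead of getting a fast rate, but $d/N$ is already in $R_N$, so no covering or empirical-process argument is needed.

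The factor $(1+\sqrt{\log(2N)/d})^4$ is $1/\lambda_{\min}^2$, not a $\lambda_{\max}/\lambda_{\min}$ ratio. It is controlled by the small-ball estimate of Lemma~\ref{lem:eigenvalue-concentration}: $1/\lambda_{\min}\lesssim\eta^{-1/2}(1+\sqrt{2\log(2N)/d})^2$ with probability $1-\eta/2$. The $\log^2(\rme+4F^2/R_N)$ comes from integrating the resulting $\eta^{-1}\log(\rme/\eta)$ tail up to the deterministic bound $4F^2$; no covering radius is involved. Your per-$m$ good-event treatment of $\Sigmaidx{m}$ in Step~2 would also work once combined with this Cauchy--Schwarz step.
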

\begin{proof}
First, it holds $\Rhat(\phihat) \leq \Rhat(\phiast)$, 
which implies
\begin{align*}
    &\frac{1}{M} \sum_{m=1}^M \qty[
        \frac1N \sum_{n=1}^N \qty{
            \phihat(\pidx{m}_{1:N}, \xiidx{m})^\top \xidx{m}_n - \yidx{m}_n
        }^2
    ] \\
    &\hspace{150pt}\leq 
    \frac{1}{M} \sum_{m=1}^M \qty[
        \frac1N \sum_{n=1}^N \qty{
            \phiast(\pidx{m}_{1:N}, \xiidx{m})^\top \xidx{m}_n - \yidx{m}_n
        }^2
    ]
\end{align*}
Let $\hidx{m} := \phihat(\pidx{m}_{1:N}, \xiidx{m}) - \phiast(\pidx{m}_{1:N}, \xiidx{m})$
and $\Deltaidx{m} := \widx{m} - \phiast(\pidx{m}_{1:N}, \xiidx{m})$.
Since it holds
\begin{align*}
\yidx{m}_n 
= (\widx{m})^\top \xidx{m}_n + \zidx{m}_n
= (\phiast(\pidx{m}_{1:N}, \xiidx{m}) + \Deltaidx{m})^\top \xidx{m}_n + \zidx{m}_n, 
\end{align*}
we have
\begin{align*}
    &\frac{1}{M} \sum_{m=1}^M \qty[
        \frac1N \sum_{n=1}^N \qty{
            \qty(\hidx{m})^\top \xidx{m}_n 
            - \qty(\Deltaidx{m})^\top \xidx{m}_n 
            - \zidx{m}_n
        }^2
    ] \\
    &\hspace{150pt} \leq 
    \frac{1}{M} \sum_{m=1}^M \qty[
        \frac1N \sum_{n=1}^N \qty[
            (\Deltaidx{m})^\top \xidx{m}_n + \zidx{m}_n
        ]^2
    ].
\end{align*}
We have
\begin{align*}
    \frac{1}{M} \sum_{m=1}^M \qty[
        (\hidx{m})^\top \Sigmaidx{m} \hidx{m}
        - 2 (\hidx{m})^\top \Sigmaidx{m} \Deltaidx{m}
        - 2 (\gidx{m})^\top \hidx{m}
    ] \leq 0, 
\end{align*}
where $\Sigmaidx{m} := \frac1N \sum_{n=1}^N \xidx{m}_n (\xidx{m}_n)^\top$ 
and $\gidx{m} := \frac1N \sum_{n=1}^N \zidx{m}_n \xidx{m}_n$.
This implies that
\begin{align*}
    \frac{1}{M} \sum_{m=1}^M \qty{
        (\hidx{m})^\top \Sigmaidx{m} \hidx{m}
    }
    \leq \frac{2}{M} \sum_{m=1}^M \qty{
        (\hidx{m})^\top \qty(\Sigmaidx{m} \Deltaidx{m} + \gidx{m})
    }.
\end{align*}
We have
\begin{align*}
    \sum_{m=1}^M (\hidx{m})^\top \Sigmaidx{m} \hidx{m}
    &\geq  \sum_{m=1}^M\lambdaidx{m}_{\min} \norm{\hidx{m}}^2
    \geq  \lambda_{\min} \sum_{m=1}^M \norm{\hidx{m}}^2,
\end{align*}
where $\lambda_{\min} := \min_{m=1, \ldots, M} \lambdaidx{m}_{\min}$,
and
\begin{align*}
    \sum_{m=1}^M \qty{
        (\hidx{m})^\top \qty(\Sigmaidx{m} \Deltaidx{m} + \gidx{m})
    }
    &\leq \sum_{m=1}^M \norm{\hidx{m}} \norm{\Sigmaidx{m} \Deltaidx{m} + \gidx{m}} \\
    &\leq \sqrt{\sum_{m=1}^M \norm{\hidx{m}}^2}
        \sqrt{\sum_{m=1}^M \norm{\Sigmaidx{m} \Deltaidx{m} + \gidx{m}}^2},
\end{align*}
Therefore, we have
\begin{align*}
    \frac{\lambda_{\min}^2}{M} \sum_{m=1}^M \norm{\hidx{m}}^2
    \leq \frac{4}{M}\sum_{m=1}^M \norm{\Sigmaidx{m} \Deltaidx{m} + \gidx{m}}^2.
\end{align*}
To bound the right-hand side, we bound its expectation as follows:
\begin{align*}
    \bE \qty[\norm{\Sigmaidx{m} \Deltaidx{m} + \gidx{m}}^2]
    &\leq 2\bE \qty[\norm{\Sigmaidx{1} \Deltaidx{1}}^2] 
        + 2\bE \qty[\norm{\gidx{1}}^2] \\
    &\leq 2\bE \qty[\norm{\Sigmaidx{1}}_{\rmop}^2 \norm{\Deltaidx{1}}^2] 
        + 2\bE \qty[\norm{\gidx{1}}^2] \\
    &\leq 4\bE \qty[\norm{\Deltaidx{1}}^2] 
        + 4\bE \qty[\norm{I - \Sigmaidx{1}}_{\rmop}^2 \norm{\Deltaidx{1}}^2] 
        + 2\bE \qty[\norm{\gidx{1}}^2] \\
    &\leq 4 \Delta_N
        + 4F^2 \bE \qty[\norm{I - \Sigmaidx{1}}_{\rmop}^2] 
        + 2\bE \qty[\norm{\gidx{1}}^2].
\end{align*}
Using \cref{lem:eigenvalue-concentration}, the second term can be upper bounded as
\begin{align*}
    \bE \qty[\norm{I - \Sigmaidx{1}}_{\rmop}^2] 
    \leq \frac{\cEigenExpect d}{N}, 
\end{align*}
for some constant $\cEigenExpect$.
Moreover, the third term can be upper-bounded as
\begin{align*}
    \bE \qty[\norm{\gidx{1}}^2]
    = \frac{1}{N^2} \sum_{n=1}^N \bE \qty[\|\zidx{m}_n \xidx{m}_n\|^2]
    = \frac{d}{N}.
\end{align*}
Therefore, we have
\begin{align*}
    \bE \qty[\norm{\Sigmaidx{m} \Deltaidx{m} + \gidx{m}}^2]
    \leq 4 \Delta_N + \frac{(1 + \cEigenExpect) d}{N}.
\end{align*}
We apply Bernstein's inequality to the i.i.d. random variables
\begin{align*}
    Z_m := \norm{\Sigmaidx{m}\Deltaidx{m}+\gidx{m}}^2,
    \quad m=1,\ldots,M.
\end{align*}
The outputs of the networks are clipped, and $w$ is supported on a bounded set;
hence there exists a constant $C_0>0$ such that $\norm{\Deltaidx{m}}\leq C_0$ for all $m$.
Since $\norm{x_n}=\sqrt d$, we have $\norm{\Sigmaidx{m}}_{\rmop}\leq d$ deterministically, and therefore
\begin{align*}
    Z_m
    \leq 2 C_0^2 d^2 + 2\norm{\gidx{m}}^2.
\end{align*}
Conditionally on $x_{1:N}^{(m)}$, the vector $\gidx{m}=N^{-1}\sum_{n=1}^N z_n^{(m)}x_n^{(m)}$
is Gaussian with covariance $\Sigmaidx{m}/N$.
Thus $\norm{\gidx{m}}^2$ is sub-exponential, and the preceding display implies that
there exists a constant $C_1>0$ such that
\begin{align*}
    \norm{Z_m - \bE Z_m}_{\psi_1} \leq C_1 d^2,
\end{align*}
for all $m$.
Bernstein's inequality for sub-exponential random variables then implies that, with probability $1 - \eta/2$, it holds 
\begin{align*}
     \frac{1}{M}\sum_{m=1}^M \norm{\Sigmaidx{m} \Deltaidx{m} + \gidx{m}}^2
     &\leq \bE \qty[\norm{\Sigmaidx{m} \Deltaidx{m} + \gidx{m}}^2] 
        + \cBernst d^2 \qty(\sqrt{\frac{\log(2/\eta)}{M}} + \frac{\log(2/\eta)}{M}) \\
    &\leq 4 \Delta_N + \frac{(1 + \cEigenExpect) d}{N}
        + \cBernst d^2 \qty(\sqrt{\frac{\log(2/\eta)}{M}} + \frac{\log(2/\eta)}{M}).
\end{align*}
Hence, we have
\begin{align*}
    \frac{1}{M} \sum_{m=1}^M \norm{\hidx{m}}^2
    \leq \frac{4}{\lambda_{\min}^2}
    \cdot \qty{4 \Delta_N + \frac{(1 + \cEigenExpect) d}{N}
        + \cBernst d^2 \qty(\sqrt{\frac{\log(2/\eta)}{M}} + \frac{\log(2/\eta)}{M})
    },
\end{align*}
with probability $1 - \eta/2$.
Using \cref{lem:eigenvalue-concentration}, 
if $N \geq 2d$ and $d \geq 3$ (and therefore $N \geq d+3$), 
we have
\begin{align*}
    \bP \qty(\lambdamin(\Sigma) \leq \frac{\sqrt{\eta}}{2 K_{N,d}^2}) 
    \leq 2\qty(\frac{\eta^{1/4}}{\sqrt{2}})^{N-d+1}
    \leq 2 \cdot \qty(\frac{\eta^{1/4}}{\sqrt{2}})^4
    = \frac\eta2.
\end{align*}
This implies that it holds
\begin{align*}
    \frac{1}{\lambdamin(\Sigma)} 
    &\leq \frac{2 K_{N,d}^2}{\sqrt{\eta}}
    \leq \frac{2\cEigenMin^2}{\sqrt{\eta}} \frac{N}{d}
            \frac{(\sqrt{d} + \sqrt{2\log(2N)})^2}
                {(\sqrt{N}-\sqrt{d})^2} \\
    &\leq \frac{2\cEigenMin^2}{\sqrt{\eta}}
            \frac{\qty(1 + \sqrt{\frac{2\log(2N)}{d}})^2}
                {\qty(1 - \sqrt{\frac{d}{N}})^2}
    \leq \frac{2\qty(1-\frac{1}{\sqrt2})^{-2}\cEigenMin^2}{\sqrt{\eta}}
            \qty(1 + \sqrt{\frac{2\log(2N)}{d}})^2,
\end{align*}
with probability $1 - \eta/2$.
Therefore, we have
\begin{align*}
    \frac{1}{M} \sum_{m=1}^M \norm{\hidx{m}}^2
    \leq \frac{\cEmpirHp}{\eta}
            \qty(1 + \sqrt{\frac{\log(2N)}{d}})^4
    \cdot \qty{\Delta_N + \frac{d}{N}
        + d^2 \qty(\sqrt{\frac{\log(2/\eta)}{M}} + \frac{\log(2/\eta)}{M})
    },
\end{align*}
with probability $1 - \eta$.
Therefore, for any $\eta \in (0,1)$, we have
\begin{align*}
    \frac{1}{M} \sum_{m=1}^M \norm{\hidx{m}}^2
    \leq \frac{\cEmpirExpect}{\eta} R_N \log\qty(\frac{\rme}{\eta}),
\end{align*}
with probability at least $1-\eta$.
On the other hand, since the output of every network in $\Phi(L,W,B,C,D)$ is clipped by $\clip_F$, it holds deterministically that
\begin{align*}
    \frac{1}{M} \sum_{m=1}^M \norm{\hidx{m}}^2 \leq 4F^2.
\end{align*}
The preceding high-probability bound implies that
\begin{align*}
    \bP\qty(\frac{1}{M} \sum_{m=1}^M \norm{\hidx{m}}^2 > t)
    \leq \min\qty{1, \frac{\cEmpirExpect R_N}{t}
        \log\qty(\rme + \frac{t}{R_N})}
\end{align*}
for all $t>0$.
Using this tail bound together with the deterministic bound above, we obtain
\begin{align*}
    \bE\qty[\frac{1}{M} \sum_{m=1}^M \norm{\hidx{m}}^2]
    &= \int_0^{4F^2}
        \bP\qty(\frac{1}{M} \sum_{m=1}^M \norm{\hidx{m}}^2 > t) \dd{t} \\
    &\leq \cEmpirExpect R_N
        \log^2\qty(\rme + \frac{4F^2}{R_N}),
\end{align*}
by increasing $\cEmpirExpect$ if necessary.
This completes the proof.
\end{proof}

\subsection{Difference Between the Empirical Error and the Test Error}

Here, we prove the following lemma, 
which upper-bounds the difference between the empirical error and the test error between two MLPs.
Eventually, we will use this lemma by setting $\phi$ as the empirical minimizer, and $\phiast$ as the network constructed to achieve small error.
\begin{lemma}\label{lem:diff-empirical-error-test-error}
    For any $\phi, \phiast \in \Phi(L, W, B, C, D)$, it holds
    \begin{align*}
        &\bE \qty[
            \sup_{\phi\in\Phi} \abs{\bE[f_\phi(p_{1:N}, \xi)] - \frac1M\sum_{m=1}^M f_\phi(\pidx{m}_{1:N}, \xiidx{m})} 
        ] \\
        &\qquad \leq \frac{\cGenErr LW}{\sqrt{M}}
            \sqrt{
            \log\qty{
                d L (B \vee 1) (W+1) (C \vee 1)
                \cdot \epsilon^{-1} \log(\delta^{-1})
            }}.
    \end{align*}
    where $\cGenErr$ is a constant.
\end{lemma}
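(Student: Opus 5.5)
This is a uniform deviation bound over the class $\Phi=\Phi(L,W,B,C,D)$. I read $f_\phi$ as the per-dataset loss, or as the squared error $\|\phi(p_{1:N},\xi)-w\|^2$; either way it is a bounded function of $(p_{1:N},\xi)$ indexed by $\phi$. I will prove the bound by symmetrization followed by a covering-number (Dudley / finite-class) argument.

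\textbf{Step 1: symmetrization.} The $M$ public samples $(\pidx{m}_{1:N},\xiidx{m})$ are i.i.d. copies of $(p_{1:N},\xi)$. The standard symmetrization inequality therefore gives
$$\bE\sup_\phi\Bigl|\bE f_\phi-\tfrac1M\sum_m f_\phi(\pidx{m}_{1:N},\xiidx{m})\Bigr|\le 2\,\bE\,\mathrm{Rad}_M(\{f_\phi\}).$$
Because of the clipping $\clip_F$ on the output and the bounded support of $w$, $f_\phi$ is bounded, or sub-exponential after a truncation of the Gaussian $z$ and $\xi$ at logarithmic level. Any such truncation costs only additional $\log$ factors.

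\textbf{Step 2: Lipschitz dependence on the weights.} I will show that $\phi\mapsto\phi(p_{1:N},\xi)$ is Lipschitz in the network parameters, measured in sup-norm, uniformly over inputs of bounded size. Perturbing each entry of each $A_i,b_i$ by at most $\tau$ changes an MLP output by at most $\tau L (W(B\vee1)+1)^{L}(\|{\rm input}\|+1)$. This follows from the usual layer-by-layer telescoping argument with ReLU $1$-Lipschitz.

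For $\phi_1$, the input is $p_n=(x_n,y_n)$ with $\|x_n\|=\sqrt d$ and $|y_n|$ at most logarithmic on the good event. For $\phi_2$, the input is the clipped mean plus $\xi$. Its norm is at most $C+\|\xi\|$, and $\|\xi\|\lesssim \sdp\sqrt{D\log}$; with $\sdp^2\simeq C^2\log\delta^{-1}/(\epsilon N)^2$ this contributes the factor $C\,\epsilon^{-1}\log\delta^{-1}$ inside the logarithm. Since $\clip_C$ and $\clip_F$ are $1$-Lipschitz and the mean is $1$-Lipschitz, the composition inherits the bound. The final loss is Lipschitz in the output on the bounded region, which brings in the factor $d$ through $\|x\|\,\|\phi\|$.

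\textbf{Step 3: covering and Dudley.} The class has at most $P\lesssim LW(W+1)$ parameters per MLP. Its $\tau$-cover in sup-norm therefore has log-cardinality
$$P\log\bigl(\Lambda/\tau\bigr),\qquad \Lambda=(dL(B\vee1)(W+1)(C\vee1)\epsilon^{-1}\log\delta^{-1})^{O(L)}$$
from Step 2. Either Dudley's entropy integral or Massart's finite-class lemma applied at scale $\tau=1/\sqrt M$ then gives
$$\mathrm{Rad}\lesssim\sqrt{P L\log(\cdots)/M}.$$
Since $P L\lesssim (LW)^2$ up to a factor of $W$, this matches the stated $LW\sqrt{\log(\cdots)}/\sqrt M$ form. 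Any discrepancy in the exact power of $W$ can be absorbed into how $\cGenErr$ and the sizes are tracked; I would count parameters carefully to land on $LW$.

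\textbf{Main obstacle.} The main difficulty is Step 2: controlling the unbounded inputs ($y_n$ and $\xi$ are Gaussian) so that the Lipschitz constant and the range of $f_\phi$ are only polynomial in $d,B,C,\epsilon^{-1},\log\delta^{-1}$. I would handle this by restricting to a high-probability event on which all $|y_n|,\|\xi\|$ are $O(\sqrt{\log(NM)})$. On the complement I would use the deterministic bound on the loss (from $\clip_F$) together with Gaussian tails to show its contribution is negligible.
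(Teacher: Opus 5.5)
Your outline matches the paper's proof: symmetrization, a parameter-space cover built from Lipschitzness of the DP-DeepSets output in the weights (with $\clip_C$, the mean, and $\clip_F$ all $1$-Lipschitz), and Dudley's entropy integral. Your count $P\lesssim LW(W+1)$ gives $PL\lesssim L^2W^2$ exactly. This matches the paper's bound $\log\mathcal{N}\le 4L^2W^2\log(\cdot)$, so there is no stray factor of $W$ to absorb. In the paper, $f_\phi=\|\phi(p_{1:N},\xi)-\phiast(p_{1:N},\xi)\|^2$, which is why $\phiast$ appears in the statement. It is bounded by $4F^2$ and is $4F$-Lipschitz in the output, so your argument applies to it verbatim. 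The per-dataset-loss reading, by contrast, would be unbounded and would put a range factor growing with $d$ in front of the bound.

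Where you differ is the treatment of the Gaussian inputs, and as proposed this proves slightly less than stated.

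\emph{Your route.} The bad event must cost at most $4F^2\,\bP(\text{bad})\lesssim 1/\sqrt{M}$. That forces your union-bound truncation of all $|y^{(m)}_n|$ and $\|\xi^{(m)}\|$ to sit at level about $\sqrt{\log(NM)}$, which puts an extra $\log\log(NM)$ inside the square root. Massart's lemma at scale $\tau=1/\sqrt{M}$ would additionally put a $\log M$ term there. The stated bound has neither $N$ nor $M$ inside the logarithm.

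\emph{How the paper avoids this.} First, perturbing the weights of $\phi_1$ by $\delta$ moves the clipped mean embedding by at most $L(B\vee1)^{L-1}(W+1)^L\cdot\frac1N\sum_n\|p_n\|_\infty\,\delta$. This is an average rather than a maximum over $n$, so the sample-dependent Lipschitz factor is $\frakC_2(z_{1:N},\xi)=4F\bigl(2+\|\alpha\|+\frac1N\sum_n|z_n|+\|\xi\|_\infty\bigr)$. Second, the paper covers in the empirical norm $L^2(P_M)$, where only $\frac1M\sum_m\frakC_2(z^{(m)}_{1:N},\xi^{(m)})^2$ enters. It bounds this quantity by Markov with probability $1-\eta$, at the price of $\eta^{-1/2}$ inside the logarithm. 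It then integrates the resulting tail bound on the empirical Rademacher complexity over $\eta\in(0,1)$. That costs only a constant, since $\int_0^1\sqrt{\log(S\eta^{-1/2})}\,d\eta\le\sqrt{\log S+1/2}$.

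Using Dudley rather than Massart, together with this averaging-plus-Markov device, gives exactly the stated form. Your truncation route is otherwise sound, and its doubly logarithmic loss would be immaterial downstream, but it does not yield the inequality as written.
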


\begin{proof}
For $\phi\in\Phi(L, W, B, C, D)$, let $\param(\phi) \in [-B, B]^{\scC}$ denote the vector obtained by concatenating its parameters into a single vector.
In the following, we consider the regime in which $W \geq \max\qty{d, d', D}$,
where $d$ and $d'$ denote the dimension of input and output, respectively.

For a fixed $\phiast \in \Phi(L, W, B, C, D)$,
we define the function class $\scF(L, W, B, C, D)$ as
\begin{align*}
    &\scF(L, W, B, C, D) \\
    &\quad := \left\{
        f_\phi: (p_{1:N}, \xi) \mapsto 
        \norm{\phi(p_{1:N}, \xi) - \phi^\ast(p_{1:N}, \xi)}^2
        \relmiddle| \phi\in\Phi(L, W, B, C, D)
    \right\}.
\end{align*}
We first remark that the number of parameters $\scD(L, W, D)$ of neural networks in $\Phi(L, W, B, C, D)$ is bounded as
\begin{align*}
    \scD(L, W, D)
    := d'W + W^2 \cdot (L-2) + WD + DW + W^2 \cdot (L-2) + Wd
    \leq 2LW^2.
\end{align*}
There exists a $\delta$-cover $\scC$ of $\{\param(\phi) \mid \phi \in \Phi(L, W, B, C, D)\} = [-B, B]^{\scD}$ with $|\scC| \leq (B/\delta)^\scD$
with respect to the $\|\cdot\|_\infty$ norm.
Therefore, for any $\phi \in \Phi(L, W, B, C, D)$, we can choose $\param(\phi') \in \scC$ such that
$\|\param(\phi) - \param(\phi')\|_\infty \leq \delta$.
Then, for any $x_{1:N} \in \bR^{d\times N}$, $y_{1:N} \in \bR^{N}$
and $\xi \in \bR^{D}$, we have
\begin{align*} 
    \|\phi_1(p_n) - \phi'_1(p_n)\|_\infty
    \leq L (B \vee 1)^{L-1} (W+1)^L \cdot \norm{p_n}_\infty \delta.
\end{align*}
The proof of this inequality can be found in \citet{suzuki2018adaptivity}, for example.
Then, we have
\begin{align*} 
    &\norm{\qty(\frac1N\sum_{n=1}^N \clip_C(\phi_1(p_n)) + \xi)
        - \qty(\frac1N\sum_{n=1}^N \clip_C(\phi'_1(p_n)) + \xi)}_\infty \\
    &\hspace{100pt}
        \leq \frac{1}{N} \sum_{n=1}^N 
        \|\clip_C(\phi_1(p_n)) - \clip_C(\phi'_1(p_n))\|_\infty \\
    &\hspace{100pt}
        \leq \frac{1}{N} \sum_{n=1}^N 
        \|\phi_1(p_n) - \phi'_1(p_n)\|_\infty \\
    &\hspace{100pt}
        \leq L (B \vee 1)^{L-1} (W+1)^L 
            \cdot \qty(\frac1N\sum_{n=1}^N \norm{p_n}_\infty) \delta.
\end{align*}
We can bound the term $\frac1N\sum_{n=1}^N \norm{p_n}_\infty$ as follows:
\begin{align*}
    \frac1N\sum_{n=1}^N \norm{p_n}_\infty
    &\leq \frac1N\sum_{n=1}^N \qty(\norm{x_n}_2 + \abs{y_n})
    \leq \frac1N\sum_{n=1}^N \qty(\norm{x_n}_2 
            + \norm{x_n}_2 \norm{w}_2 + \abs{z_n}) \\
    &\leq (1 + \|\alpha\|) \frac1N\sum_{n=1}^N \norm{x_n}_2 
            + \frac1N\sum_{n=1}^N \abs{z_n} \\
    &\leq d(1 + \|\alpha\|)+ \frac1N\sum_{n=1}^N \abs{z_n} \\
    &\leq d \cdot \frakC_1(z_{1:N}), 
\end{align*}
where we define $\frakC_1(z_{1:N}) := 1 + \|\alpha\| + \frac1N\sum_{n=1}^N \abs{z_n}$.
Therefore, we have
\begin{align*} 
    &\norm{\qty(\frac1N\sum_{n=1}^N \clip_C(\phi_1(p_n)) + \xi)
        - \qty(\frac1N\sum_{n=1}^N \clip_C(\phi'_1(p_n)) + \xi)}_\infty \\
    &\hspace{100pt}
        \leq d L (B \vee 1)^{L-1} (W+1)^L 
            \cdot \frakC_1(z_{1:N}) \delta.
\end{align*}
Since $\clip_F$ is 1-Lipschitz and $\phi_2$ is $(BW)^L$-Lipschitz, 
we have
\begin{align*}
    &\|\phi(p_{1:N},\xi) - \phi'(p_{1:N}, \xi)\|_\infty \\
    &\quad\leq \norm{
        \phi_2\qty(\frac1N\sum_{n=1}^N \clip_C(\phi_1(p_n)) + \xi)
        - \phi'_2\qty(\frac1N\sum_{n=1}^N \clip_C(\phi'_1(p_n)) + \xi)
        }_\infty \\
    &\quad\leq \norm{
        \phi_2\qty(\frac1N\sum_{n=1}^N \clip_C(\phi_1(p_n)) + \xi)
        - \phi_2\qty(\frac1N\sum_{n=1}^N \clip_C(\phi'_1(p_n)) + \xi)
        }_\infty \\
    &\hspace{50pt}
        + \norm{
        \phi_2\qty(\frac1N\sum_{n=1}^N \clip_C(\phi'_1(p_n)) + \xi)
        - \phi'_2\qty(\frac1N\sum_{n=1}^N \clip_C(\phi'_1(p_n)) + \xi)
        }_\infty \\
    &\quad\leq (BW)^L \cdot d L (B \vee 1)^{L-1} (W+1)^L 
            \cdot \frakC_1(z_{1:N}) \delta
        +  L (B \vee 1)^{L-1} (W+1)^L \cdot \qty(C + \|\xi\|_\infty)
            \cdot \delta \\
    &\quad\leq (\frakC_1(z_{1:N}) + C + \norm{\xi}_\infty) 
        \cdot d L (B \vee 1)^{2L-1} (W+1)^{2L} \cdot \delta.
\end{align*}
By the definition of $f_\phi$, we have
\begin{align*}
    &\abs{f_\phi(p_{1:N}, \xi) - f_{\phi'}(p_{1:N}, \xi)} \\
    &\quad = \abs{
        \norm{\phi(p_{1:N}, \xi) - \phi^\ast(p_{1:N}, \xi)}^2
        - \norm{\phi'(p_{1:N}, \xi) - \phi^\ast(p_{1:N}, \xi)}^2
    } \\
    &\quad \leq 4F \cdot \abs{
        \norm{\phi(p_{1:N}, \xi) - \phi^\ast(p_{1:N}, \xi)}
        - \norm{\phi'(p_{1:N}, \xi) - \phi^\ast(p_{1:N}, \xi)}
    } \\
    &\quad \leq 4F
        \norm{\phi(p_{1:N}, \xi) - \phi'(p_{1:N}, \xi)} \\
    &\quad \leq 4F \sqrt{d}(\frakC_1(z_{1:N}) + C + \norm{\xi}_\infty)
        \cdot d L (B \vee 1)^{2L-1} (W+1)^{2L} \cdot \delta \\
    &\quad \leq 4F 
        \qty(\frac{1}{N} \sum_{n=1}^N \abs{z_n}
        + \norm{\xi}_\infty + \|\alpha\| + 2) 
        \cdot d^{3/2} (C \vee 1) L (B \vee 1)^{2L-1} (W+1)^{2L} \cdot \delta \\
    &\quad \leq 4F 
        \frakC_2(z_{1:N}, \xi)
        \cdot d^{3/2} (C \vee 1) L (B \vee 1)^{2L-1} (W+1)^{2L} \cdot \delta,
\end{align*}
where we define 
$\frakC_2(z_{1:N}, \xi) 
:= 4F \cdot (\frakC_1(z_{1:N}) + \norm{\xi}_\infty + 1)$.
Let $P_M$ be the empirical measure of $(p_{1:N}, w, \xi)$ defined by $M$ data.
Then, we have
\begin{align*}
    &\norm{f_\phi - f_{\phi'}}_{L^2(P_M)}^2 \\
    &= \int\abs{f_\phi(p_{1:N}, \xi) - f_{\phi'}(p_{1:N}, \xi)}^2 
        \dd P_M(p_{1:N}, \xi) \\
    &\leq \qty(
            4F \cdot d^{3/2} (C \vee 1) L (B \vee 1)^{2L-1} (W+1)^{2L} \cdot \delta
        )^2 \cdot \frac1M \sum_{m=1}^M \frakC_2(\zidx{m}_{1:N}, \xiidx{m})^2.
\end{align*}
Since it holds
\begin{align*}
    \bE \qty [\frac1M \sum_{m=1}^M \frakC_2(\zidx{m}_{1:N}, \xiidx{m})^2]
    = \bE_{z_{1:N}, \xi} \qty [\frakC_2(z_{1:N}, \xi)^2] 
    \leq \cMarkovFrakC^2 (\sdp^2 + 1) \log(2d), 
\end{align*}
for some constant $\cMarkovFrakC$, we have
\begin{align*}
    \frac1M \sum_{m=1}^M \frakC_2(\zidx{m}_{1:N}, \xiidx{m})^2
    \leq \eta^{-1} \bE \qty[
        \frac1M \sum_{m=1}^M \frakC_2(\zidx{m}_{1:N}, \xiidx{m})^2
    ]
    \leq \cMarkovFrakC^2 (\sdp^2 + 1) \log(2d) \eta^{-1},
\end{align*}
with probability $1 - \eta$.
Therefore, we have
\begin{align*}
    \norm{f_\phi - f_{\phi'}}_{L^2(P_M)}
    &\leq 4F \cMarkovFrakC 
        \cdot d^{3/2} (C \vee 1) L (B \vee 1)^{2L-1} (W+1)^{2L}
        \cdot \sqrt{(\sdp^2 + 1) \log(2d)} \eta^{-1/2} \delta.
\end{align*}
This implies that the set $\{f_\phi \mid \phi\in\Phi, \param(\phi)\in\scC\}$
is a $(4F \cMarkovFrakC 
        \cdot d^{3/2} (C \vee 1) L (B \vee 1)^{2L-1} (W+1)^{2L}
        \cdot \sqrt{(\sdp^2 + 1) \log(2d)} \eta^{-1/2} \delta)$-cover
of the function class $\Phi(L, W, B, C, D)$ with respect to $\|\cdot\|_{L^2(P_M)}$.
By replacing $\delta$ with $(4F \cMarkovFrakC 
        \cdot d^{3/2} (C \vee 1) L (B \vee 1)^{2L-1} (W+1)^{2L}
        \cdot \sqrt{(\sdp^2 + 1) \log(2d)} \eta^{-1/2})^{-1} \cdot \delta$, 
we obtain the following bound of the covering number of $\scF$:
\begin{align*}
    &\scN(\delta; \scF, \|\cdot\|_\infty) \\
    &\leq B^\scD \cdot 
    \qty{(4F \cMarkovFrakC 
        \cdot d^{3/2} (C \vee 1) L (B \vee 1)^{2L-1} (W+1)^{2L}
        \cdot \sqrt{(\sdp^2 + 1) \log(2d)} \eta^{-1/2})^{-1} 
        \cdot \delta}^{-\scD} \\
    &\leq \qty((B \vee 1)^{2L} (W+1)^{2L})^\scD 
        \cdot (
            4F \cMarkovFrakC 
            \cdot d^{3/2} (C \vee 1) L
            \cdot \sqrt{(\sdp^2 + 1) \log(2d)} \eta^{-1/2}
            \delta^{-1}
        )^{\scD} \\
    &\leq \qty{
        \frakC_3(L, W, B, C, d, \sdp) 
        \cdot \eta^{-1/2} \delta^{-1}
    }^{2L \scD},
\end{align*}
where 
$\frakC_3(L, W, B, C, d, \sdp) 
:= 4F \cMarkovFrakC 
    \cdot d^{3/2} L (B \vee 1) (W+1) (C \vee 1)
    \cdot (\sdp^2 + 1) \log(2d) $.
Therefore, we have
\begin{align*}
    \log \scN(\delta; \scF, \|\cdot\|_{L^2(P_M)})
    &\leq 2L \scD \log\qty{
        \frakC_3(L, W, B, C, d, \sdp) \eta^{-1/2} \delta^{-1}
    } \\
    &\leq 4 L^2 W^2 \log\qty{
        \frakC_3(L, W, B, C, d, \sdp) \eta^{-1/2} \delta^{-1}
    }.
\end{align*}
Using Dudley's entropy integral bound~\citep[Theorem~5.22]{wainwright2019high}, we have
\begin{align*}
    \frakRhat_M(\scF)
    \leq \frac{\cDudley}{\sqrt{M}}
        \int_0^\infty \sqrt{\log\scN(\delta; \scF, L_2(P_M))} \dd{\delta}.
\end{align*}
Since it holds
\begin{align*}
    \scN(\delta; \scF, \|\cdot\|_{L^2(P_M)}) \leq
    \scN(\delta; \scF, \|\cdot\|_\infty) = 1
\end{align*}
for $\delta \geq 4F^2$, we have
\begin{align*}
    \frakRhat_M(\scF) 
    \leq \frac{2\cDudley LW}{\sqrt{M}}
        \int_0^{4F^2} \sqrt{
            \log\qty{
                \frakC_3(L, W, B, C, d, \sdp) \eta^{-1/2} \delta^{-1}
            }
        } \dd{\delta}.
\end{align*}
Since it holds 
$\int_0^R \sqrt{\log(S/\delta)} \dd{\delta} \leq R \sqrt{\log(\rme S/R)}$ for all $S>0$ and $R>0$, we have
\begin{align*}
    \frakRhat_M(\scF)
    \leq \frac{8F^2\cDudley LW}{\sqrt{M}}
    \sqrt{
    \log\qty{
        \frakC_3(L, W, B, C, d, \sdp) \eta^{-1/2}
    }}.
\end{align*}
Therefore, we have
\begin{align*}
    \frakR(\scF) 
    & = \bE[\frakRhat_M(\scF)]
    \leq \int_0^1 \frac{2F\cDudley LW}{\sqrt{M}}
    \sqrt{
    \log\qty{
        \frakC_3(L, W, B, C, d, \sdp) \eta^{-1/2}
    }} \dd{\eta} \\
    &\leq \frac{\cGenErr LW}{\sqrt{M}}
        \sqrt{
        \log\qty{
            \frakC_3(L, W, B, C, d, \sdp)
        }},
\end{align*}
for some constant $\cGenErr$.
Symmetrization inequality implies that
\begin{align*}
    &\bE \qty[
        \sup_{\phi\in\Phi} \abs{\bE[f_\phi(p_{1:N}, \xi)] - \frac1M\sum_{m=1}^M f_\phi(\pidx{m}_{1:N}, \xiidx{m})} 
    ]\\
    &\leq 2\frakR(\scF) 
    \leq \frac{2\cGenErr LW}{\sqrt{M}}
        \sqrt{
        \log\qty{
            \frakC_3(L, W, B, C, d, \sdp)
        }} \\
    &\leq \frac{\cGenErr' LW}{\sqrt{M}}
        \sqrt{
        \log\qty{
            d L (B \vee 1) (W+1) (C \vee 1)
            \cdot (\sdp^2 + 1)
        }} \\
    &\leq \frac{\cGenErr'' LW}{\sqrt{M}}
        \sqrt{
        \log\qty{
            d L (B \vee 1) (W+1) (C \vee 1)
            \cdot \epsilon^{-1} \log(\delta^{-1})
        }}.
\end{align*}
where $\cGenErr'$ and $\cGenErr''$ are constants.
This completes the proof.
\end{proof}

\subsection{Constructing a Network with Small Error}

In this section, we prove the following theorem, which constructs the network that outputs a vector close to the true parameter given $N$ samples $p_{1:N}$ and DP noise $\xi$.
\begin{theorem}\label{thm:dp-ds-construct}
    For any $k = 1, \ldots, d$, 
    there exists a network $\phi \in \Phi(L, W, B, C, D)$ with
    \begin{align*}
        &L \leq \cConstruct \left\lceil 
            \log \frac{d \cdot \epsilon N}{\log\delta^{-1}}
            \right\rceil , \quad
        W \leq 6 d, \quad
        B \leq 
        \frac{\cConstruct d \cdot \epsilon N}{\log\delta^{-1}}, \\
        & D = k, \quad 
        C = \cConstruct \qty{ k \vee 
            \log \qty(\frac{\epsilon N}{\log \delta^{-1}})},
    \end{align*}
    satisfying
    \begin{align*}
        &\bE_{x_{1:N}, z_{1:N}, w, \xi} \qty[\norm{\phi(p_{1:N}, \xi) - w}^2]\\
        &\qquad \leq 
        \cDpdsErr \qty{
            k \qty(\frac{\log \delta^{-1}}{\epsilon^2 N^2} 
                \qty{ k  \vee 
                \log \qty(\frac{\epsilon N}{\log \delta^{-1}})}^2
            + \frac1N) + \int_{k}^d \frac{\dd{t}}{t^{2\gamma}}
        }, 
    \end{align*}
    where $\cDpdsErr$ and $\cConstruct$ are some constants.
\end{theorem}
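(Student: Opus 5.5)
We construct $\phi$ explicitly as a noisy moment estimator restricted to the top-$k$ coordinates. The key observation is that $\bE[x x^\top] = I_d$ for $x \sim \unif(\sqrt{d}\,\bS^{d-1})$, so $\bE[y x] = w$. Hence the empirical mean $\frac1N\sum_n y_n x_n$ is an unbiased estimator of $w$. Its $i$-th coordinate $y_n x_{n,i}$ is a product of two ReLU-network-accessible quantities.

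We therefore let $\phi_1(p_n)$ approximate the vector $(y_n x_{n,1}, \ldots, y_n x_{n,k}) \in \bR^k$, so that $D = k$. We let $\phi_2$ be the linear embedding of $\bR^k$ into the first $k$ coordinates of $\bR^d$, with zeros elsewhere, and take $F$ to be a constant bounding $\|w\|$. Such an $F$ exists because $\|w\|^2 \le \sum_i \alpha_i^2 \lesssim \sum_i i^{-2\gamma} < \infty$ since $\gamma > 1/2$. The product $y x_i$ is implemented by the standard ReLU approximation of multiplication (Yarotsky/Schmidt-Hieber type). This has depth $O(\log(1/\tau))$ and weights polynomial in the input range $R$ on the box $[-R,R]^2$. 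We choose accuracy $\tau \simeq (\log\delta^{-1})/(\epsilon N)$ and range $R \simeq \sqrt{\log(\epsilon N/\log\delta^{-1})}$ for $y$; the $x_i$ are bounded by $\sqrt d$. This is where the stated $L$, $W \le 6d$, and $B$ come from.

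The error then decomposes into three parts. First, the tail coordinates $i > k$ contribute $\sum_{i>k} \bE w_i^2 \lesssim \sum_{i>k} \alpha_i^2 \lesssim \int_k^d t^{-2\gamma}\,dt$. Second, the estimation variance on the first $k$ coordinates is $\frac1N\sum_{i\le k}\mathrm{Var}(y x_i) \lesssim k/N$. Third, the DP noise contributes $\bE\|\xi\|^2 = k\sdp^2 \simeq k C^2\log\delta^{-1}/(\epsilon^2N^2)$, and substituting $C = \cConstruct\{k \vee \log(\epsilon N/\log\delta^{-1})\}$ gives exactly the first term of the bound. The remaining contributions are the approximation error $\tau^2 k$ and the clipping errors, both of which should be absorbed into these terms. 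The output clipping $\clip_F$ only helps, since $\clip_F$ is a projection onto a ball containing $w$ and so can only decrease $\|\cdot - w\|$.

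The main obstacle is the clipping step $\clip_C$ together with the unbounded Gaussian $y$. The vector $\phi_1(p_n)$ has norm up to about $|y_n|\sqrt{k}$ times the size of $x_{n,i}$, and the coordinates $x_{n,i}$ are only of order $1$ typically but $\sqrt d$ in the worst case. The deviation of the clipped mean from the unclipped one must therefore be controlled through tail probabilities. The plan for this step has two parts. We first observe that $\|(x_{n,1},\ldots,x_{n,k})\|^2$ concentrates around $k$, with sub-Gaussian tails by spherical concentration. We then note that $|y_n| \le \|w\|\cdot|w^\top x_n|/\|w\| + |z_n|$ is sub-Gaussian with constant-order parameter. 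Consequently $\|\phi_1(p_n)\| \lesssim \sqrt{k}\cdot\log$ holds except with probability roughly $(\epsilon N/\log\delta^{-1})^{-2}$, which is why $C$ is of order $k\vee\log(\cdot)$. On the failure event, the bias from clipping is bounded via Cauchy--Schwarz by $\sqrt{\bE\|\phi_1\|^4}\,\sqrt{\bP(\text{fail})}$, and this is at most of the order of the DP-noise term. Inside the multiplication network we truncate $y$ to $[-R,R]$, and this truncation is handled in the same way. Combining everything with $(a+b+c)^2 \le 3(a^2+b^2+c^2)$ yields the claimed bound with a suitable constant $\cDpdsErr$.
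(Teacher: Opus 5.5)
Your construction is essentially the paper's own. You use the same estimator $U_k\frac1N\sum_n\clip_C(y_nU_k^\top x_n)$, realized by $k$ parallel ReLU multiplication networks, the same linear decoder $U_k$, and the same error split: the tail $\sum_{i>k}\alpha_i^2$, the $k/N$ variance, the $k\sdp^2$ noise, and approximation/clipping errors. There are two minor differences. You cap $y$ at a logarithmic range $R$ and use its Gaussian tail, whereas the paper takes $\rho\propto\sqrt d/\varepsilon$ so that $4d\,\bE[y^4]/\rho^2$ is already $\varepsilon^2$, at the cost of weights $B\propto d/\varepsilon$. You also justify the output clipping by noting that $\clip_F$ is a projection onto a ball containing $w$; the paper's construction omits this step.

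Several pieces of your bookkeeping do not close as written, though each is easily repaired.

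First, take failure probability $(\epsilon N/\log\delta^{-1})^{-2}$ as you state. Then your Cauchy--Schwarz bound $\sqrt{\bE\|\cdot\|^4}\sqrt{\bP(\mathrm{fail})}$ is of order $k\log\delta^{-1}/(\epsilon N)$. This exceeds $k\sdp^2$ by a factor $\epsilon N/C^2$. What you need is $\sqrt{\bP(\mathrm{fail})}$, not $\bP(\mathrm{fail})$, to be of order $\log\delta^{-1}/(\epsilon N)^2$. This is available because $\bP(|y|\,\|U_k^\top x\|>C)\lesssim\rme^{-cC}$ once $C\gtrsim k$. So it suffices to enlarge $\cConstruct$, which is what the paper does by taking $C$ at least a constant multiple of $\log(\epsilon^2N^2/\log\delta^{-1})$.

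Second, on $\{|y|>R\}$ the multiplication network is only guaranteed to be within $\sqrt d(R+|y|)$ per coordinate. Bounding this contribution directly picks up a factor $d$ that $R^2\simeq\log(\epsilon N/\log\delta^{-1})$ does not remove. Instead, use $\|\clip_C(a)-\clip_C(b)\|\le\min\{\|a-b\|,2C\}$, which bounds the contribution by $k\tau^2+4C^2\bP(|y|>R)$.

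Third, take $\tau\simeq\sdp$ (the paper's $\varepsilon=\sdp/\sqrt2$) rather than $\log\delta^{-1}/(\epsilon N)$. Then $k\tau^2\lesssim k\sdp^2$ holds without needing $\log\delta^{-1}\lesssim C^2$.
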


To construct the network, we use the following lemma:
\begin{lemma}[\citet{schmidt2020nonparametric}]\label{lem:NN-approx-mult}
	Let $i \in \bZ_{\geq 0}$.
	There exists a neural network $\phi_{\mult} \in \Phi_\mlp(L, W, B)$
	with 
    \begin{align*}
        L = i+5, \quad W \leq 6, \quad B = 1,
    \end{align*}
    satisfying the following three conditions:
	\begin{itemize}
		\item $\abs{\phi_{\mult}(x, y) - xy} \leq 2^{-i}$ for all $(x, y) \in [0, 1]^2$.
		\item $\abs{\phi_{\mult}(x, y)} \leq 1$ for all $(x, y) \in \bR^2$.
		\item If $xy = 0$, it holds $\phi_{\mult}(x, y) = 0$.
	\end{itemize}
\end{lemma}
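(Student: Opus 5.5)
The plan is to follow the classical construction of \citet{schmidt2020nonparametric}, which builds on Yarotsky's sawtooth approximation of the square function. I will add two cheap wrapper layers: one at the input, to obtain global boundedness, and one at the output, to obtain exact vanishing.

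\textbf{Step 1: a network for $x(1-x)$.} Let $T(u) := 2\eta(u) - 4\eta(u - 1/2)$ be the hat function on $[0,1]$, and let $T^{r}$ denote its $r$-fold composition. Set
\[
R_i(u) := \sum_{r=1}^{i} 2^{-2r} T^{r}(u).
\]
Then $R_i$ is the piecewise linear interpolant of $g(u)=u(1-u)$ on the dyadic grid $2^{-i}\mathbb{Z}\cap[0,1]$, so $|R_i - g| \le 2^{-2i-2}$ on $[0,1]$. It is realised by a ReLU network of depth about $i$ and constant width. One channel carries the running sum, one carries the current iterate $T^{r}(u)$, and the two ReLU units compute $T$. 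Weights of size $2$ or $4$ are split into repeated unit-weight neurons or spread across an extra identity layer, so that $B=1$ holds.

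\textbf{Step 2: polarisation.} For $(x,y)\in[0,1]^2$ we have the exact identity
\[
xy = g\!\left(\tfrac{x-y+1}{2}\right) - g\!\left(\tfrac{x+y}{2}\right) + \tfrac{x+y}{2} - \tfrac14 .
\]
Both arguments of $g$ lie in $[0,1]$. Running two copies of the Step 1 network in parallel, with the linear terms carried by identity channels, gives an error at most $2\cdot 2^{-2i-2} \le 2^{-i}$. The first layer projects each input onto $[0,1]$ via $\mathrm{proj}(u)=\eta(u)-\eta(u-1)$. This leaves $[0,1]^2$ unchanged and makes the whole construction well defined on $\mathbb{R}^2$.

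\textbf{Step 3: output clamp, giving boundedness and exact zeros.} Let $\bar x,\bar y$ be the projected inputs and set $m := \min(\bar x,\bar y) = \bar x - \eta(\bar x - \bar y) \ge 0$. I will output the clamp of the Step 2 value $a$ to $[0,m]$, namely
\[
\eta(a) - \eta(a - m) \;=\; \min\{\max\{a,0\},\, m\}.
\]
This needs $m$ to be carried alongside the approximation through the depth, which costs a constant number of extra channels and layers.

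The three properties then follow.
\begin{itemize}
\item On $[0,1]^2$ the true value satisfies $xy\in[0,\min(x,y)]$, and clamping onto an interval containing the target never increases the error, so the $2^{-i}$ bound survives.
\item The output lies in $[0,m]\subseteq[0,1]$ for every input in $\mathbb{R}^2$.
\item If $xy=0$, then $x=0$ or $y=0$, so $\bar x=0$ or $\bar y=0$. Hence $m=0$ and the output is exactly $0$.
\end{itemize}

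The main obstacle is the bookkeeping of the architecture, namely meeting exactly depth $i+5$, width $\le 6$ and all weights bounded by $1$. The parallel copies, the identity channels and the channel carrying $m$ all compete for width. I would first try to share a single sawtooth stream by using that $T$ commutes with the reflection $u\mapsto 1-u$. If width $6$ still cannot be met, the fallback is to cite \citet{schmidt2020nonparametric} for the core multiplication network and only verify that the input projection and output clamp add the constant extra depth. Since the lemma is used only up to constants later (in \cref{thm:dp-ds-construct}), mild slack in these constants would be harmless.
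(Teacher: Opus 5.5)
Your analytic core is correct, and it is the same construction the paper relies on. The paper gives no argument of its own. It only cites \citet{schmidt2020nonparametric}, whose multiplication network uses the same sawtooth approximation of $u(1-u)$ and the same polarization identity, with the output forced into $[0,1]$.

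Your input projection and your clamp to $[0,\min(\bar x,\bar y)]$ are a real improvement in one respect. They give the second and third properties on all of $\mathbb{R}^2$, whereas the cited lemma is stated only for inputs in $[0,1]^2$. They also make exact vanishing trivial.

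The gap is in what the lemma actually quantifies, namely $L=i+5$, $W\le 6$, $B=1$. Both devices you offer for this fail, and your fallback proves a different statement.

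\textbf{The constraint $B=1$.} With $T(u)=2\eta(u)-4\eta(u-\tfrac12)$, every composition step feeds weights $2$ and $4$ into the next layer. An extra identity layer cannot produce such a factor from a single unit, because weights of modulus at most $1$ never amplify along a path. Duplicating units costs width you do not have.

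The missing idea is the rescaled iteration $R^k=\eta(R^{k-1}/2)-\eta(R^{k-1}-2^{1-2k})$ with $R^0=u$, so that $R^k=2^{-2k}T^{(k)}(u)$ on $[0,1]$. All weights and biases then have modulus at most $1$, and $\sum_k R^k$ is accumulated with unit weights.

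\textbf{The constraint $W\le 6$.} Two copies, each with two sawtooth units and one accumulator, already fill six units, so carrying $m$ needs a seventh. The reflection trick does not help. $T(1-u)=T(u)$ is an invariance, not a commutation, and $u_1=(x-y+1)/2$ and $u_2=(x+y)/2$ are not reflections of each other, so the two streams cannot be shared.

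What works is to merge the accumulators into one channel $S_k(u_1)+u_2-S_k(u_2)$, where $S_k:=\sum_{r\le k}R^r$. This channel is nonnegative because $0\le S_k(u)\le u(1-u)\le u$. The freed unit can carry $m=u_2-\eta(u_1-\tfrac12)-\eta(\tfrac{\bar y-\bar x}{2})$. Here $\eta(u_1-\tfrac12)$ is already a first-stage sawtooth unit, so only $\pm1$ weights are needed. This gives width $6$, $B=1$ and $i+2$ hidden layers, and you can pad to $L=i+5$ with identity layers on the nonnegative output.

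Alternatively, you can drop $m$ altogether. For $i\ge1$ the interpolant $S_i$ is symmetric about $\tfrac12$. It also satisfies $S_i(u+\tfrac12)-S_i(u)=\tfrac14-u$ on $[0,\tfrac12]$, because $g(\cdot+\tfrac12)-g$ is affine and $\tfrac12$ is a grid point. Hence the pre-clamp value vanishes identically on the axes, and a plain clamp to $[0,1]$ after your input projection suffices. The case $i=0$ is handled by the zero network.

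\textbf{The fallback.} ``Harmless downstream'' is a remark about \cref{thm:upper-bound-hyper-network}, not a proof of this lemma. Moreover, the explicit bounds $W\le 6$ and $L\le 6+\lceil\cdot\rceil$ in \cref{cor:NN-approx-mult-extend}, and $W\le 6d$ in \cref{thm:dp-ds-construct}, are inherited directly from the stated constants.
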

We prove the following corollary using the lemma above.
\begin{corollary}\label{cor:NN-approx-mult-extend}
	Let $\varepsilon > 0$, $\rho_x, \rho_y \geq 1$.
	There exists a neural network $\phi_{\mult} \in \Phi_\mlp(L, W, B)$
	with 
    \begin{align*}
        L \leq 6 + \left\lceil 
            \log_2\frac{4\rho_x\rho_y}{\varepsilon} 
        \right\rceil, \quad
        W \leq 6, \quad
        B \leq 4 \rho_x \rho_y, 
    \end{align*}
    satisfying the following three conditions:
	\begin{itemize}
		\item $\abs{\phi_{\mult}(x, y) - xy} \leq \varepsilon$ 
            for all $x \in [-\rho_x, \rho_x]$ and $y \in [-\rho_y, \rho_y]$.
        \item $\abs{\phi_{\mult}(x, y)} \leq \rho_x \rho_y$ 
            for all $x, y \in \bR$.
        \item $\abs{\phi_{\mult}(x, y) - xy} \leq \rho_x (\rho_y + \abs{y})$ 
            for all $x \in [-\rho_x, \rho_x]$ and $y \in \bR$.
	\end{itemize}
\end{corollary}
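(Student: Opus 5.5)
The plan is to obtain the corollary from \cref{lem:NN-approx-mult} by rescaling the inputs into $[0,1]$, multiplying, and rescaling back. Two issues arise. First, \cref{lem:NN-approx-mult} only handles nonnegative inputs in $[0,1]^2$, so signs must be dealt with. Second, the rescaling constants must be folded into the first and last affine layers so that the weight bound stays at $O(\rho_x\rho_y)$.

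\textbf{Step 1: reduce to nonnegative inputs.} Set $i := \lceil \log_2(4\rho_x\rho_y/\varepsilon)\rceil$. Write $\tilde x := \clip$-type truncations built from ReLUs: $x_\pm := \min\{\eta(\pm x)/\rho_x, 1\}$ and similarly $y_\pm$. Each of these is realized in two layers, since $\min\{a,1\} = a - \eta(a-1)$, with width at most $4$ per variable. Then
\[
xy/(\rho_x\rho_y) = x_+y_+ - x_+y_- - x_-y_+ + x_-y_-
\]
whenever $|x|\le\rho_x$ and $|y|\le\rho_y$.

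\textbf{Step 2: apply the lemma and combine.} Apply the network of \cref{lem:NN-approx-mult} to each of the four pairs; each product is accurate to within $2^{-i}$. Take the signed sum and multiply by $\rho_x\rho_y$ in the last affine layer. To respect the entrywise bound $B$, split the scaling $\rho_x\rho_y$ across the input weights ($1/\rho_x \le 1$) and the output weight ($\rho_x\rho_y\cdot$ coefficient $\pm1$); this gives $B \le 4\rho_x\rho_y$ after absorbing constants. The total error is at most $4\rho_x\rho_y 2^{-i} \le \varepsilon$, and the depth is $i+5$ plus one truncation layer, i.e. at most $6+\lceil\log_2(4\rho_x\rho_y/\varepsilon)\rceil$.

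\textbf{Width is the main obstacle.} Running four copies of width $6$ in parallel gives width $24$, which exceeds the stated $W\le 6$. I would first try to halve the work. Note that $x_+y_+ + x_-y_-$ and $x_+y_- + x_-y_+$ each involve at most one nonzero term, because $x_+x_-=0$. The third property of \cref{lem:NN-approx-mult} (exact zero when a factor vanishes) then lets me feed $(|x|/\rho_x, |y|/\rho_y)$ to a single multiplier and recover the sign separately. The obstacle is that the sign is a discontinuous function, so it cannot be computed exactly by a ReLU network.

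The cleaner alternative is the polarization identity $xy = \tfrac14[(x+y)^2-(x-y)^2]$. This uses the squaring subnetwork underlying Schmidt-Hieber's construction applied to $|x\pm y|/(\rho_x+\rho_y)$, which stays in $[0,1]$, and it fits the width-$6$ budget used there. If this does not fit exactly, I would accept that the width constant is loose, since only $W\lesssim d$ matters downstream.

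\textbf{Step 3: the bound properties.} For the global bound $|\phi_{\mult}|\le\rho_x\rho_y$, it is cleanest to post-compose with a clip $t\mapsto \max\{-\rho_x\rho_y,\min\{t,\rho_x\rho_y\}\}$, which costs two ReLU layers absorbed into the depth constant; alternatively it follows from the lemma's second property because the inputs are truncated. The third property follows from the triangle inequality for $x\in[-\rho_x,\rho_x]$:
\[
|\phi_{\mult}(x,y)-xy| \le \rho_x\rho_y + \rho_x|y| = \rho_x(\rho_y+|y|).
\]
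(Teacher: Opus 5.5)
\textbf{The gap is the width bound.} The statement requires $W\le 6$, and your construction does not reach it. Splitting $x$ and $y$ into positive and negative parts forces four parallel copies of the network from \cref{lem:NN-approx-mult}, i.e.\ width $24$. None of your repairs closes this:
\begin{itemize}
\item Recovering the sign fails for the reason you give.
\item The polarization route is not available from \cref{lem:NN-approx-mult} as stated, which supplies a product network, not a squaring network. Getting squares by feeding $(t,t)$ needs two copies, so width $12$.
\item With your normalization $\lvert x\pm y\rvert/(\rho_x+\rho_y)$, the output coefficient $(\rho_x+\rho_y)^2/4$ and the required precision $2^{-i}\approx\varepsilon/(\rho_x+\rho_y)^2$ are not controlled by $4\rho_x\rho_y$ when $\rho_x$ and $\rho_y$ are unbalanced. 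So the weight and depth bounds break as well.
\end{itemize}
Conceding a looser width constant is harmless for \cref{thm:upper-bound-hyper-network}, which only uses $W\simeq d$, but then the corollary as stated is not proved.

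Two smaller points within your route. First, the truncation $\min\{\eta(a),1\}=\eta(a)-\eta(a-1)$ costs one ReLU layer, not two, and it must, since the stated depth allows exactly one layer beyond the lemma's $i+5$. Second, for the same reason there is no room for a final clip. None is needed, though: because $x_+x_-=y_+y_-=0$, the exact-zero property makes at most one of the four products nonzero, so the signed sum lies in $[-1,1]$. The lemma's sup bound alone would only give $4\rho_x\rho_y$.

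\textbf{The missing idea is to shift rather than split signs.} Set $\phi_1(x)=\frac{x}{2\rho_x}+\frac12$ and $\phi_2(y)=\frac{y}{2\rho_y}+\frac12$; both lie in $[0,1]$ on the box. Then
\[
xy = 4\rho_x\rho_y\,\phi_1(x)\phi_2(y) - \bigl(\rho_y x + \rho_x y + \rho_x\rho_y\bigr).
\]
So a single multiplier $\psi$ from \cref{lem:NN-approx-mult} with accuracy $\varepsilon/(4\rho_x\rho_y)$, followed by this affine correction, has error at most $\varepsilon$ on the box. The input rescalings are absorbed into the first layer, and the largest weight is the output coefficient $4\rho_x\rho_y$.

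The paper then clips the result to $[-\rho_x\rho_y,\rho_x\rho_y]$, which takes one ReLU layer. This gives the global bound. It does not increase the error on the box, because $\lvert xy\rvert\le\rho_x\rho_y$ there. The third property then follows from the same triangle inequality as in your Step~3.

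The paper does not spell out how the affine term is carried through the multiplier's hidden layers. In a plain MLP that costs roughly one more unit of width. But that is an additive constant, not the factor of four your sign split incurs.
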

\begin{proof}
    Let $\phi_1, \phi_2, \phi_3 \in \Phi_\mlp(L, W, B)$ with
    \begin{align*}
        L = 1, \quad W \leq 2, \quad B = 1,
    \end{align*}
    be neural networks defined as
	\begin{align*}
	    \phi_1(x) := \frac{x}{2\rho_x} + \frac12, \quad 
        \phi_2(y) := \frac{y}{2\rho_y} + \frac12, \quad
        \phi_3(x, y) := \rho_x x + \rho_y y + \rho_x \rho_y.
	\end{align*}
    Then, we have
    \begin{align*}
        \phi_1(x) \phi_2(y)
        = \frac{xy}{4\rho_x \rho_y} 
            + \frac{x}{4\rho_x} + \frac{y}{4\rho_y} + \frac{1}{4}
        = \frac{xy}{4\rho_x \rho_y} + \frac{1}{4\rho_x \rho_y} \phi_3(x, y), 
    \end{align*}
    which implies
    \begin{align*}
        xy = 4 \rho_x \rho_y \phi_1(x) \phi_2(y) - \phi_3(x, y).
    \end{align*}
    
    Now, we have $\phi_1(x), \phi_2(y) \in [0, 1]$ 
    if $x \in [-\rho_x, \rho_x]$ and $y \in [-\rho_y, \rho_y]$.
    Therefore, \cref{lem:NN-approx-mult} implies that there exists a neural network $\psi \in \Phi_\mlp(L, W, B)$ with
    \begin{align*}
        L \leq 5 + \left\lceil \log_2\frac{4 \rho_x\rho_y}{\varepsilon} \right\rceil, \quad
        W \leq 6, \quad
        B \leq 1,
    \end{align*}
    satisfying the following two statements:
    \begin{itemize}
        \item If $x \in [-\rho_x, \rho_x]$ and $y \in [-\rho_y, \rho_y]$, 
            it holds
            $\abs{\psi(\phi_1(x), \phi_2(y)) - \phi_1(x) \phi_2(y)}
            \leq \frac{\varepsilon}{4 \rho_x \rho_y}$.
        \item For all $x, y \in \bR$, it holds
            $\abs{\psi(\phi_1(x), \phi_2(y))} \leq 1$.
    \end{itemize}
    
    We define 
    \begin{align*}
        \phi_4(x, y) &:= 4\rho_x\rho_y \psi(\phi_1(x), \phi_2(y)) - \phi_3(x, y), \\
        \phi_5(x, y) &:= \max\{-\rho_x\rho_y, \min\{\rho_x\rho_y, \phi_4(x, y) \}\}.
    \end{align*}
    Then we have $\phi_5 \in \Phi(L, W, B)$ with
    \begin{align*}
        L \leq 6 + \left\lceil 
            \log_2\frac{4\rho_x\rho_y}{\varepsilon} 
        \right\rceil, \quad
        W \leq 6, \quad
        B \leq 4 \rho_x \rho_y.
    \end{align*}
    Moreover, for all $x, y \in \bR$, we have
    \begin{align*}
        \abs{\phi_5(x, y)} \leq \rho_x \rho_y.
    \end{align*}
    
    For $x \in [-\rho_x, \rho_x]$ and $y \in [-\rho_y, \rho_y]$, 
    we have $\abs{xy} \leq \rho_x \rho_y$, which yields
    \begin{align*}
        \abs{\phi_5(x, y) - xy}
        &\leq \abs{\phi_4(x, y) - xy} \\
        &\leq 4\rho_x \rho_y 
            \abs{\psi(\phi_1(x), \phi_2(y)) - \phi_1(x) \phi_2(y)} 
        \leq \varepsilon.
    \end{align*}
    Moreover, for any $x, y \in \bR$, we have
    \begin{align*}
        \abs{\phi_5(x, y) - xy}
        \leq \abs{\phi_5(x, y)} + \abs{xy}
        \leq \rho_x \rho_y + \rho_x\abs{y}
        \leq \rho_x (\rho_y + \abs{y}).
    \end{align*}
    This completes the proof.
\end{proof}

Using the corollary above, we prove \cref{thm:dp-ds-construct}.
\begin{proof}[Proof of \cref{thm:dp-ds-construct}]
    We first define the matrix $U_k \in \bR^{d \times k}$ as
    \begin{align*}
        (U_k)_{i,j} = \begin{cases}
            1 & \text{if $i=j$}, \\
            0 & \text{otherwise}.
        \end{cases} 
    \end{align*}
    Then, both $U_k U_k^\top$ and $I - U_k U_k^\top$ are projection matrices.
    Therefore, we have $(I - U_k U_k^\top)^2 = I - U_k U_k^\top$.
    Moreover, since it holds
    \begin{align*}
        \bE[w w^\top] = \frac13 \diag(\alpha_1^2, \ldots, \alpha_d^2),
    \end{align*}
    we have
    \begin{align*}
        U_k^\top \bE[w w^\top] U_k
         = \frac13 \diag(\alpha_1^2, \ldots, \alpha_k^2).
    \end{align*}
	Hence, we have
	\begin{align*}
		\bE[\|U_k U_k^\top w - w\|^2]
        &= \bE[\|(I - U_k U_k^\top) w\|^2] \\
		&= \bE[w^\top (I - U_k U_k^\top)^2 w] \\
		&= \bE[w^\top (I - U_k U_k^\top) w] \\
		&= \tr((I - U_k U_k^\top) \bE[w w^\top]) \\
		&= \tr(\bE[w w^\top]) - \tr(U_k U_k^\top \bE[w w^\top]) \\
		&= \tr(\bE[w w^\top]) - \tr(U_k^\top \bE[w w^\top] U_k) \\
		&= \frac13\sum_{i=1}^d \alpha_i^2 - \frac13\sum_{i=1}^k \alpha_i^2
		= \frac13 \sum_{i=k+1}^d \alpha_i^2.
	\end{align*}
    Since it holds $\alpha_i \leq \frac{\alphabar}{i^\gamma}$ 
    for $i=1, \ldots, d$, we have
    \begin{align*}
        \sum_{i=k+1}^d \alpha_i^2
        \leq \alphabar^2 \int_{k}^d \frac{\dd{t}}{t^{2\gamma}}
        \qty(\leq \frac{1}{2\gamma-1} \frac{1}{k^{2\gamma-1}}),
    \end{align*}
    which implies
    \begin{align*}
        \bE[\|U_k U_k^\top w - w\|^2]
        \leq \frac{\alphabar^2}{3} \int_{k}^d \frac{\dd{t}}{t^{2\gamma}}.
    \end{align*}
    Below, we consider approximating $U_k U_k^\top w$ instead of $w$.
    We denote $P_k := U_k U_k^\top \in \bR^{d\times d}$.

    \paragraph{Approximating $P_k w$ using $x_{1:N}$ and $y_{1:N}$}
    Since it holds
    \begin{align*}
        \bE[y_n P_k x_n \mid w]
        = \bE[(w^\top x_n + z_n) P_k x_n]
        = P_k \bE[x_n x_n^\top] w + \bE[z_n] P_k \bE[x_n]
        = P_k w
    \end{align*}
    Moreover, since $x_1, \ldots, x_n$ and $y_1, \ldots, y_n$ are conditionally independent given $w$, we have
    \begin{align*}
        \bE\left[
            \norm{\frac{1}{N} 
                \sum_{n=1}^N y_n P_k x_n - P_k w
            }^2 
            \relmiddle| w
        \right]
        &= \frac{1}{N^2} \sum_{n=1}^N\bE\left[
             \norm{y_n P_k x_n - P_k w}^2 \relmiddle| w
        \right] \\
        &= \frac{1}{N} \bE\left[
             \norm{y_1 P_k x_1 - P_k w}^2 \relmiddle| w
        \right].
    \end{align*}
    Since it holds $y_1 = w^\top x_1 + z_1$, we have
    \begin{align*}
        \bE\left[
             (y_1 P_k x_1)^\top P_k w \relmiddle| w
        \right]
        &= \bE\left[
             y_1 x_1^\top P_k P_k w \relmiddle| w
        \right] \\
        &= \bE\left[
             w^\top P_k P_k w \relmiddle| w
        \right] \\
        &= \norm{P_k w}^2.
    \end{align*}
    Therefore, we have
    \begin{align*}
        \bE\left[
             \norm{y_1 P_k x_1 - P_k w}^2 \relmiddle| w
        \right]
        &= \bE\left[
             \norm{y_1 P_k x_1}^2 \relmiddle| w
        \right]
        - \norm{P_k w}^2.
    \end{align*}
    As for the first term, since $z$ and $x$ are independent, we have
    \begin{align*}
        \bE\left[
             \norm{y_1 P_k x_1}^2 \relmiddle| w
        \right]
        &= \bE\left[
             \norm{(w^\top x_1 + z_1) P_k x_1}^2 \relmiddle| w
        \right] \\
        &= \bE\left[
             (w^\top x_1)^2 \norm{P_k x_1}^2 \relmiddle| w
        \right] + \bE\qty[z_1^2] \bE\qty[\norm{P_k x_1}^2] \\
        &= \frac{d}{d+2} \qty(k\norm{w}^2  + 2 \norm{P_k w}^2) + k.
    \end{align*}
    Here, the last equality follows from the fourth-moment formula for
    $x_1 \sim \unif(\sqrt d\bS^{d-1})$:
    \begin{align*}
        \bE[x_{1,i}x_{1,j}x_{1,a}x_{1,b}]
        = \frac{d}{d+2}
        \qty(\ind{i=j}\ind{a=b}+\ind{i=a}\ind{j=b}+\ind{i=b}\ind{j=a}).
    \end{align*}
    Indeed, since $P_k^2=P_k$ and $\tr(P_k)=k$, we have
    \begin{align*}
        \bE\left[(w^\top x_1)^2\norm{P_kx_1}^2 \relmiddle| w\right]
        &= \bE\left[(w^\top x_1)^2 x_1^\top P_k x_1 \relmiddle| w\right] \\
        &= \sum_{i,j,a,b=1}^d w_i w_j (P_k)_{ab}
            \bE[x_{1,i}x_{1,j}x_{1,a}x_{1,b}] \\
        &= \frac{d}{d+2}\qty(\norm{w}^2\tr(P_k)+2w^\top P_k w) \\
        &= \frac{d}{d+2}\qty(k\norm{w}^2+2\norm{P_kw}^2),
    \end{align*}
    and $\bE[z_1^2]\bE[\norm{P_kx_1}^2]=\tr(P_k)=k$.
    Therefore, we have
    \begin{align*}
        \bE\left[
             \norm{y_1 P_k x_1 - P_k w}^2 \relmiddle| w
        \right]
        &= \frac{d}{d+2} \qty(k\norm{w}^2  + 2 \norm{P_k w}^2) + k
            - \norm{P_k w}^2 \\
        &= \frac{dk}{d+2} \norm{w}^2
            + \frac{d-2}{d+2} \norm{P_k w}^2 + k \\
        &\leq \frac{dk}{d+2} \norm{\alpha}^2
            + \frac{(d-2)k}{d+2} \norm{\alpha}^2 + k \\
        &\leq k (2 \norm{\alpha}^2 + 1).
    \end{align*}
    This implies that
    \begin{align*}
        \bE\left[
            \norm{\frac{1}{N} 
                \sum_{n=1}^N y_n P_k x_n - P_k w
            }^2 
            \relmiddle| w
        \right]
        \leq \frac{k (2 \norm{\alpha}^2 + 1)}{N}.
    \end{align*}

    \paragraph{Evaluating truncation error}
    Next, we upper bound the error between 
    $\frac{1}{N} \sum_{n=1}^N y_n P_k x_n$
    and its data-wise truncation
    $U_k \qty{\frac{1}{N} \sum_{n=1}^N \clip_C (y_n U_k^\top x_n)}$
    for $C > 0$.
    We have
    \begin{align*}
        &\bE \qty[
            \norm{U_k \qty{\frac{1}{N} \sum_{n=1}^N \clip_C (y_n U_k^\top x_n)}
            - \frac{1}{N} \sum_{n=1}^N y_n P_k x_n}^2
        ] \\
        &\qquad = \bE \qty[
            \norm{\qty{\frac{1}{N} \sum_{n=1}^N \clip_C (y_n U_k^\top x_n)}
            - \frac{1}{N} \sum_{n=1}^N y_n U_k^\top x_n}^2
        ] \\
        &\qquad \leq \frac{1}{N} \sum_{n=1}^N\bE \qty[
            \norm{\clip_C (y_n U_k^\top x_n) -  y_n U_k^\top x_n}^2
        ] \\
        &\qquad \leq \bE \qty[
            \norm{\clip_C (y_1 U_k^\top x_1) -  y_1 U_k^\top x_1}^2
        ].
    \end{align*}
    Therefore, it is sufficient to bound 
    $ \bE \qty[ \norm{\clip_C (y_1 U_k^\top x_1) -  y_1 U_k^\top x_1}^2 ]$.
    Let $Y:=y_1$ and $R:=\norm{U_k^\top x_1}$.
    By the definition $\clip_C(v)=\min\{1, C/\norm{v}\}v$, we have
    $\norm{\clip_C(v)-v}=(\norm{v}-C)_+$ for any vector $v$.
    Applying this with $v=Y U_k^\top x_1$, whose norm is $|Y|R$, gives
    \begin{align*}
        \norm{\clip_C (Y U_k^\top x_1) - Y U_k^\top x_1}^2
        = (|Y|R - C)_+^2
        \leq Y^2 R^2 \ind{\{|Y|R>C\}}.
    \end{align*}
    Moreover, the event $\{|Y|R>C\}$ is included in
    $\{Y^2>C\}\cup\{R^2>C\}$.
    Therefore, by Cauchy's inequality,
    \begin{align*}
        &\bE\qty[\norm{\clip_C (Y U_k^\top x_1) - Y U_k^\top x_1}^2 \mid w] \\
        &\qquad\leq \bE\qty[Y^2R^2\ind{\{Y^2>C\}} \mid w]
            + \bE\qty[Y^2R^2\ind{\{R^2>C\}} \mid w] \\
        &\qquad\leq \bE[R^4]^{1/2}
            \bE\qty[Y^4\ind{\{Y^2>C\}} \mid w]^{1/2}
            + \bE\qty[Y^4\mid w]^{1/2}
            \bE\qty[R^4\ind{\{R^2>C\}}]^{1/2}.
    \end{align*}
    Since $\norm{w}\leq\norm{\alpha}$, $Y=w^\top x_1+z_1$ is sub-Gaussian conditionally on $w$
    with a parameter depending only on $\alphabar$ and $\gamma$.
    Also, $R^2=\norm{U_k^\top x_1}^2$ is the squared norm of a $k$-dimensional projection of
    $\unif(\sqrt d\bS^{d-1})$, and hence has a chi-square-type tail.
    Thus, if $C \geq (\|\alpha\|^2 + 1) k$, then
    \begin{align*}
        \bE[R^4] \lesssim k^2,\quad
        \bE\qty[Y^4\ind{\{Y^2>C\}} \mid w] \lesssim \exp(-cC),\quad
        \bE\qty[Y^4\mid w] \lesssim 1,
    \end{align*}
    and
    \begin{align*}
        \bE\qty[R^4\ind{\{R^2>C\}}] \lesssim k^2\exp(-cC),
    \end{align*}
    for a constant $c>0$ depending only on $\alphabar$ and $\gamma$.
    Combining the preceding bounds, we obtain
    \begin{align*}
        \bE\qty[\norm{\clip_C (y_1 U_k^\top x_1) -  y_1 U_k^\top x_1}^2 \mid w]
        \leq \cTrncErr  k \exp(- \cTrncErrExp C),
    \end{align*}
    for constants $\cTrncErr$ and $\cTrncErrExp$ that depend only on $\alphabar$ and $\gamma$.

    \paragraph{Approximating $U_k \qty{\frac{1}{N} \sum_{n=1}^N \clip_C (y_n U_k^\top x_n)}$ via DP-DeepSets}    
	Let $n \in [N]$, $i \in [d]$ and $\tau > 0$.
	\cref{cor:NN-approx-mult-extend} implies that there exists a neural network 
	$\phi_1\in\Phi(L, W, B)$ with
    \begin{align*}
        L \leq 6 + \left\lceil 
            \log_2\frac{4\sqrt{d}\rho}{\varepsilon} 
        \right\rceil, \quad
        W \leq 6, \quad
        B \leq 4 \sqrt{d} \rho.
    \end{align*}
	satisfying $\abs{\phi_1(x_{n,i}, y_n)} \leq \sqrt{d} \rho$ and
	\begin{equation*}
		\abs{\phi_1(x_{n,i}, y_n) - x_{n,i} y_n}
		\leq \begin{cases}
			\varepsilon	         &\quad (|y_n| \leq \rho), \\
			\sqrt{d} \qty(\rho + \abs{y_n})     &\quad (\text{otherwise}).
		\end{cases}
	\end{equation*}
    Therefore, we have
    \begin{align*}
        \bE \qty[\qty{\phi_1(x_{n,i}, y_n) - x_{n,i} y_n}^2]
        &\leq \bE \qty[
            \ind{\qty{\abs{y_n} \leq \rho}} \varepsilon^2
            + \ind{\qty{\abs{y_n} > \rho}} d \qty(\rho + \abs{y_n})^2
        ].
    \end{align*}
    Since it holds $\qty(\abs{y} + \rho)^2 \leq 4y^2 \leq 4y^4/\rho^2$ for $\abs{y} \geq \rho$, we have
    \begin{align*}
        \bE \qty[\qty{\phi_1(x_{n,i}, y_n) - x_{n,i} y_n}^2]
        &\leq \varepsilon^2 + \bE \qty[\frac{4 d y_n^4}{\rho^2}]
        = \varepsilon^2 + \frac{4d}{\rho^2} \bE\qty[y_n^4].
    \end{align*}
    We bound the expectation $\bE\qty[y^4]$.
    It holds
    \begin{align*}
        \bE[y_n^4]
        &= \bE[(w^\top x_n + z_n)^4] \\
        &= \bE[(w^\top x_n)^4] + 6\bE[(w^\top x_n)^2] \bE[z_n^2]
            + \bE[z_n^4] \\
        &= \bE[(w^\top x_n)^4] + 6\bE[(w^\top x_n)^2] + 3.
    \end{align*}
    Moreover, we have
    \begin{align*}
        \bE[(w^\top x_n)^2 \mid w]
        &= w^\top \bE[x_nx_n^\top] w = \frac1d\norm{w}^2 \leq \frac{\cCoefNorm^2}{d}, \\
        \bE[(w^\top x_n)^4 \mid w]
        &= \frac{3}{d(d+2)} \norm{w}^4 
        \leq \frac{3\cCoefNorm^4}{d(d+2)},
    \end{align*}
    where $\cCoefNorm$ is a constant that depends only on $\alpha$.
    This implies that 
    \begin{align*}
        \bE[y_n^4]
        &\leq \frac{3\cCoefNorm^4}{d(d+2)} + \frac{6\cCoefNorm^2}{d} + 3
        \leq \cQuarticY, 
    \end{align*}
    for some constant $\cQuarticY$ that depends only on $\alpha$.
    Taking $\rho := 2\sqrt{\cQuarticY} \cdot \sqrt{d} \varepsilon^{-1}$, we have
    \begin{align*}
        \bE \qty[\qty{\phi_1(x_{n,i}, y_n) - x_{n,i} y_n}^2]
        &\leq \varepsilon^2 + \frac{4 d \cQuarticY}{\rho^2}
        = 2\varepsilon^2.
    \end{align*}

    Next, let $\phi_2 \in \Phi_\mlp(L, W, B)$ with
    \begin{align*}
        L \leq 6 
            + \left\lceil \log_2\frac{8\sqrt{\cQuarticY} d}{\varepsilon^2} \right\rceil, 
            \quad
        W \leq 6 d, \quad
        B \leq 8 \sqrt{\cQuarticY} \cdot d \varepsilon^{-1},
    \end{align*}
    be a neural network defined as
    \begin{align*}
        \phi_2(x_n, y_n)
        = \qty[\phi_1(x_{n,1}, y_n), ~\ldots, ~\phi_1(x_{n, k}, y_n)]^\top 
        \in \bR^{k}
    \end{align*}
    Then, we have
    \begin{align*}
        \bE \qty[\norm{\phi_2(x_n, y_n) - y_n U_k^\top x_n}^2]
        &= \sum_{i=1}^k \bE \qty[\qty{\phi_1(x_{n,i}, y_n) 
            - x_{n,i} y_n}^2]\\
        &\leq 2k \varepsilon^2.
    \end{align*}

    Finally, let $\phi_3 \in \Phi_\mlp(L, W, B)$ with
    $L=1$, $W=d$ and $B=1$ be a neural network defined as
    \begin{align*}
        \phi_3: \bR^k \to \bR^d, \quad v \mapsto U_k v.
    \end{align*}
    Then, we have
    \begin{align*}
        &\bE \qty[\norm{
            \phi_3\qty(
                \frac{1}{N} \sum_{n=1}^N \clip_C (\phi_2(x_n, y_n))
                + \xi
            )
            - U_k \qty{
                \frac{1}{N} \sum_{n=1}^N  \clip_C(y_n U_k^\top x_n)
            }
        }^2] \\
        &\quad \leq 
        \bE \qty[\norm{ U_k \qty{
                \frac{1}{N} \sum_{n=1}^N \clip_C (\phi_2(x_n, y_n))
                - \frac{1}{N} \sum_{n=1}^N  \clip_C(y_n U_k^\top x_n)
            } + U_k \xi
        }^2] \\
        &\quad \leq 
        \bE \qty[\norm{ \qty{
                \frac{1}{N} \sum_{n=1}^N \clip_C (\phi_2(x_n, y_n))
                - \frac{1}{N} \sum_{n=1}^N  \clip_C(y_n U_k^\top x_n)
            } + \xi
        }^2] \\
        &\quad \leq 
        2 \bE \qty[\norm{
            \frac{1}{N} \sum_{n=1}^N \qty{
                \clip_C(\phi_2(x_n, y_n)) - \clip_C(y_n U_k^\top x_n)
            }
        }^2]
        + 2\bE[\norm{\xi}^2] \\
        &\quad \leq 
        \frac{2}{N} \sum_{n=1}^N \bE \qty[
            \norm{\clip_C(\phi_2(x_n, y_n)) - \clip_C(y_n U_k^\top x_n)}^2
        ]
        + 2 k \sdp^2 \\
        &\quad \leq 
        \frac{2}{N} \sum_{n=1}^N \bE \qty[
            \norm{\phi_2(x_n, y_n) - y_n U_k^\top x_n}^2
        ]
        + 2 k \sdp^2 \\
        &\quad \leq 4k \varepsilon^2 + 2k \sdp^2.
    \end{align*}
    Setting $\varepsilon :=\sdp/\sqrt{2}$, we have
    \begin{align*}
        \bE \qty[\norm{
            \phi_3\qty(
                \frac{1}{N} \sum_{n=1}^N \clip_C (\phi_2(x_n, y_n))
                + \xi
            )
            - U_k \qty{
                \frac{1}{N} \sum_{n=1}^N  \clip_C(y_n U_k^\top x_n)
            }
        }^2] \leq 4k\sdp^2.
    \end{align*}

    \paragraph{Finishing the proof}
    If we set
    \begin{align*}
        \phi(p_{1:N}, \xi)
        := \phi_3\qty(
                \frac{1}{N} \sum_{n=1}^N \clip_C (\phi_2(x_n, y_n))
                + \xi
            ),
    \end{align*}
    then we have
    \begin{align*}
        &\bE\qty[\norm{
            \phi(p_{1:N}, \xi)
            - w
        }^2] \\
        &\leq 4 \left\{
            \bE \qty[\norm{
                \phi(p_{1:N}, \xi)
                - U_k \qty{\frac{1}{N} \sum_{n=1}^N  \clip_C(y_n U_k^\top x_n)}
            }^2]
        \right.\\
        &\qquad \left.
            + \bE \qty[\norm{
                U_k \qty{\frac{1}{N} \sum_{n=1}^N  \clip_C(y_n U_k^\top x_n)}
                - \frac{1}{N} \sum_{n=1}^N  y_n P_k x_n
            }^2]
            \right.\\
        &\hspace{50pt} \left.
            + \bE \qty[\norm{
                \frac{1}{N} \sum_{n=1}^N  y_n P_k x_n
                - P_k w
            }^2]
            + \bE \qty[\norm{
                P_k w
                - w
            }^2] 
        \right\}\\
        &\leq 16k\sdp^2
            + 4\cTrncErr  k \exp(- \cTrncErrExp C)
            + \frac{4k (2 \norm{\alpha}^2 + 1)}{N}
            + \frac{4\alphabar^2}{3} \int_{k}^d \frac{\dd{t}}{t^{2\gamma}}.
    \end{align*}
    Let $\cConstruct$ and $\cDpdsErr$ be constants that depend only on $\alphabar$ and $\gamma$.
    If we set
    \begin{align*}
        C = \qty{(\norm{\alpha}^2 + 1) k \vee 
            \frac{1}{\cTrncErrExp} \log \qty(\frac{\epsilon^2 N^2}{\log \delta^{-1}})},
    \end{align*}
    then we have
    \begin{align*}
        \bE\qty[\norm{
            \phi(p_{1:N}, \xi)
            - w
        }^2]
        \leq \cDpdsErr \qty{
            k \qty(\frac{\log \delta^{-1}}{\epsilon^2 N^2} 
                \qty{ k  \vee 
                \log \qty(\frac{\epsilon N}{\log \delta^{-1}})}^2
            + \frac1N) 
            + \int_{k}^d \frac{\dd{t}}{t^{2\gamma}}
        }, 
    \end{align*}
    and $\phi \in \Phi(L, W, B, C, D)$ with
    \begin{align*}
        &L \leq \cConstruct \left\lceil 
            \log \frac{d \cdot \epsilon N}{\log\delta^{-1}}
            \right\rceil , \quad
        W \leq 6 d, \quad
        B \leq 
        \frac{\cConstruct d \cdot \epsilon N}{\log\delta^{-1}}, \\
        & D = k, \quad 
        C = \cConstruct \qty{ k \vee 
            \log \qty(\frac{\epsilon N}{\log \delta^{-1}})},
    \end{align*}
    which completes the proof.
\end{proof}

\subsection{Proof of \cref{thm:upper-bound-hyper-network}}

We finally prove our target statement.
Let 
$
    \phihat := \argmin_{\phi\in\Phi(L, W, B, C, D)} \Rhat(\phi)
$
with
\begin{align*}
    L \simeq \log \frac{d \cdot \epsilon N}{\log\delta^{-1}}, 
    \quad
    W \simeq d, \quad
    B \simeq \frac{d \cdot \epsilon N}{\log\delta^{-1}}, \quad
    D = k, \quad 
    C \simeq k \vee  \log \qty(\frac{\epsilon N}{\log \delta^{-1}}),
\end{align*}
Moreover, let $\phiast \in \Phi(L, W, B, C, D)$ be a network defined in \cref{thm:dp-ds-construct} with the same $L, W, B, C, D$ specified above.
First, we have
\begin{align*}
    &\norm{\phihat(p_{1:N}, \xi) - w}^2 \\
    &\lesssim 
    \norm{\phiast(p_{1:N}, \xi) - w}^2
    + \norm{\phihat(p_{1:N}, \xi) - \phiast(p_{1:N}, \xi)}^2 \\
    &\lesssim 
    \norm{\phiast(p_{1:N}, \xi) - w}^2
    + \frac1M \sum_{m=1}^M
        \norm{\phihat(\pidx{m}_{1:N}, \xiidx{m}) 
            - \phiast(\pidx{m}_{1:N}, \xiidx{m})}^2 \\
    &\quad
    + \abs{
        \frac1M \sum_{m=1}^M
        \norm{\phihat(\pidx{m}_{1:N}, \xiidx{m}) 
            - \phiast(\pidx{m}_{1:N}, \xiidx{m})}^2
        - \bE \qty[\norm{\phihat(p_{1:N}, \xi) - \phiast(p_{1:N}, \xi)}^2]
    } \\
    &\lesssim 
    \norm{\phiast(p_{1:N}, \xi) - w}^2
    + \frac1M \sum_{m=1}^M
        \norm{\phihat(\pidx{m}_{1:N}, \xiidx{m}) 
            - \phiast(\pidx{m}_{1:N}, \xiidx{m})}^2 \\
    &\quad
    + \sup_{\phi \in \Phi} \abs{
        \frac1M \sum_{m=1}^M
        \norm{\phi(\pidx{m}_{1:N}, \xiidx{m}) 
            - \phiast(\pidx{m}_{1:N}, \xiidx{m})}^2
        - \bE_{p_{1:N}, \xi} \qty[\norm{\phi(p_{1:N}, \xi) - \phiast(p_{1:N}, \xi)}^2]
    }.
\end{align*}
The expectation of the first term is bounded in \cref{thm:dp-ds-construct}, i.e., 
\begin{align*}
    \bE\qty[\norm{
        \phiast(p_{1:N}, \xi)
        - w
    }^2]
    \lesssim 
        k \qty(\frac{\log \delta^{-1}}{\epsilon^2 N^2} 
            \qty{ k  \vee 
            \log \qty(\frac{\epsilon N}{\log \delta^{-1}})}^2
        + \frac1N) + \int_{k}^d \frac{\dd{t}}{t^{2\gamma}} =: \Delta_N.
\end{align*}
Moreover, the expectation of the second term is bounded in \cref{lem:upper-bound-empirical-error}:
\begin{align*}
    \bE \qty[\frac{1}{M} \sum_{m=1}^M \norm{
        \phihat(\pidx{m}_{1:N}, \xiidx{m}) 
        - \phiast(\pidx{m}_{1:N}, \xiidx{m})
    }^2] 
    \leq \Otilde\qty(\Delta_N + \frac{d}{N} + \frac{d^2}{\sqrt{M}}).
\end{align*}
Finally, the third term is bounded in \cref{lem:diff-empirical-error-test-error}:
\begin{align*}
    &\bE \qty[\sup_{\phi \in \Phi} \abs{
        \frac1M \sum_{m=1}^M
        \norm{\phi(\pidx{m}_{1:N}, \xiidx{m}) 
            - \phiast(\pidx{m}_{1:N}, \xiidx{m})}^2
        - \bE_{p_{1:N}, \xi} \qty[\norm{\phi(p_{1:N}, \xi) - \phiast(p_{1:N}, \xi)}^2]
    }] \\
    &\hspace{140pt} \lesssim 
    \frac{LW}{\sqrt{M}}
            \sqrt{
            \log\qty{
                d L (B \vee 1) (W+1) (C \vee 1)
                \cdot \epsilon^{-1} \log(\delta^{-1})
            }}\\
    &\hspace{140pt} \lesssim 
    \frac{d \log (d \epsilon N) }{\sqrt{M}}.
\end{align*}
Therefore, we have
\begin{align*}
    &\bE\qty[\norm{\phihat(p_{1:N}, \xi) - w}^2] \\
    &\qquad \leq \Otilde\qty(
    \frac{k \log \delta^{-1}}{\epsilon^2 N^2}
                \qty{ k  \vee
                \log \qty(\frac{\epsilon N}{\log \delta^{-1}})}^2
        + \frac{d}{N} + \frac{d^2 + d \log (\epsilon N)}{\sqrt{M}}
        + \int_{k}^d \frac{\dd{t}}{t^{2\gamma}}
    ),
\end{align*}
which completes the proof.

\section{Lower Bound of DP-SGD}\label[appendix]{app:proof-of-lower-bound-dpsgd}

Here, we provide the proof of \cref{thm:lower-bound-dpsgd}.
\begin{proof}[Proof of \cref{thm:lower-bound-dpsgd}]
We first analyze the update of DP-SGD
when clipping does not occur for the first $t$ steps.
Then, we lower-bound the probability that clipping does not occur.
\paragraph{Explicit update of DP-SGD without clipping}
Recall that we analyze the full-batch DP-SGD update
\begin{align*}
    \omega_t = \omega_{t-1} - \frac{\eta}{N}\qty{
        \sum_{n=1}^N \clip_C(\nabla\scL_n(\omega_{t-1}))
        + \xi_t
    },
    \quad \xi_t \sim \scN(0, s^2 I_d).
\end{align*}
We write $\bar{\xi}_t := \xi_t/N$, so that $\bar{\xi}_t \sim \scN(0, (s^2/N^2) I_d)$.
The loss function can be written as follows:
\begin{align*}
    \scL(\theta) 
    &= \frac{1}{2N} \sum_{i=1}^N (\theta^\top x_i - w^\top x_i - z_i)^2 \\
    &= \frac12 (\theta - w)^\top \qty[\frac1N \sum_{i=1}^N x_i x_i^\top] (\theta - w)
        - \qty[\frac1N \sum_{i=1}^N z_i x_i]^\top (\theta - w)
            + \frac{1}{2N} \sum_{i=1}^N z_i^2 \\
    &= \frac12 (\theta - w)^\top \Sigmahat (\theta - w)
        + \muhat^\top (\theta - w) + \frac{1}{2N} \sum_{i=1}^N z_i^2
\end{align*}
where
\begin{align*}
    \Sigmahat := \frac1N \sum_{i=1}^N x_i x_i^\top, \quad
    \muhat := -\frac1N \sum_{i=1}^N z_i x_i.
\end{align*}
Therefore, the gradient of the loss function is
\begin{align*}
    \nabla\scL(\theta) = \Sigmahat(\theta - w) + \muhat.
\end{align*}
Hence, the $t$-th update of DP-SGD without clipping can be written as
\begin{align*}
    \omega_{t}
    &= \omega_{t-1} - \eta \qty(\nabla\scL(\omega_{t-1}) + \bar{\xi}_t) \\
    &= \omega_{t-1} - \eta \Sigmahat(\omega_{t-1} - w) - \eta (\muhat + \bar{\xi}_t).
\end{align*}
Let $\Delta_t := \omega_t - w$.
Then, we have
\begin{align*}
    \Delta_t
    &= \Delta_{t-1} - \eta \Sigmahat \Delta_{t-1} - \eta (\muhat + \bar{\xi}_t) \\
    &= (I - \eta \Sigmahat) \Delta_{t-1}  - \eta (\muhat + \bar{\xi}_t) .
\end{align*}
Repeating over $t$ steps, we have
\begin{align*}
    \Delta_t
    &= (I - \eta \Sigmahat) \Delta_{t-1} - \eta (\muhat + \bar{\xi}_t) \\
    &= (I - \eta \Sigmahat) \qty{
            (I - \eta \Sigmahat) \Delta_{t-2}
             - \eta (\muhat + \bar{\xi}_{t-1})
        } - \eta (\muhat + \bar{\xi}_t) \\
    &= (I - \eta \Sigmahat)^2 \Delta_{t-2}
        -\qty{
            \eta (I - \eta \Sigmahat) (\muhat + \bar{\xi}_{t-1})
            + \eta (\muhat + \bar{\xi}_t)
        } \\
    &= \cdots \\
    &= (I - \eta \Sigmahat)^t \Delta_{0}
        - \sum_{s=1}^t \eta (I - \eta \Sigmahat)^{t-s} (\muhat + \bar{\xi}_s) \\
    &= (I - \eta \Sigmahat)^t \Delta_{0}
        - (I - (I - \eta \Sigmahat)^t)\Sigmahat^{-1} \muhat
        - \xibar_t,
\end{align*}
where
\begin{align*}
    \xibar_t := \sum_{s=1}^t \eta (I - \eta \Sigmahat)^{t-s} \bar{\xi}_s.
\end{align*}

Conditionally on $x_{1:N}$, 
the variable $\xibar_t$ follows the normal distribution $\scN(0, (s^2/N^2) \Sigmabar_t)$ with
\begin{align*}
    \Sigmabar_t := \sum_{s=1}^t \eta^2 (I - \eta \Sigmahat)^{2(t-s)}
    = \eta \qty[
        I-(I-\eta \widehat{\Sigma})^{2t}
    ] \qty[
        \widehat{\Sigma}(2I-\eta \widehat{\Sigma})
    ]^{-1}.
\end{align*}

\paragraph{Probability of no clipping}
We first lower-bound the probability that clipping does not occur.
We can evaluate the loss associated with $n$-th data point as follows:
\begin{align*}
    \scL_n(\theta)
    &= \frac12 (\theta^\top x_n - w^\top x_n - z_n)^2 \\
    &= \frac12 (\theta - w)^\top \Sigma_n (\theta - w)
        - (z_n x_n)^\top (\theta - w) + \frac{z_n^2}{2},
\end{align*}
where $\Sigma_n := x_n x_n^\top$.
Then, we have
\begin{align*}
    \nabla\scL_n(\theta) = \Sigma_n (\theta - w) - z_n x_n.
\end{align*}
Therefore, the sufficient condition for no clipping is
$\norm{\Sigma_n \Delta_t - z_n x_n} \leq C$ for all $n=1, \ldots, N$ and $t=0, \ldots, T-1$.
We have
\begin{align*}
    \norm{\Sigma_n \Delta_t - z_n x_n}
    \leq \norm{\Sigma_n \Delta_t} + \norm{z_n x_n}.
\end{align*}
Moreover, we have
\begin{align*}
    \norm{\Sigma_n \Delta_t}
    &= \norm{x_n x_n^\top \Delta_t} \\
    &\leq \norm{ (x_n x_n^\top) (I - \eta \Sigmahat)^t \Delta_{0} }
        + \norm{ (x_n x_n^\top) (I - (I - \eta \Sigmahat)^t)\Sigmahat^{-1} \muhat}
        + \norm{(x_n x_n^\top) \xibar_t}.
\end{align*}
We set the clipping size as
\begin{align*}
    C \geq \max\{8 \alphabar, 5 \sqrt{68}\} d.
\end{align*}
It is sufficient to show that
\begin{align*}
\begin{cases}
    \text{(i)} &  \norm*{ (x_n x_n^\top) (I - \eta \Sigmahat)^t \Delta_0 } \leq C/4, \\
    \text{(ii)} & \norm*{ (x_n x_n^\top) (I - (I - \eta \Sigmahat)^t)\Sigmahat^{-1} \muhat} \leq C/4,\\
    \text{(iii)} &\norm*{ (x_n x_n^\top) \xibar_t } \leq C/4, \\
    \text{(iv)} &\norm*{ z_n x_n } \leq C/4, \\
\end{cases}
\end{align*}
for all $t=1, \ldots, T$ and $n = 1, \ldots, N$.

We first prove the inequality (i).
Since $x_n x_n^\top$ is a rank-$1$ matrix and
\begin{align*}
    (x_n x_n)^\top x_n = \|x_n\|^2 x_n = d x_n, 
\end{align*}
its only non-zero (and therefore the maximum) eigenvalue is $d$.
Therefore, we have
\begin{align*}
    \norm{ (x_n x_n^\top) (I - \eta \Sigmahat)^t \Delta_{0} }
    \leq d \|(I - \eta \Sigmahat)^t \Delta_0\|
    \leq d \|\Delta_0\|.
\end{align*}
Moreover, we have
\begin{align*}
    \|\Delta_0\| 
    \leq \sqrt{\frac{\alphabar^2}{1^{2\gamma}} 
        + \cdots + \frac{\alphabar^2}{d^{2\gamma}}}
    \leq \alphabar \sqrt{1 + \int_{1}^d x^{-2\gamma} \dd{x}}
    \leq 2\alphabar.
\end{align*}
Hence, we have
\begin{align*}
    \norm{ (x_n x_n^\top) (I - \eta \Sigmahat)^t \Delta_{0} }
    \leq 2\alphabar d
    \leq \frac{C}{4}.
\end{align*}

Next, we prove (ii).
We first define the event used to control the inverse of $\Sigmahat$.
By \cref{lem:eigenvalue-concentration}, with probability $1 - \frac{1}{17}\cdot\frac{1}{2}$,
all of the eigenvalues are in $[1 - \zeta_N, 1 + \zeta_N]$, where
\begin{align*}
    \zeta_N := \cEigen
        \qty{
            \qty(\sqrt{\frac dN} + \sqrt{\frac{\log(68)}{N}})
            + \qty(\sqrt{\frac dN} + \sqrt{\frac{\log(68)}{N}})^2
        }.
\end{align*}
We denote this event by $\Eeigen$.
In particular, if
$N \geq \max\{144\cEigen^2, 12\cEigen\}(\sqrt{d} + \sqrt{\log(68)})^2$,
then $\zeta_N \leq 1/6$.
We upper-bound $\|\muhat\|$.
By definition, we have
\begin{align*}
    \|\muhat\|_2^2 
    = \frac{1}{N^2} \sum_{i,j=1}^N z_i z_j \cdot x_i^\top x_j
\end{align*}
Taking expectation, we have
\begin{align*}
    \bE[\|\muhat\|_2^2] 
    &= \frac{1}{N^2} \sum_{i=1}^N \bE[z_i^2 \cdot \|x_i\|^2] 
    = \frac{1}{N^2} \sum_{i=1}^N \bE[z_i^2] \cdot \bE[\|x_i\|^2] \notag \\
    &= \frac{1}{N^2} \sum_{i=1}^N \bE[x_i^\top x_i] 
    = \frac{1}{N^2} \sum_{i=1}^N \bE[\tr(x_i x_i^\top)] \notag \\
    &= \frac{\tr(I)}{N} =\frac{d}{N}.
\end{align*}
Therefore, Markov's inequality implies
$
    \|\muhat\|_2 \leq \sqrt{\frac{68d}{N}}
$
with probability $1 - \frac{1}{17}\cdot \frac{1}{4}$.
Therefore, we have 
$\norm{\muhat} \leq \sqrt{\frac{68d}{N}}$.
In particular, if $N \geq d$, we have
$\norm{\muhat} \leq \sqrt{68}$.
Therefore, similarly to (i), we have
\begin{align*}
    \norm{ (x_n x_n^\top) (I - (I - \eta \Sigmahat)^t)\Sigmahat^{-1} \muhat}
    &\leq d \norm{(I - (I - \eta \Sigmahat)^t)\Sigmahat^{-1} \muhat} \\
    &\leq \frac{6}{5} d \norm{\muhat}
    \leq \frac{6\sqrt{68}}{5} d
    \leq \frac{C}{4}.
\end{align*}
on the event $\Eeigen$.

Let us prove (iii) next.
On the same event $\Eeigen$, we have
\begin{align*}
    \norm{\Sigmabar_t}_{\rmop}
    \leq \frac{\eta ( 1 - (1 - \eta\cdot\frac56)^{2t})}
        {\frac56 \cdot \qty(2 - \eta \cdot \frac76)}
    \leq \frac{36\eta}{5(12 - 7\eta)} < 3, 
\end{align*}
which implies $x_n^\top \Sigmabar_t x_n \leq 3 \norm{x_n}^2$.
Since it holds $
    x_n^\top \xibar_t \mid x_n
    \sim \scN(0, (s^2/N^2) x_n^\top \Sigmabar_t x_n)
$, we have
\begin{align*}
    \bP \left(
        \abs{x_n^\top \xibar_t} \leq \frac{3s}{N} \sqrt{2 \log(136 NT)} \norm{x_n}
        \relmiddle| x_n
    \right)
    &\geq \bP \left(
        \abs{x_n^\top \xibar_t} \leq \frac{s}{N} \sqrt{6 \log(136 NT)} \norm{x_n}
        \relmiddle| x_n
    \right) \\
    &\geq \bP \left(
        \abs{x_n^\top \xibar_t} \leq \sqrt{\frac{2s^2}{N^2} x_n^\top \Sigmabar_t x_n \log(136 NT)}
        \relmiddle| x_n
    \right) \\
    &\geq 1 - \frac{1}{136 NT}.
\end{align*}
Union bound over $t=1, \ldots, T$ and $n=1, \ldots, N$ implies that
\begin{align*}
    \abs{x_n^\top \xibar_t} 
    \leq \frac{3s}{N} \sqrt{2 \log(136 NT)} \norm{x_n},
\end{align*}
for all $t=1, \ldots, T$ and $n=1, \ldots, N$, 
with probability $1 - \frac{1}{17}\cdot \frac{1}{8}$.
Under this event, it holds
\begin{align*}
    \norm{x_n x_n^\top \xibar_t}
    &= \abs{x_n^\top \xibar_t} \norm{x_n}
    \leq \frac{3s}{N} \sqrt{2 \log(136 NT)} \|x_n\|^2 \\
    &\leq 3 d \sconst C
        \frac{\sqrt{2T \log(136 NT) \log (\delta^{-1})}}{N\epsilon}, 
\end{align*}
for all $n=1, \ldots, N$ and $t=1, \ldots, T$.
If $N \geq \sqrt{2} N_0 \log N_0$ with
\begin{align*}
    N_0 = 12 \sconst d \sqrt{2T \log(136T) \epsilon^{-2} \log (\delta^{-1})}, 
\end{align*}
we have
\begin{align*}
    \norm{x_n x_n^\top \xibar_t} 
    &\leq \frac{C}{4} \frac{
        \sqrt{
            (12d \sconst)^2 \cdot 2T \log(136 T) \epsilon^{-2} \log (\delta^{-1})
            + (12 d \sconst)^2 \cdot 2T \log (N) \epsilon^{-2} \log (\delta^{-1})
        }
    }{N} \\
    &\leq \frac{C}{4} \frac{\sqrt{N_0^2 + (12 d \sconst)^2 \cdot 2T \log (N_0) \epsilon^{-2}\log(\delta^{-1})}}{\sqrt{2} N_0 \log N_0} \\
    &\leq \frac{C}{4} \frac{\sqrt{N_0^2 + N_0^2 \log (N_0)}}{\sqrt{2} N_0 \log N_0}
    \leq \frac{C}{4},
\end{align*}
for all $t=1, \ldots, T$ and $n=1, \ldots, N$.

Finally, we prove (iv).  
First, we have
$
    \norm{z_n x_n} = \abs{z_n} \norm{x_n} = \sqrt{d} \abs{z_n}
$.
Since $z_n \sim \scN(0, 1)$, we have
\begin{align*}
    \norm{z_n x_n} \leq \sqrt{2d \log\frac{2}{(272 N)^{-1}}}, 
\end{align*}
with probability $1 - \frac{1}{17} \cdot \frac{1}{16 N}$.
Taking a union bound for $n= 1, \ldots, N$, we have
\begin{align*}
    \norm{z_n x_n} \leq \sqrt{2d \log(544)} \leq 3d, 
\end{align*}
for all $n = 1, \ldots, N$, 
with probability $1 - \frac{1}{17} \cdot \frac{1}{16}$.

Summarizing the bounds above,
if $N \geq \max\{\sqrt{2} N_0 \log N_0, d\}$ and $C \geq \max\{8 \alphabar, 5 \sqrt{68}\} d$,
we have the following with probability 
$1 - \frac{1}{17} \qty( \frac14 + \frac18 + \frac{1}{16} )$: 
\begin{align*}
    \sup_{t=1, \ldots, T;~n=1, \ldots, N} \norm{\nabla \scL_n(\omega_{t-1})} \leq C,
\end{align*}
which implies that no clipping occurs.

\paragraph{Lower bound of $\|\Delta_T\|$}
We first condition on $x_{1:N}$ and $\Delta_0$.
Recall the following explicit form of $\Delta_T$:
\begin{align*}
    \Delta_T 
    &= (I - \eta \Sigmahat)^T \Delta_{0}
        - (I - (I - \eta \Sigmahat)^T)\Sigmahat^{-1} \muhat
        - \xibar_T, \\
    &= (I - \eta \Sigmahat)^T \Delta_{0}
        - B_T \muhat
        - \xibar_T.
\end{align*}
where $\xibar_T \sim \scN(0, (s^2/N^2) \Sigmabar_T)$ with
$\Sigmabar_T = \eta \qty[
        I-(I-\eta \widehat{\Sigma})^{2T}
    ] \qty[
        \widehat{\Sigma}(2I-\eta \widehat{\Sigma})
    ]^{-1}$, 
and $B_T := (I - (I - \eta \Sigmahat)^T)\Sigmahat^{-1}$.
Since it holds
\begin{align*}
    \bE \left[\muhat \muhat^\top \relmiddle| x_{1:N} \right]
    &= \frac{1}{N^2} \bE \left[
        \sum_{n=1}^N z_n^2 x_n x_n^\top
        \relmiddle| x_{1:N}
    \right]
    = \frac{1}{N^2} 
        \sum_{n=1}^N x_n x_n^\top
    = \frac{1}{N} \Sigmahat,
\end{align*}
we have
\begin{align*}
    \qty [- B_T \muhat - \xibar_T] \sim \scN(0, S)
    \quad\text{where}\quad
    S = \frac{1}{N} B_T\Sigmahat B_T^\top + \frac{s^2}{N^2} \Sigmabar_T.
\end{align*}
Therefore, using \cref{lem:lower-bound-uncentered-gaussian}, we have
\begin{align*}
    \norm{\Delta_T}^2 
    \geq \frac12 Q_T,
\end{align*}
with probability $\frac{1}{17}$.
Here, we define
\begin{align*}
    Q_T
    := \norm{(I - \eta \Sigmahat)^T \Delta_{0}}^2
        + \frac{1}{N}\tr(B_T\Sigmahat B_T^\top)
        + \frac{s^2}{N^2} \tr(\Sigmabar_T).
\end{align*}

Next, we lower-bound the right-hand side with constant probability
on $x_{1:N}$ and $\Delta_0$.
We first recall that, under the event $\Eeigen$, we have
$\lambdamax \in [1, 1+\zeta_N]$ and $\lambdamin \in [1-\zeta_N, 1]$.
We also recall that, if
$N \geq \max\{144 \cEigen^2, 12 \cEigen\}(\sqrt{d} + \sqrt{\log(68)})^2$, 
then $\zeta_N \leq 1/6$.

We consider the following two cases separately.
\begin{itemize}\setlength{\leftskip}{-8mm}
    \item \textbf{Case (i): $\eta \in [6/7, 1)$}.
    In this case, for $T \geq 1$, we have
    \begin{align*}
        \frac{1}{N}\tr(B_T\Sigmahat B_T^\top)
        &= \frac{1}{N}\sum_{i=1}^d \frac{\qty{1 - (1 - \eta\lambda_i)^T}^2}{\lambda_i} \\
        &\geq \frac{6}{7N}\sum_{i=1}^d \qty{1 - (1 - \eta\lambda_i)^T}^2
        \geq \frac{150}{343}\frac{d}{N},
    \end{align*}
    where we used $\lambda_i \leq 7/6$ and $\abs{1 - \eta\lambda_i} \leq 2/7$ on $\Eeigen$.
    Moreover,
    \begin{align*}
        \tr(\Sigmabar_T)
        &= \tr\qty(\sum_{s=1}^T \eta^2 (I - \eta \Sigmahat)^{2(T-s)}) \\
        &\geq \tr\qty(\eta^2 I) = \eta^2 d \geq \frac{36}{49} d, 
    \end{align*}
    which implies
    \begin{align*}
        Q_T
        \geq \frac{150}{343}\frac{d}{N}
            + \frac{36}{49} \frac{d s^2}{N^2}
        \geq \frac{150}{343}\frac{d}{N}
            + \frac{36}{49} \frac{d s^2}{T N^2}.
    \end{align*}
    
    \item \textbf{Case (ii): $\eta \in (0, 6/7)$}.
    Let $0\leq \lambdamin = \lambda_1 \leq \cdots \leq \lambda_d = \lambdamax$.
    Then, we have
    \begin{align*}
        \tr(\Sigmabar_T)
        = \tr\qty(\sum_{s=1}^T \eta^2 (I - \eta \Sigmahat)^{2(T-s)})
        = \eta^2 \sum_{k=0}^{T-1} \sum_{i=1}^d (1 - \eta \lambda_i)^{2k}.
    \end{align*}
    Since it holds 
    $\eta \lambdamax < \frac{6}{7}\cdot \frac{7}{6} = 1$ 
    under the event $\Eeigen$, we have
    \begin{align*}
        \tr(\Sigmabar_T)
        &\geq \eta^2 \sum_{k=0}^{T-1} \sum_{i=1}^d (1 - \eta \lambdamax)^{2k} \\
        &= d \eta \frac{1 - (1 - \eta \lambdamax)^{2T}}
            {\lambdamax(2 - \eta\lambdamax)}\\
        &= d \eta \lambdamax \frac{1 - (1 - \eta \lambdamax)^{2T}}
            {\lambdamax^2(2 - \eta\lambdamax)}.
    \end{align*}
    We upper-bound the factor $\lambdamax^2 (2 - \eta \lambdamax)$.
    The function $\lambda \mapsto \lambda^2 (2 - \eta \lambda)$ 
    monotonically increases in $\lambda \in \qty[0, \frac{4}{3\eta}]$.
    Since $\eta^{-1} > 7/6$, it holds $\frac{4}{3\eta} > \frac{14}{9} > \frac{7}{6}$. 
    Using $\lambdamax \leq \frac76$, we have
    \begin{align*}
        \lambdamax^2 (2 - \eta \lambdamax) 
        \leq \qty(\frac76)^2 \qty(2 - \eta \cdot \frac76)
        \leq \frac{49}{18}.
    \end{align*}
    This implies that
    \begin{align*}
        \tr(\Sigmabar_T)
        \geq \frac{18}{49} d \eta \lambdamax \qty[1 - (1 - \eta \lambdamax)^{2T}].
    \end{align*}
    Moreover, we have
    \begin{align*}
        \norm{(I - \eta \Sigmahat)^T \Delta_{0}}^2
        \geq (1 - \eta\lambdamax)^{2T} \norm{\Delta_{0}}^2.
    \end{align*}
    Also, we have
    \begin{align*}
        \frac{1}{N}\tr(B_T\Sigmahat B_T^\top)
        &= \frac{1}{N}\sum_{i=1}^d \frac{\qty{1 - (1 - \eta\lambda_i)^T}^2}{\lambda_i} \\
        &\geq \frac{150}{343}\frac{d}{N}
            \qty{1 - (1 - \eta\lambdamax)^T}^2.
    \end{align*}
    Here, we used $1 - (1 - \eta\lambda_i)^T \geq \frac57\qty{1 - (1 - \eta\lambdamax)^T}$,
    which follows from $\lambda_i \geq \frac57\lambdamax$ on $\Eeigen$.
    Combining the preceding two displays and using
    $a(1-y)^2 + by^2 \geq \frac12\min\{a,b\}$ for $a,b>0$ and $y\in[0,1]$, we obtain
    \begin{align*}
        \norm{(I - \eta \Sigmahat)^T \Delta_{0}}^2
        + \frac{1}{N}\tr(B_T\Sigmahat B_T^\top)
        \geq \frac12\min\qty{\frac{150}{343}\frac{d}{N}, \norm{\Delta_0}^2}.
    \end{align*}
    Therefore, we have
    \begin{align*}
        &\hspace{-8mm}\norm{(I - \eta \Sigmahat)^T \Delta_{0}}^2
            + \frac{s^2}{N^2} \tr(\Sigmabar_T) \\
        &\geq \frac{18}{49} \frac{d s^2}{T N^2} T\eta \lambdamax \qty[1 - (1 - \eta \lambdamax)^{2T}]
            + \norm{\Delta_{0}}^2 (1 - \eta\lambdamax)^{2T}.
    \end{align*}
    Since it holds $\eta \lambdamax \in [0, 1]$, we can apply
    \cref{lem:sum-of-noise-and-bias} with $x \leftarrow \eta \lambdamax$, 
    $A \leftarrow \frac{18}{49} \frac{d s^2}{T N^2}$, and $B \leftarrow \norm{\Delta_0}^2$.
    Therefore, we have
    \begin{align*}
        \hspace{-8mm}\norm{(I - \eta \Sigmahat)^T \Delta_{0}}^2
            + \frac{s^2}{N^2} \tr(\Sigmabar_T)
        &\geq \min\qty{
            \frac{1}{40}\cdot \frac{18}{49} \frac{d s^2}{T N^2},
            \frac{1}{80} \norm{\Delta_{0}}^2
        }.
    \end{align*}
    Combining the two lower bounds in Case (ii), and using $Q_T \geq A$ and $Q_T \geq B$ to obtain
    $Q_T \geq (A+B)/2$ when the two lower bounds above control $A$ and $B$, we have
    \begin{align*}
        Q_T
        \gtrsim \min\qty{\frac{d}{N}, \norm{\Delta_0}^2}
            + \min\qty{\frac{d s^2}{T N^2}, \norm{\Delta_0}^2}.
    \end{align*}
\end{itemize}
Combining (i) and (ii), we have
\begin{align*}
    Q_T
    \gtrsim \min\qty{\frac{d}{N}, \norm{\Delta_0}^2}
        + \min\qty{\frac{d s^2}{T N^2}, \norm{\Delta_0}^2}.
\end{align*}
Additionally, with probability $1 - \frac{1}{17}\cdot\frac{1}{32}$, 
it holds $\norm{\Delta_0} \geq \frac{\alphaul}{544}$, and
\begin{align*}
    Q_T
    \gtrsim \frac{d}{N} + \min\qty{\frac{d s^2}{T N^2}, 1},
\end{align*}
where we used $N\geq d$.

Therefore, we have
\begin{align}\label{eq:lower-bound-DeltaT-explicit}
    \norm{\Delta_T}^2 
    \gtrsim \frac{d}{N} + \min\qty{\frac{d s^2}{T N^2}, 1}.
\end{align}

\paragraph{Finishing the proof}
The conditions imposed on $N$ and $C$ above are summarized as
\begin{align*}
    N \geq \max\qty{d, \sqrt{2}N_0 \log N_0, 
        \max\{144 \cEigen^2, 12 \cEigen\}(\sqrt{d} + \sqrt{\log(68)})^2}, 
\end{align*}
with $N_0 = 12 \sconst d \sqrt{2T \log(136T) \epsilon^{-2} \log (\delta^{-1})}$
and $C \geq \max\{8 \alphabar, 5 \sqrt{68}\} d$.
Under these conditions, 
we have \eqref{eq:lower-bound-DeltaT-explicit}
with probability
\begin{align*}
    \frac{1}{17} \qty(1 - \frac{1}{2} - \frac{1}{4} - \frac{1}{8} - \frac{1}{16} - \frac{1}{32}) \geq \frac{1}{17} \cdot \frac{1}{32}.
\end{align*}
Therefore, we have
\begin{align*}
    \bE[\|\omega_T - w\|^2]
    &\gtrsim \frac{d}{N} + \min\qty{\frac{d s^2}{T N^2}, 1} \\
    &\gtrsim 
    \frac{d}{N} + \min\qty{d \frac{C^2 \log(\delta^{-1})}{\epsilon^2 N^2}, 1} \\
    &\gtrsim 
    \frac{d}{N} + \min\qty{\frac{d^3 \log(\delta^{-1})}{\epsilon^2 N^2}, 1}, 
\end{align*}
which completes the proof.
\end{proof}

\section{Auxiliary Lemmas}

The following lemma is about the concentration of eigenvalues of the empirical covariance matrix.
\begin{lemma}\label{lem:eigenvalue-concentration}
    Let $N\in\bZ_{>0}$, 
    and $x_1, \ldots, x_N \sim \unif(\sqrt{d}\bS^{d-1})$ be i.i.d. random variables.
    We define $\Sigmahat := \frac1N \sum_{n=1}^N x_n x_n^\top$, 
    and let $\lambda_1, \ldots, \lambda_d$ be the eigenvalues of $\Sigmahat$.
    Suppose that $N > d$.
    Then, we have the following two facts:
    \begin{itemize}
        \item For any $\delta \in (0, 1)$, it holds
            \begin{align*}
                \sup_{i=1, \ldots, d} \abs{\lambda_i - 1}
                \leq \cEigen
                \qty{
                    \qty(\sqrt{\frac dN} + \sqrt{\frac{\log(2/\delta)}{N}})
                    + \qty(\sqrt{\frac dN} + \sqrt{\frac{\log(2/\delta)}{N}})^2
                }.
            \end{align*}
            with probability $1 - \delta$, 
            where $\cEigen$ is an absolute constant.
        \item There exists a constant $\cEigenMin$ such that
            \begin{align*}
                \bP(\lambdamin(\Sigmahat) \leq t) 
                \leq 2\qty(K_{N,d} \sqrt{t})^{N-d+1}, 
            \end{align*}
            for all $t \in (0, K_{N,d}^{-2}]$ where 
            $K_{N,d} := \cEigenMin \frac{\sqrt{N}}{\sqrt{d}}
                \frac{\sqrt{d} + \sqrt{2\log(2N)}}{\sqrt{N} - \sqrt{d}}$.
    \end{itemize}
\end{lemma}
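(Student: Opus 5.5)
The plan treats the two claims separately: the first is a standard covariance-estimation bound, and the second reduces a small-ball estimate for uniform-on-sphere rows to the Gaussian case.

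\textbf{First claim.} I would use the fact that $x \sim \unif(\sqrt d\,\bS^{d-1})$ is isotropic, since $\bE[xx^\top]=I_d$, and sub-Gaussian with $\|x\|_{\psi_2}\le K_0$ for an absolute constant $K_0$. This is the standard concentration-on-the-sphere fact. I would then invoke the covariance estimation theorem for independent isotropic sub-Gaussian rows (e.g.\ Vershynin, Thm.~4.6.1 / Thm.~5.39). It states that for every $u\ge0$, with probability at least $1-2\rme^{-u^2}$,
\begin{align*}
    \norm{\Sigmahat - I}_{\rmop} \le K_0^2 \max\{\rho,\rho^2\}, \qquad \rho := c\qty(\sqrt{\tfrac dN}+\tfrac{u}{\sqrt N}).
\end{align*}
Taking $u=\sqrt{\log(2/\delta)}$ and bounding $\max\{\rho,\rho^2\}\le \rho+\rho^2$ gives the stated form. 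The eigenvalue bound then follows from $\sup_i|\lambda_i-1|=\norm{\Sigmahat-I}_{\rmop}$, since $\Sigmahat$ is symmetric.

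\textbf{Second claim.} Let $X\in\bR^{N\times d}$ be the matrix with rows $x_n^\top$, so that $\lambdamin(\Sigmahat)=s_{\min}(X)^2/N$. I would realize $x_n=\sqrt d\, g_n/\norm{g_n}$ with $g_n\sim\scN(0,I_d)$ i.i.d. Let $G$ have rows $g_n^\top$, let $\widehat G$ have rows $g_n^\top/\norm{g_n}$, and let $\Lambda=\diag(\norm{g_n})$. Then $X=\sqrt d\,\widehat G$ and $G=\Lambda\widehat G$. The key point is that the radii $(\norm{g_n})_n$ are independent of the directions $\widehat G$. Moreover, $s_{\min}(G)\le \max_n\norm{g_n}\cdot s_{\min}(\widehat G)$.

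Set $R:=\sqrt d+\sqrt{2\log(2N)}$. By Gaussian norm concentration and a union bound, $\bP(\max_n\norm{g_n}\le R)\ge 1/2$. Independence then yields, for any $a>0$,
\begin{align*}
    \bP(s_{\min}(G)\le aR) \ge \bP\qty(s_{\min}(\widehat G)\le a)\,\bP\qty(\max_n\norm{g_n}\le R) \ge \tfrac12\,\bP\qty(s_{\min}(\widehat G)\le a).
\end{align*}
This step is where the factor $2$ in the statement comes from.

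The event $\lambdamin(\Sigmahat)\le t$ is the same as $s_{\min}(\widehat G)\le a:=\sqrt{tN/d}$. I would now apply the sharp small-ball estimate for the smallest singular value of an $N\times d$ Gaussian matrix, due to Edelman / Szarek / Rudelson--Vershynin. It says there is an absolute constant $c_0$ such that, for $\varepsilon\in(0,1]$,
\begin{align*}
    \bP\qty(s_{\min}(G)\le \varepsilon(\sqrt N-\sqrt{d-1}))\le (c_0\varepsilon)^{N-d+1}.
\end{align*}
For Gaussians this holds without the additive $\rme^{-cN}$ term. Matching $aR=\varepsilon(\sqrt N-\sqrt d)$, and using $\sqrt N-\sqrt d\le\sqrt N-\sqrt{d-1}$, gives
\begin{align*}
    c_0\varepsilon = c_0\sqrt t\,\frac{\sqrt N}{\sqrt d}\,\frac{\sqrt d+\sqrt{2\log(2N)}}{\sqrt N-\sqrt d} = K_{N,d}\sqrt t
\end{align*}
with $\cEigenMin=c_0$. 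The constraint $\varepsilon\le 1$ is implied by $t\le K_{N,d}^{-2}$, after enlarging $\cEigenMin$ so that $c_0\ge1$. Combining the last three displays yields $\bP(\lambdamin(\Sigmahat)\le t)\le 2(K_{N,d}\sqrt t)^{N-d+1}$.

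\textbf{Main obstacle.} The delicate point is the second claim. The entries of $x_n$ are dependent, so the Rudelson--Vershynin bounds for matrices with i.i.d.\ entries do not apply to $X$ directly. A naive union bound over the radii event would also spoil the $(\cdot)^{N-d+1}$ form of the tail. The independence of radius and direction for Gaussian vectors is what avoids this. The remaining care is in citing the version of the Gaussian small-ball bound that holds without the $\rme^{-cN}$ term, and in the boundary case $N=d+1$, which the assumption $N>d$ permits.
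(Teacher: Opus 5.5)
Your proposal is correct and follows essentially the same route as the paper. The first part applies Vershynin's Theorem 4.6.1 to isotropic sub-Gaussian rows with $u=\sqrt{\log(2/\delta)}$. The second part uses the independence of radii and directions of Gaussian vectors, the event $\max_n\|g_n\|\le\sqrt{d}+\sqrt{2\log(2N)}$ of probability at least $1/2$ (the source of the factor $2$), and the Gaussian small-ball bound for $s_{\min}$. The differences are cosmetic: you normalize Gaussians rather than attaching independent $\chi_d$ radii to sphere-uniform vectors, and you make explicit the comparison $\sqrt{N}-\sqrt{d-1}\ge\sqrt{N}-\sqrt{d}$, which the paper leaves implicit.
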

\begin{proof}
    We first prove the first argument.
    Let $A \in \bR^{N\times d}$ be the matrix whose $n$-th row is $x_n^\top$.
    Then $\Sigmahat = A^\top A/N$.
    The random vector $x_n \sim \unif(\sqrt d\bS^{d-1})$ is mean zero and isotropic:
    by rotational invariance, $\bE[x_nx_n^\top]=aI_d$ for some $a>0$, and taking traces gives
    $ad=\bE\|x_n\|^2=d$, hence $a=1$.
    Moreover, $x_n$ is sub-Gaussian with an absolute sub-Gaussian norm.
    Indeed, for every $u\in\bS^{d-1}$, $\langle x_n,u\rangle$ has the same distribution as
    $\sqrt d\,\theta_1$, where $\theta\sim\unif(\bS^{d-1})$; the standard concentration inequality
    on the sphere gives
    \begin{align*}
        \bP\qty(\abs{\langle x_n,u\rangle}>t)
        \leq 2\exp(-c t^2),\quad t\geq 0,
    \end{align*}
    for an absolute constant $c>0$.
    Thus $\sup_{u\in\bS^{d-1}}\|\langle x_n,u\rangle\|_{\psi_2}$ is bounded by an absolute constant.

    Theorem 4.6.1 of \citet{vershynin2020high}, applied to the matrix $A$ with independent,
    mean-zero, isotropic, sub-Gaussian rows, implies that for every $t\geq 0$, with probability
    at least $1-2\exp(-t^2)$,
    \begin{align*}
        \norm{\Sigmahat-I_d}_{\rmop}
        = \norm{\frac{A^\top A}{N}-I_d}_{\rmop}
        \leq C
        \qty{
            \frac{\sqrt d+t}{\sqrt N}
            + \qty(\frac{\sqrt d+t}{\sqrt N})^2
        },
    \end{align*}
    where $C>0$ is an absolute constant.
    Taking $t=\sqrt{\log(2/\delta)}$ yields the bound with probability at least $1-\delta$.
    Finally, since $\Sigmahat-I_d$ is symmetric,
    \begin{align*}
        \sup_{i=1,\ldots,d}\abs{\lambda_i-1}
        = \norm{\Sigmahat-I_d}_{\rmop},
    \end{align*}
    which proves the first argument after absorbing constants into $\cEigen$.
    
    We prove the second argument.
    Let $u_n := x_n /\sqrt{d}$.
    Then, we have $u_n \sim \unif(\bS^{d-1})$.
    Let $r_1, \ldots, r_N$ be independent random variables, independent of $u_1, \ldots, u_N$,
    whose common distribution is the norm of a standard Gaussian vector in $\bR^d$.
    By the polar decomposition of a Gaussian vector, $g_n:=r_nu_n$ are i.i.d. $\scN(0,I_d)$.
    Let $U := [u_1, \ldots, u_N]$
    and $G := [g_1, \ldots, g_N] = UD$
    with $D = \diag(r_1, \ldots, r_N)$.
    Moreover, we have $X = \sqrt{d} U$ and $\Sigmahat = \frac{d}{N} U U^\top$, which yields
    \begin{align*}
        \lambdamin(\Sigmahat) 
        = \frac{d}{N} s_{\min}(U)^2, 
    \end{align*}
    where $s_{\min}$ is the smallest singular value.

    Let $M := \max_{n=1, \ldots, N} r_n$.
    Then, we have $\bP(r_n \geq \sqrt{d} + t) \leq \rme^{-t^2/2}$.
    Therefore, we have
    \begin{align*}
        \bP(M \leq \sqrt{d} + \sqrt{2\log(2N)}) \geq \frac12.
    \end{align*}
    On the event $\scE := \{M \leq \sqrt{d} + \sqrt{2\log(2N)}\}$, 
    we have
    \begin{align*}
        s_{\min}(G)
        = s_{\min}(UD) 
        \leq \|D\|_{\rmop} s_{\min}(U)
        \leq (\sqrt{d} + \sqrt{2\log(2N)})s_{\min}(U).
    \end{align*}
    Therefore, we have
    \begin{align*}
        \{s_{\min}(U) \leq u\} \cap \scE
        \subset \{s_{\min}(G) \leq (\sqrt{d} + \sqrt{2\log(2N)})u\}.
    \end{align*}
    Since $\{s_{\min}(U)\leq u\}$ depends only on the directions $u_1,\ldots,u_N$,
    whereas $\scE$ depends only on the independent radii $r_1,\ldots,r_N$, these two events are independent.
    This implies
    \begin{align*}
        \bP(s_{\min}(U) \leq u)
        &\leq \frac{\bP(s_{\min}(G) \leq (\sqrt{d} + \sqrt{2\log(2N)})u)}{\bP(\scE)} \\
        &\leq 2 \bP(s_{\min}(G) \leq (\sqrt{d} + \sqrt{2\log(2N)})u).
    \end{align*}

    Since $G$ is a random matrix whose entries independently follow $N(0, 1)$, the standard lower-tail bound
    for the smallest singular value of rectangular Gaussian matrices gives, for $\epsilon > 0$,
    \begin{align*}
        \bP(s_{\min}(G) \leq \epsilon(\sqrt{N} - \sqrt{d}))
        \leq (\cEigenMin \epsilon)^{N-d+1}.
    \end{align*}
    Therefore, we have
    \begin{align*}
        \bP(s_{\min}(U) \leq u)
        \leq 2 \qty(
            \cEigenMin
            \frac{(\sqrt{d} + \sqrt{2\log(2N)})u}{\sqrt{N}-\sqrt{d}}
        )^{N-d+1}.
    \end{align*}
    Hence, we have
    \begin{align*}
        \bP(\lambdamin(\Sigmahat)\leq t)
        = \bP\qty(s_{\min}(U) \leq \sqrt{\frac{Nt}{d}})
        \leq 2 (K_{N,d} \sqrt{t})^{N-d+1},
    \end{align*}
    which completes the proof.
\end{proof}

The following lemma provides a constant lower bound on the probability that the norm of an uncentered Gaussian vector exceeds a specified threshold.
\begin{lemma}\label{lem:lower-bound-uncentered-gaussian}
    Let $p \in \bR^d$ and $x \sim N(0, S)$ for some positive definite matrix $S \in \bR^{d\times d}$.
    Then, we have
    \begin{align*}
        \bP\qty[\norm{p + x}^2 \geq \frac{\norm{p}^2 + \tr S}{2}] 
        \geq \frac{1}{17}.
    \end{align*}
\end{lemma}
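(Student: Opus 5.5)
The plan is a second-moment (Paley--Zygmund) argument applied to $Y := \norm{p+x}^2$. First I compute the mean. Since $x$ is centered, the cross term vanishes and
\begin{align*}
    \bE[Y] = \norm{p}^2 + 2 p^\top \bE[x] + \bE\norm{x}^2 = \norm{p}^2 + \tr S =: \mu .
\end{align*}
The target event is then $\{Y \geq \mu/2\}$. Paley--Zygmund with $\theta = 1/2$ gives
\begin{align*}
    \bP\qty(Y \geq \tfrac12 \bE Y) \geq \tfrac14 \frac{(\bE Y)^2}{\bE[Y^2]},
\end{align*}
so the whole proof reduces to the bound $\bE[Y^2] \leq c\,\mu^2$ with an absolute constant $c$. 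The statement's $1/17$ would follow from $c \leq 17/4$; even $c = 4$ would suffice.

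To bound $\bE[Y^2]$, I would diagonalize: $S = Q \Lambda Q^\top$ with eigenvalues $s_1, \ldots, s_d > 0$. Rotating by $Q$ leaves $\norm{\cdot}$ unchanged, so I may assume $S = \diag(s_i)$ and $x_i = \sqrt{s_i} g_i$ with $g_i$ i.i.d.\ standard normal. Then $Y = \sum_i (p_i + \sqrt{s_i} g_i)^2$ is a sum of independent noncentral terms, and $\bE[Y^2] = \mu^2 + \mathrm{Var}(Y)$. Using independence,
\begin{align*}
    \mathrm{Var}(Y) = \sum_i \mathrm{Var}\qty((p_i + \sqrt{s_i} g_i)^2) = \sum_i \qty(4 p_i^2 s_i + 2 s_i^2).
\end{align*}
The equivalent general Gaussian identity is $\mathrm{Var}(Y) = 4 p^\top S p + 2 \tr(S^2)$.

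Next I bound each piece by $\mu^2$. We have $2\tr(S^2) \leq 2(\tr S)^2$, and
\begin{align*}
    4 p^\top S p \leq 4 \norm{p}^2 \norm{S}_{\rmop} \leq 4 \norm{p}^2 \tr S \leq \qty(\norm{p}^2 + \tr S)^2 ,
\end{align*}
where the last step is AM--GM. Hence
\begin{align*}
    \mathrm{Var}(Y) \leq (\norm{p}^2 + \tr S)^2 + 2 (\tr S)^2 \leq 3\mu^2 ,
\end{align*}
so $\bE[Y^2] \leq 4\mu^2$. Paley--Zygmund then yields probability at least $\frac14 \cdot \frac14 = \frac1{16} \geq \frac1{17}$.

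The only delicate point is getting an \emph{absolute} constant, uniform in $d$, in $p$, and in the spectrum of $S$. The two potentially bad regimes are handled separately: the mean-dominated case via the $p^\top S p$ bound through AM--GM, and the rank-one-like $S$ case via $\tr(S^2) \leq (\tr S)^2$. Both bounds are clean, so I expect no real obstacle; the key is simply choosing the threshold $\theta = 1/2$, which matches the statement.
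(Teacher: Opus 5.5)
Your proof is correct and is essentially the paper's argument: the same mean $\mu = \|p\|^2 + \mathrm{tr}\,S$, the same variance identity $4p^\top S p + 2\,\mathrm{tr}(S^2)$, and the same two bounds $p^\top S p \le \|p\|^2\,\mathrm{tr}\,S$ and $\mathrm{tr}(S^2) \le (\mathrm{tr}\,S)^2$. The only difference is the last step: the paper bounds the variance by $4\mu^2$ and applies Cantelli's (one-sided Chebyshev) inequality, which gives exactly $1/17$, whereas your Paley--Zygmund step uses the slightly sharper AM--GM bound $\mathbb{E}[Y^2] \le 4\mu^2$ and gives $1/16$.
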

\begin{proof}
    First, we evaluate the expectation and the variance of $\norm{p + x}$.
    We have
    \begin{align*}
        \bE\qty[
            \norm{p + x}^2
        ] 
        &= \bE \qty[\norm{p}^2 + 2 p^\top x + \norm{x}^2] \\
        &= \norm{p}^2 + \bE[\norm{x}^2] \\
        &= \norm{p}^2 + \tr S =: M.
    \end{align*}
    Moreover, we have
    \begin{align*}
        \Var\qty(\norm{p + x}^2)
        &= \Var\qty(2 p^\top x + \norm{x}^2) \\
        &= 4\Var(p^\top x) + \Var(x^\top x) + 4\Cov(p^\top x, x^\top x) \\
        &= 4 p^\top S p + 2 \tr(S^2).
    \end{align*}
    
    We upper-bound the variance using $M^2$.
    As for the first term, we have
    \begin{align*}
        p^\top S p \leq \lambdamax(S) \norm{p}^2 \leq \tr(S) \norm{p}^2.
    \end{align*}
    Moreover, the second term can be upper-bounded as
    \begin{align*}
        \tr(S^2)
        = \sum_{i=1}^d \lambda_i^2
        \leq \qty(\sum_{i=1}^d \lambda_i)^2
        = \tr(S)^2.
    \end{align*}
    Therefore, we have
    \begin{align*}
        \Var\qty(\norm{p + x}^2)
        &= 4 p^\top S p + 2 \tr(S^2) \\
        &\leq 4\tr(S) \norm{p}^2 + 2 \tr(S)^2 \\
        &\leq \tr(S) \qty(4\norm{p}^2 + 2 \tr(S)) \\
        &\leq 4 M^2.
    \end{align*}

    Now, Cantelli's inequality implies
    \begin{align*}
        \bP \qty(\norm{p + x}^2 - M \leq - \frac{M}{2})
        &\leq \frac{\Var\qty(\norm{p + x}^2)}{\Var\qty(\norm{p + x}^2) + (M/2)^2} \\
        &= 1 - \frac{M^2}{4\Var\qty(\norm{p + x}^2) + M^2} \\
        &\leq 1- \frac{M^2}{16M^2 + M^2} = \frac{16}{17}.
    \end{align*}
    This yields
    \begin{align*}
        \bP \qty(\norm{p + x}^2 \geq \frac{M}{2}) \geq \frac{1}{17}, 
    \end{align*}
    which completes the proof.
\end{proof}

The following lemma is used in the proof of \cref{thm:lower-bound-dpsgd} 
to lower-bound the sum of two terms;
one from the DP noise and the other from the bias of the initial point.
\begin{lemma}\label{lem:sum-of-noise-and-bias}
    Let $A, B > 0$ and $T \in \bZ_{\geq 0}$.
    We define a function $f: [0, 1] \to \bR$ as 
    $f(x) = AT x (1 - (1-x)^{2T}) + B(1-x)^{2T}$.
    Then, we have
    \begin{align*}
        \min_{x\in[0, 1]} f(x) 
        \geq \min\qty{\frac{A}{40}, \frac{B}{80}}.
    \end{align*}
\end{lemma}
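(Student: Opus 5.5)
Since $f$ is a sum of the nonnegative terms $g(x):=ATx(1-(1-x)^{2T})$ and $h(x):=B(1-x)^{2T}$, we have $f\ge g$ and $f\ge h$ pointwise. The plan is to split $[0,1]$ at the threshold $x_0:=\min\{1,1/(2T)\}$ and show that on each piece one of the two terms already exceeds a constant multiple of $A$ or $B$. The degenerate case $T=0$ is trivial: then $g\equiv 0$ and $h\equiv B$, so $f\equiv B\ge B/80$. Assume $T\ge 1$ from now on.

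\textbf{Case $x\le 1/(2T)$.} We use the bias term. By Bernoulli's inequality,
\begin{align*}
(1-x)^{2T}\ge 1-2Tx\ge 0 .
\end{align*}
This gives $h(x)\ge B(1-2Tx)$. We would like a constant, but $1-2Tx$ vanishes at $x=1/(2T)$, so we refine the split. On $x\le 1/(4T)$ we get $h(x)\ge B/2\ge B/80$. On the intermediate range $1/(4T)\le x\le 1/(2T)$ we instead use the noise term. Since $1-(1-x)^{2T}$ is increasing in $x$, it is at least $1-(1-1/(4T))^{2T}$. Using $(1-y)^{m}\le e^{-my}$, this is at least $1-e^{-1/2}\ge 0.39$. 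Together with $Tx\ge 1/4$, this yields $g(x)\ge A\cdot \tfrac14\cdot 0.39\ge A/40$.

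\textbf{Case $x\ge 1/(2T)$.} Here $Tx\ge 1/2$, and by monotonicity $1-(1-x)^{2T}\ge 1-(1-1/(2T))^{2T}\ge 1-e^{-1}\ge 0.63$. Hence $g(x)\ge A\cdot\tfrac12\cdot 0.63\ge A/40$. This covers $x=1$ as well.

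Combining the cases gives $f(x)\ge\min\{A/40,B/80\}$ on all of $[0,1]$. The only delicate point is the boundary region near $x\approx 1/(2T)$, where the Bernoulli bound on the bias degenerates. This is handled by the three-way split and the exponential bound $(1-y)^m\le e^{-my}$. The constants $1/40$ and $1/80$ leave ample slack, so the numerical checks are routine.
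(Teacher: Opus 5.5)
Your proof is correct, and it takes a more direct route than the paper's.

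\textbf{The paper's route.} The paper normalizes by $A$ and sets $R=B/A$. It handles $x\ge 1/2$ separately, and on $[0,1/2]$ it uses the sandwich $e^{-4Tx}\le (1-x)^{2T}\le e^{-2Tx}$. It then proves the intermediate bound $f(x)\ge \frac{A}{40}M$ with $M:=\min\{T,\log(1+R)\}$. This needs a case split on $M\le 1$ versus $M>1$, and inside each case a further split on whether $Tx$ is small or large relative to $M$. Only at the end does it use $M\ge\min\{1,R/2\}$ to reach $\min\{A/40,B/80\}$.

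\textbf{Your route.} You split at the fixed points $1/(4T)$ and $1/(2T)$. On the first piece Bernoulli gives $B(1-x)^{2T}\ge B/2$. On the rest you combine the monotonicity of $1-(1-x)^{2T}$ with $(1-y)^m\le e^{-my}$. This directly yields $f\ge\min\bigl\{\frac{1-e^{-1/2}}{4}A,\frac{B}{2}\bigr\}$, which is stronger than the stated constants and never needs $R$ or $M$.

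\textbf{What each buys.} The paper's longer argument gives the intermediate estimate $f\ge\frac{A}{40}\min\{T,\log(1+B/A)\}$. This tracks the true order of the minimum, including its logarithmic growth when $B\gg A$ and $T$ is large, which your fixed-threshold split cannot see. However, both the lemma as stated and its use in the proof of \cref{thm:lower-bound-dpsgd} discard that extra information and only need $\min\{A/40,B/80\}$. Your shorter, more elementary argument is therefore sufficient, with better constants.
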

\begin{proof}
    If $T=0$, then $f(x) = B$, whose minimum is $B$.

    We consider the case of $T \geq 1$.
    Let $R := B / A$ and $y := (1 - x)^{2T}$.
    Then, we have
    \begin{align*}
        \frac{f(x)}{A} = Tx (1-y) + Ry.
    \end{align*}
    
    Suppose that $1/2 \leq x \leq 1$.
    Then, we have $y \leq 2^{-2T} \leq 1/4$, which yields
    \begin{align*}
        \frac{f(x)}{A} 
        \geq Tx(1-y) 
        \geq T \cdot \frac12 \cdot \frac34 
        = \frac{3T}{8}.
    \end{align*}
    
    Next, we consider $0 \leq x \leq 1/2$.
    Since it holds $-2x \leq \log(1-x) \leq -x$, we have
    \begin{align*}
        \rme^{-4Tx} \leq y \leq \rme^{-2Tx}.
    \end{align*}
    Therefore, we have
    \begin{align*}
        \frac{f(x)}{A} \geq  Tx (1-\rme^{-2Tx}) + R \rme^{-4Tx}.
    \end{align*}

    We define $M := \min\qty{T, \log(1 + R)}$, 
    and consider the following two cases separately.
    \begin{itemize}\setlength{\leftskip}{-8mm}
        \item \textbf{Case (i): $M \leq 1$.}
        We first consider the case of $2Tx \leq 1$.
        We have
        \begin{align*}
            \frac{f(x)}{A} 
            \geq  R \rme^{-4Tx} 
            \geq \rme^{-2} R
            \geq \rme^{-2} \log(1 + R)
            \geq \rme^{-2} M.
        \end{align*}
        On the other hand, if $2Tx > 1$ we have
        \begin{align*}
            \frac{f(x)}{A} 
            \geq Tx (1-\rme^{-2Tx}) 
            \geq \frac12 (1 - \rme^{-1})
            \geq \frac12 (1 - \rme^{-1}) M.
        \end{align*}
        
        \item \textbf{Case (ii): $M > 1$.}
        If $Tx \geq M/8~(> 1/8)$, we have
        \begin{align*}
            \frac{f(x)}{A} 
            \geq Tx (1-\rme^{-1/4})
            \geq \frac{1-\rme^{-1/4}}{8} M. 
        \end{align*}
        Next, we consider the case of $Tx < M/8$.
        Since it holds $M \leq \log(1 + R)$ and $M > 1$, we have
        \begin{align*}
            R \geq \rme^M - 1
            = \rme^M \qty(1 - \rme^{-M})
            \geq \rme^M \qty(1 - \rme^{-1}).
        \end{align*}
        Combining with $4Tx < M/2$, we have
        \begin{align*}
            \frac{f(x)}{A} 
            \geq  R \rme^{-4Tx} 
            \geq \rme^M\qty(1 - \rme^{-1}) \cdot \rme^{-M/2}
            = \qty(1 - \rme^{-1}) \rme^{M/2}.
        \end{align*}
        Since it holds $\rme^{M/2} \geq M/2$, we have
        \begin{align*}
            \frac{f(x)}{A} 
            \geq \frac{1 - \rme^{-1}}{2} M.
        \end{align*}
    \end{itemize}
    Overall, we have
    \begin{align*}
        \frac{f(x)}{A} 
        \geq \min\qty{\rme^{-2}, \frac{1-\rme^{-1/4}}{8}, \frac{1 - \rme^{-1}}{2}} M 
        > \frac{M}{40}.
    \end{align*}

    Finally, we evaluate $M$.
    If $B \geq (\rme - 1) A$, then $\log(1 + R) \geq 1$, 
    which implies $M \geq \min\{T, 1\} = 1$.
    Otherwise, we have $R < \rme - 1$, 
    which yields $\log(1 + R) \geq R/2 = \frac{B}{2A}$.
    Therefore, we have
    \begin{align*}
        f(x) \geq A \cdot \frac{\min\qty{1, \frac{B}{2A}}}{40}
        = \min\qty{\frac{A}{40}, \frac{B}{80}}, 
    \end{align*}
    for all $x\in[0, 1]$, which completes the proof.
\end{proof}

\section{More Details on Experimental Setups}\label[appendix]{sec:more-detail-experiment}

\paragraph{Computational resources}
We conducted all experiments on a cluster of eight NVIDIA A100 GPUs, each with 40 GB of VRAM.

\paragraph{License}
We use the pretrained model \texttt{imagenet64\_uncond\_100M\_1500K.pt}, which is provided under the MIT License.

\paragraph{Setups for baselines}
For all baseline methods, LoRA parameters are inserted only into attention layers, and the LoRA rank is set to $1$.
To select the learning rate, we first ran non-private SGD (3{,}000 steps) on $8$ datasets selected from the $100$ private $k$-NN sets and compared the average FID.
The results are shown in \cref{tab:lr-selection-baselines}.
Based on this preliminary experiment, we evaluated learning rates $10^2$ and $10^3$, which achieved the best performance among the candidates.
We also monitored the gradient norms during this preliminary experiment and set the clipping threshold to $10^{-2}$ accordingly.
For the number of training steps, we evaluated $100$ and $400$ steps.
In \cref{fig:fid-for-epsilons}, we report, for each $\epsilon$, the lowest FID among these combinations of learning rates and training steps.
The results for all hyperparameter combinations are available in \cref{tab:baseline-fid-scores}. 
We used a cosine learning-rate scheduler with a linear warmup over the first $10\%$ of training steps.
The batch size was set to $32$.

For PDA-DPMD, we set the coefficient of the public-data-based mirror map to $\alpha=0.5$.
See Section 5.1 of \citet{amid2022public} for the definition of $\alpha$.
For DP-SGD with public-data finetuning, we first finetuned the pretrained model on all $30{,}000$ public examples for $100$ epochs using Adam, a learning rate of $10^{-2}$, a batch size of $32$, and a cosine learning-rate scheduler with a linear warmup over the first $10\%$ of training steps.
In this public finetuning stage, LoRA parameters were inserted only into attention layers, and the LoRA rank was set to $8$.

\paragraph{Setups for DP-DeepSets}
For DP-DeepSets, we use the same LoRA configuration as the baselines: LoRA parameters are inserted only into attention layers, and the LoRA rank is set to $1$.
This results in $40{,}704$ LoRA parameters in total.
We split these parameters into $128$-dimensional tokens so that each token contains only parameters from the same position, resulting in $318$ tokens.
We set the dataset embedding dimension to $128$, matching the token dimension.
The hypernetwork uses a Transformer with hidden dimension $128$, depth $4$, MLP width $512$, and $4$ attention heads.
To prevent the generated LoRA parameters from having excessively large norms, the linear head is initialized from a normal distribution with standard deviation $10^{-2}$, rather than using the PyTorch default initialization.
We set the clipping scale for obtaining the noisy dataset embedding to $C=1$.
We train the hypernetwork for $200$ epochs using $30{,}000$ public datasets, each of which corresponds to a $k$-NN set of a CIFAR-10 data point with $k=128$.
For training, we use a learning rate of $10^{-3}$, a batch size of $4$, and a cosine learning-rate scheduler with a linear warmup over the first $10\%$ of training steps.

\section{Additional Experiment: Hypernetwork Training under Distribution Shift}

In the experiments above, the public and private datasets were constructed by splitting the same dataset into two subsets.
As an additional experiment, we consider the case where the two datasets are constructed from different open datasets.
That is, we investigate the out-of-distribution generalization ability of DP-DeepSets.

The private datasets are constructed from CIFAR-10 in the same way as before.
The public datasets are constructed from ImageNet64.
We downsample the $64 \times 64$ images to $32 \times 32$ images, matching the resolution of CIFAR-10.
As with CIFAR-10, for each data point, we construct a public dataset by taking its $k$-NN set in the Inception embedding space with $k=128$, yielding $1{,}281{,}167$ public datasets.
Using this collection of public datasets, we train DP-DeepSets for $5$ epochs.

We modify the DP-DeepSets configuration as follows: we set the number of Transformer layers to $6$,
increase the width of the feed-forward layers to $1024$, and increase the hidden dimension of the Transformer to $256$.
Because of the increased model size, we further reduce the initialization scale of the final layer (linear head) to $4 \times 10^{-3}$.
For the DP parameters, we keep $\delta=10^{-5}$ and conduct experiments with $\epsilon=1$ and $\epsilon=4$.
We set the clipping scale to $C=10$.

\Cref{tab:ood-fid-scores} compares the results of DP-DeepSets and DP-SGD.
For both values of $\epsilon$, DP-DeepSets achieves lower FID than DP-SGD.
This suggests that DP-DeepSets can serve as a better differentially private finetuning method than DP-SGD even when it is trained on public datasets whose distribution differs from that of the private dataset, especially when the private dataset is small.

\begin{table}[h]
    \centering
    \caption{FID scores for learning-rate selection in the preliminary non-private SGD experiment.}
    \label{tab:lr-selection-baselines}
    \scriptsize
    \setlength{\tabcolsep}{2pt}
    \resizebox{\linewidth}{!}{%
    \begin{tabular}{c|ccccccc}
        \hline
        Learning rate& $10^{4}$ & $10^{3}$ & $10^{2}$ & $10^{1}$ & $10^{0}$ & $10^{-1}$ & $10^{-2}$ \\
        \hline
        FID & $449.4 \pm 68.8$ & $114.0 \pm 15.1$ & $178.7 \pm 28.0$ & $240.2 \pm 16.3$ & $247.5 \pm 16.3$ & $246.0 \pm 13.6$ & $246.0 \pm 13.7$ \\
        \hline
    \end{tabular}
    }
\end{table}

\begin{table}[h]
    \centering
    \caption{FID scores for baseline methods across DP budgets, learning rates, and training steps.}
    \label{tab:baseline-fid-scores}
    \tiny
    \setlength{\tabcolsep}{1.5pt}
    \renewcommand{\arraystretch}{1.03}
    \setlength{\heavyrulewidth}{0.16em}
    \resizebox{\textwidth}{!}{%
    \begin{tabular}{lllcccccc}
        \specialrule{\heavyrulewidth}{0pt}{0ex}
        \multirow{2}{*}{\raisebox{-1.0ex}{Methods}} & \multirow{2}{*}{\raisebox{-2.0ex}[0pt][0pt]{\shortstack[l]{learning\\rate}}} & \multirow{2}{*}{\raisebox{-1.0ex}{Steps}} & \multicolumn{6}{c}{\raisebox{-0.45ex}{DP epsilon}} \\
        \cmidrule(lr){4-9}
        & & & $1024$ & $256$ & $64$ & $16$ & $4$ & $1$ \\
        \specialrule{\lightrulewidth}{0pt}{0.32ex}
        \specialrule{\lightrulewidth}{0pt}{0.25ex}
        PDA-DPMD & $10^{3}$ & $100$ & $243.6 \pm 30.3$ & $450.6 \pm 37.4$ & $450.7 \pm 42.7$ & $452.8 \pm 39.2$ & $453.9 \pm 39.9$ & $\textcolor{blue}{446.8 \pm 43.5}$ \\
        \cmidrule{3-9}
        & & $400$ & $452.2 \pm 36.5$ & $453.8 \pm 37.9$ & $452.4 \pm 39.3$ & $454.8 \pm 38.5$ & $453.4 \pm 39.7$ & $463.9 \pm 67.0$ \\
        \cmidrule{2-9}
        & $10^{2}$ & $100$ & $251.4 \pm 19.9$ & $251.7 \pm 20.0$ & $\textcolor{blue}{252.4 \pm 20.1}$ & $\textcolor{blue}{268.4 \pm 23.8}$ & $\textcolor{blue}{411.1 \pm 79.0}$ & $452.1 \pm 38.1$ \\
        \cmidrule{3-9}
        & & $400$ & $\textcolor{blue}{241.2 \pm 19.8}$ & $\textcolor{blue}{242.3 \pm 22.3}$ & $272.8 \pm 30.1$ & $439.9 \pm 63.8$ & $452.3 \pm 39.0$ & $452.8 \pm 38.4$ \\
        \cmidrule{1-9}
        PublicFT + DP-SGD & $10^{3}$ & $100$ & $\textcolor{blue}{221.4 \pm 34.0}$ & $297.6 \pm 45.3$ & $368.4 \pm 52.0$ & $434.7 \pm 44.8$ & $451.3 \pm 39.1$ & $448.7 \pm 38.3$ \\
        \cmidrule{3-9}
        & & $400$ & $290.8 \pm 47.3$ & $399.1 \pm 49.3$ & $451.5 \pm 38.2$ & $453.0 \pm 39.2$ & $448.8 \pm 38.7$ & $\textcolor{blue}{382.0 \pm 37.2}$ \\
        \cmidrule{2-9}
        & $10^{2}$ & $100$ & $230.2 \pm 29.0$ & $230.0 \pm 29.3$ & $\textcolor{blue}{230.1 \pm 29.7}$ & $\textcolor{blue}{236.5 \pm 31.9}$ & $\textcolor{blue}{344.0 \pm 46.9}$ & $430.0 \pm 35.6$ \\
        \cmidrule{3-9}
        & & $400$ & $226.4 \pm 29.1$ & $\textcolor{blue}{224.1 \pm 30.4}$ & $234.6 \pm 38.4$ & $259.4 \pm 38.5$ & $446.1 \pm 36.3$ & $446.3 \pm 38.1$ \\
        \specialrule{\heavyrulewidth}{0.7ex}{0ex}
        \multirow{2}{*}{\raisebox{-1.0ex}{Methods}} & \multirow{2}{*}{\raisebox{-2.0ex}[0pt][0pt]{\shortstack[l]{learning\\rate}}} & \multirow{2}{*}{\raisebox{-1.0ex}{Steps}} & \multicolumn{6}{c}{\raisebox{-0.45ex}{DP epsilon}} \\
        \cmidrule(lr){4-9}
        & & & $1024$ & $512$ & $256$ & $128$ & $64$ & $32$ \\
        \specialrule{\lightrulewidth}{0pt}{0.32ex}
        \specialrule{\lightrulewidth}{0pt}{0.25ex}
        DP-SGD & $10^{3}$ & $100$ & $233.4 \pm 31.9$ & $\textcolor{blue}{226.8 \pm 34.4}$ & $296.5 \pm 44.3$ & $342.5 \pm 48.0$ & $382.1 \pm 48.8$ & $418.7 \pm 45.5$ \\
        \cmidrule{3-9}
        & & $400$ & $287.4 \pm 44.5$ & $345.1 \pm 51.0$ & $396.7 \pm 47.2$ & $452.0 \pm 43.4$ & $454.8 \pm 39.3$ & $450.7 \pm 39.7$ \\
        \cmidrule{2-9}
        & $10^{2}$ & $100$ & $251.2 \pm 19.7$ & $251.3 \pm 20.0$ & $251.4 \pm 20.0$ & $251.6 \pm 19.7$ & $\textcolor{blue}{252.2 \pm 20.0}$ & $254.7 \pm 20.0$ \\
        \cmidrule{3-9}
        & & $400$ & $\textcolor{blue}{231.9 \pm 24.9}$ & $232.8 \pm 26.9$ & $\textcolor{blue}{235.7 \pm 30.2}$ & $\textcolor{blue}{245.2 \pm 34.9}$ & $261.0 \pm 42.0$ & $\textcolor{blue}{254.5 \pm 35.1}$ \\
        \cmidrule[\heavyrulewidth]{2-9}
        \noalign{\vskip -0.8ex}
        & \multirow{2}{*}{\raisebox{-2.0ex}[0pt][0pt]{\shortstack[l]{learning\\rate}}} & \multirow{2}{*}{\raisebox{-1.0ex}{Steps}} & \multicolumn{6}{c}{\raisebox{-0.45ex}{DP epsilon}} \\
        \noalign{\vskip -0.2ex}
        \cmidrule(lr){4-9}
        & & & $16$ & $8$ & $4$ & $2$ & $1$ & \\
        \noalign{\vskip -0.4ex}
        \cmidrule[\lightrulewidth]{2-9}
        \noalign{\vskip -1.3ex}
        \cmidrule[\lightrulewidth]{2-9}
        & $10^{3}$ & $100$ & $434.0 \pm 42.3$ & $450.7 \pm 38.7$ & $451.7 \pm 39.0$ & $449.5 \pm 39.1$ & $448.3 \pm 38.2$ & \\
        \cmidrule{3-9}
        & & $400$ & $452.1 \pm 38.7$ & $451.3 \pm 38.6$ & $449.0 \pm 38.6$ & $442.6 \pm 36.7$ & $\textcolor{blue}{382.2 \pm 37.1}$ & \\
        \cmidrule{2-9}
        & $10^{2}$ & $100$ & $\textcolor{blue}{266.3 \pm 20.3}$ & $\textcolor{blue}{302.2 \pm 30.5}$ & $\textcolor{blue}{348.7 \pm 47.6}$ & $\textcolor{blue}{377.7 \pm 39.4}$ & $435.9 \pm 35.6$ & \\
        \cmidrule{3-9}
        & & $400$ & $271.3 \pm 45.9$ & $386.7 \pm 52.0$ & $439.2 \pm 35.6$ & $445.4 \pm 37.1$ & $448.9 \pm 39.8$ & \\
        \bottomrule
    \end{tabular}
    }
\end{table}

\begin{table}[h]
    \centering
    \caption{FID scores under distribution shift between public and private datasets.}
    \label{tab:ood-fid-scores}
    \begin{tabular}{c|cc}
        \hline
        DP epsilon & 1 & 4 \\
        \hline
        Ours   & {\bf 228.8 $\pm$ 55.8} & {\bf 224.0 $\pm$ 55.3} \\
        DP-SGD & 382.2 $\pm$ 37.1 & 348.7 $\pm$ 47.6 \\
        \hline
    \end{tabular}
\end{table}



\end{document}